\newcounter{framework}
\newenvironment{framework}[1][htb]
  {% Update algorithm name
   \let\c@algocf\c@framework% Update algorithm counter
   \begin{algorithm}[#1]%
  }{\end{algorithm}}
\DeclareMathOperator*{\argmin}{arg\,min}
\DeclareMathOperator*{\argmax}{arg\,max}
\newcommand{\R}{\mathbb{R}}
\newcommand{\N}{\mathbb N}
\newcommand{\E}{\mathbb{E}}
\newcommand{\X}{\mathcal{X}}
\newcommand{\cX}{\mathcal{X}}
\newcommand{\calH}{\mathcal{H}}
\newcommand{\cH}{\mathcal{H}}
\newcommand{\F}{\mathcal{F}}
\newcommand{\C}{\mathcal{C}}
\newcommand{\pr}{\mathbb{P}}
\newcommand{\diag}{\mathrm{diag}}
\newcommand{\diam}{\mathrm{diam}_\X}
\newcommand{\Tr}{\mathrm{Tr}}
\newcommand{\Reg}{\mathrm{Reg}}
\newcommand{\cste}{\mathfrak{C}}
\newcommand{\cM}{\mathcal{M}}
\newcommand{\cW}{\mathcal{W}}
\newcommand{\sumT}{\sum_{t=1}^T}
\newcommand{\sumK}{\sum_{k=1}^K}
\newcommand{\llVert}{\left\lVert}
\newcommand{\rrVert}{\right\rVert}
\newcommand{\sumTw}{\sum_{t\in T^w}}
\newcommand{\sumnTw}{\sum_{t\notin T^w}}
\newcommand{\ttheta}{\tilde\theta}
\newcommand{\flipi}{\mathrm{Flip}_i}
\newcommand*\rel@kern[1]{\kern#1\dimexpr\macc@kerna}
\newcommand*\widebar[1]{%
  \begingroup
  \def\mathaccent##1##2{%
    \rel@kern{0.8}%
    \overline{\rel@kern{-0.8}\macc@nucleus\rel@kern{0.2}}%
    \rel@kern{-0.2}%
  }%
  \macc@depth\@ne
  \let\math@bgroup\@empty \let\math@egroup\macc@set@skewchar
  \mathsurround\z@ \frozen@everymath{\mathgroup\macc@group\relax}%
  \macc@set@skewchar\relax
  \let\mathaccentV\macc@nested@a
  \macc@nested@a\relax111{#1}%
  \endgroup
}
\renewcommand{\bar}{\widebar}
\renewcommand{\tilde}{\widetilde}
\renewcommand{\hat}{\widehat}
\renewcommand{\le}{\leq}
\renewcommand{\epsilon}{\varepsilon}
\title{Enjoying Non-linearity in Multinomial Logistic Bandits: A Minimax-Optimal Algorithm}
\author{%
  Pierre Boudart \\
  INRIA, École Normale Supérieure\\
  CNRS, PSL Research University\\
  Paris, France\\
  \texttt{pierre.boudart@inria.fr} \\
  % examples of more authors
  \And
  Pierre Gaillard \\
  Univ. Grenoble Alpes, Inria, \\
  CNRS, Grenoble INP, LJK \\
  Grenoble, France\\
  \texttt{pierre.gaillard@inria.fr} \\
  \AND
  Alessandro Rudi \\
  SDA Bocconi School of Management\\
  Milano, Italy\\
  \texttt{alessandro.rudi@sdabocconi.it}
  % \And
  % Coauthor \\
  % Affiliation \\
  % Address \\
  % \texttt{email} \\
  % \And
  % Coauthor \\
  % Affiliation \\
  % Address \\
  % \texttt{email} \\
}
\begin{document}

\maketitle

\begin{abstract}

We consider the multinomial logistic bandit problem in which a learner interacts with an environment by selecting actions to maximize expected rewards based on probabilistic feedback from multiple possible outcomes. In the binary setting, recent work has focused on understanding the impact of the non-linearity of the logistic model (Faury et al., 2020; Abeille et al., 2021). They introduced a problem-dependent constant $\kappa_* \geq 1$ that may be exponentially large in some problem parameters and which is captured by the derivative of the sigmoid function. It encapsulates the non-linearity and improves existing regret guarantees over $T$ rounds from $\smash{O(d\sqrt{T})}$ to $\smash{O(d\sqrt{T/\kappa_*})}$, where $d$ is the dimension of the parameter space. 
We extend their analysis to the multinomial logistic bandit framework with a finite action space, making it suitable for complex applications with more than two choices, such as reinforcement learning or recommender systems. To achieve this, we extend the definition of $ \kappa_* $ to the multinomial setting and propose an efficient algorithm that leverages the problem's non-linearity. Our method yields a problem-dependent regret bound of order
$
\smash{\widetilde{\mathcal{O}}(  R d \sqrt{ {KT}/{\kappa_*}} ) }
$,
where $R$ denotes the norm of the vector of rewards and $K$ is the number of outcomes. This improves upon the best existing guarantees of order
$
\smash{\widetilde{\mathcal{O}}( RdK \sqrt{T} )}
$.
Moreover, we provide a matching $\smash{ \Omega(dR\sqrt{KT/\kappa_*})}$ lower-bound, showing that our algorithm is minimax-optimal and that our definition of $\kappa_*$ is optimal.

% \bigskip
% We consider the multinomial logistic bandit problem, a theoretical model for recommendation systems. We provide two problem-dependent constants \(\kappa\) and \(\kappa_*\) consistent with the binary setting that embed the non-linearity of the problem. We introduce an efficient algorithm that takes advantage of the non-linearity of the problem and whose regret is upper-bounded by \( \tilde O(Kd \sqrt{T/\kappa_*}) \), with \(\kappa_*\ge1 \). We prove that our constant \(\kappa_*\) is optimal by deriving a \( \Omega(d\sqrt{T/\kappa_*}) \) lower-bound.
\end{abstract}

% \begin{keywords}%
%   Bandits, Recommender system, Minimax optimality, Lower-bound, Logistic loss, Self-concordance
% \end{keywords}

\section{Introduction}\label{section:introduction}

We consider the multinomial logistic (MNL) bandit problem, that unfolds as follows.
At each round $t \geq 1$, a learner chooses an action $x_t \in \X$ from an action set $\X \subset \R^d$. Then, the environment samples an outcome $y_t \in \llbracket K \rrbracket$ from the distribution $\mu(\theta_* x_t) \in \Delta_K$, where $\smash{\theta_* \in \R^{K \times d} }$ is an unknown parameter to be estimated and $\smash{ \mu: \R^K \to \Delta_K }$ the softmax function. At the end of the round, the learner receives the reward $r_t := \rho_{y_t}$, where $\rho \in \R_+^K$ is a known vector that associates a reward to each output. The goal of the learner is to minimize their expected regret defined as follows
\[
    \textstyle \Reg_T := \sum_{t=1}^T \rho^\top \big( \mu(\theta_* x_*) - \mu(\theta_* x_t) \big), \qquad \text{where} \quad x_* \in \argmax_{x \in \X} \rho^\top \mu(\theta_* x) \,.
\]
% \begin{equation*}
%     \Reg_T := \sum_{t=1}^T \rho^\top \big( \mu(\theta_* x_*) - \mu(\theta_* x_t) \big)
% \end{equation*}
% where the optimal action $x_*$ is defined as
% \begin{equation*}
% x_* \in 
% \begin{cases}
%     \left( \argmax_{x\in\cX} \rho^\top \mu(\theta_* x) \right) \bigcap \left( \argmax_{x\in\cX} (\rho + \nu \rho^{\odot 2})^\top \mu(\theta_* x) \right) \text{ if } \cX \text{ is discrete} \\
%     \left( \argmax_{x\in\cX} \rho^\top \mu(\theta_* x) \right) \bigcap \left( \argmax_{x\in\cX} \rho^\top \mu(\theta_* x) + \nu (\rho^{\odot 2})^\top \mu(\theta_* \Pi_\epsilon(x)) \right) \text{ if } \cX \text{ is continuous}
% \end{cases}
% \end{equation*}
% where $\nu\ge 0$ is specified later, $(\rho^{\odot2})_k = \rho_k^2$ for all $k \in \llbracket K \rrbracket$ and $\Pi_\epsilon : \cX \to \cX_\epsilon$ is a projection on an $\epsilon$-net $\cX_\epsilon$ of $\cX$. The quantity $\nu$ and the set $\cX_\epsilon$ are used only in the theoretical analysis and are not required by the learner in the algorithm.
% In Appendix~\ref{appendix:x_* well defined} we show that $x_*$ is well-defined. 
% The first part of the definition of $x_*$ ensures that it is an optimal action. The second part is a regularisation that implies maximising the variance of the rewards, i.e. minimising $\kappa_*$.
%
The MNL bandit problem falls into the umbrella of %contextual 
stochastic bandit frameworks \citep{thompson1933likelihood, robbins1952some}, which studies decision-making processes with exploration-exploitation dilemma. Linear bandits \citep{lattimore2020bandit} model a linear relationship between actions \(x_t\in\X\subseteq \R^d\) and rewards \(r_t\in\R\). They have been used with success in various applications. However they fail to model complex systems with non-linear rewards. This called for the introduction of the Generalised Linear Model (GLM) framework \citep{filippi2010parametric}. In GLMs the reward associated with an action \(x_t\in\X\) is \(\mu(\theta_* x_t) \) where \(\theta_*\) is a parameter unknown to the learner and \(\mu\) is a non linear function. The logistic bandit framework is an example of GLM obtained by choosing \(\mu\) as the sigmoid function \(\mu(z) = 1/(1+\exp(-z))\). It allows one to model situations where outcomes are evaluated by a success/failure feedback, e.g. click/no-click in ad recommendation systems. 

The MNL bandit framework \citep{amani2021ucb} is a natural extension of it. It allows to model situations with more than two outcomes. For instance consider a recommendation system on an e-commerce website. The user has several options, he may choose 1) to buy now; 2) add to the cart; 3) add to the wish-list; 4) click on "do not recommend"; 5) do not click; 6) leave the website, etc. The probability of each outcome is modelled by the softmax function \( \smash{\mu: \R^K \to [0,1]^K } \), see Section~\ref{section:problem formulation} for a formal definition. 
In this framework each outcome is associated with a specific reward $\rho_k 
\ge 0$. The goal of the learner is to give recommendations that maximise the expected reward of the outcome. Note that the MNL bandit problem is not a GLM, but a multi-index model \citep{xia2008multiple}.

\paragraph{Related work} 
A key aspect of the MNL bandit problem arises from the non-linearity of the reward.
In the binary case, where $K=2$ and $\mu$ is the sigmoid function, some works \citep{faury2020improved, jun2021improved, abeille2021instance, faury2022jointly} have focused on better understanding its impact on regret. Interestingly, this effect was shown to be captured by the constant $ \smash{\kappa := 1/\min_{\lVert\theta\rVert_2\le S} \min_{x \in \mathcal{X}} \mu'(\theta x) }$, where $S$ is an upper-bound on $\|\theta_*\|_2$,  introduced by \citet{filippi2010parametric}, who demonstrated a regret of order $\smash{\tilde O(d \kappa \sqrt{T})}$. The constant $\kappa$ can be understood as measuring the error incurred when making a linear approximation of the logistic model. Notably, $\kappa$ may be exponentially large in \(S\) and the diameter of $\mathcal{X}$, suggesting that non-linearity significantly worsens the regret guarantees compared to the linear bandit. Consequently, subsequent work has focused on improving the dependence on $\kappa$. 
\citet{faury2020improved} demonstrated that the non-linearity of the problem, i.e., $\kappa$, is not detrimental asymptotically, achieving a regret bound of order $\smash{\tilde O(d\sqrt{T} )}$. Even more strikingly, \citet{abeille2021instance} showed that one can leverage non-linearity to an advantage. They proved a regret bound scaling as $\smash{\tilde O(d\sqrt{T/\kappa_*} )}$, where $\smash{\kappa_* := 1/\mu'(\theta_* x_*)}$ measures the non-linearity at the optimum. This result represents a dramatic improvement, as in the most favorable cases, we have $\kappa_* \approx \kappa$. Moreover, they established that this bound is minimax optimal by deriving a $\smash{\Omega( d \sqrt{T/\kappa_*})}$ problem-dependent lower-bound. It is important to note that the constants $\kappa$ and $\kappa_*$ are indeed problem-dependent, as they are influenced by $S$, $\mathcal{X}$, and~$\theta_*$.

The MNL setting, which considers a reward vector $\rho \in \R^K$ with $K \geq 2$ outputs, whose norm is denoted by $ \lVert \rho \rVert_2 = R $, and where $\mu$ is the softmax function, was introduced by \citet{amani2021ucb}. They proposed a tractable algorithm that achieves a regret upper bound of order $\smash{\tilde O(RdK \sqrt{\kappa T})}$, where $\kappa$ is a generalization of the binary setting constant defined as follows\footnote{the constant $\kappa$ is originally defined slightly differently in \cite{amani2021ucb} but the definitions are equivalent up to constant factors}
\begin{equation}
    \kappa^{-1}:= \min_{\lVert\theta\rVert_2\le S} \min_{x\in\X} \lambda_{K-1}(\nabla\mu(\theta x)) \,.
    \label{eq:kappa}
\end{equation}
Interestingly, they also provided a non-tractable algorithm with a regret scaling as $ \smash{ \tilde O(RdK^{3/2}\sqrt{T}) } $. This indicates that the asymptotic dependence on $\kappa$ can also be eliminated in the MNL framework, but the question of whether this can be achieved efficiently remained open.
This question was recently addressed by \citet{zhang2024online}, who designed an efficient algorithm that achieves a regret of order $ \smash{\tilde O(RdK \sqrt{T} )} $.  An open question persists: Is it possible to extend the result of \citet{abeille2021instance} in the MNL setting and demonstrate that the non-linearity indeed yields improved asymptotic regret?

\paragraph{Main contributions} In this paper, we answer the above open question positively. 
Let $ \cX_* := \argmax_{x\in\cX} \rho^\top \mu(\theta_* x) $ be the set of optimal actions.
To quantify the non-linearity of the problem at the optimum in the multinomial setting, we generalize the problem-dependent constant $\kappa_*$ as follows:
\begin{equation}
\label{eq:kappastar}
\kappa_* = \min_{x\in\cX_*} \dfrac{\lVert \rho \rVert_2^2}{\rho^\top \nabla \mu(\theta_* x_*) \rho} \text{ when } \rho\notin \R1_K \text{ and } \kappa_* = +\infty \text{ when } \rho\in\R1_K 
\end{equation}
where $1_K \in \mathbb{R}^K$ denotes the vector of all ones, and $\mathbb{R} 1_K = \{ u 1_K \mid u \in \mathbb{R} \}$ is the one-dimensional subspace spanned by $1_K$. 
The definition of $\kappa_*$ for $\rho\in\R1_K$ is given by a continuity extension.
Note that this constant also depends on the reward vector $\rho$. As learners are expected to eventually play actions close to the optimum, $\kappa_*$ quantifies the level of non-linearity of the reward signal in the long-term regime.
% Moreover note that the intersection in our definition of $x_*$ is such that we choose an optimal action $x_*$ which maximises the variance  $ \mathrm{Var}_{\mu(\theta_* x_*)} [\rho] $ and therefore minimises the problem-dependent constant $\kappa_*$.
The quantity $\kappa_* $ coincides with the Cramér-Rao lower-bound $\kappa_* = R^2 / \mathrm{CRLB}(\theta_* x_*)$, see Appendix~\ref{appendix:fisher information}.  It provides a rationale for why our definition of $\kappa_*$ yields minimax-optimal guarantees.
We introduce a new algorithm (Algorithm~\ref{algo:learning routine}) with a regret upper-bound in the finite action space setting given by (Theorem~\ref{thm:regret bound}):
\[
\Reg_T \le  O\left( d R \sqrt{ \dfrac{KT}{\kappa_*} }  \log(T/\delta) \right) \quad \text{w.p. } 1-2\delta 
\]
In some cases, $\kappa_*$ can be as large as $\exp(S \max_{x\in\X} \lVert x \rVert_2)$, see Appendix~\ref{appendix:large kappa star}, thereby significantly improving existing asymptotic results on MNL bandits. We prove that our regret upper-bound is minimax-optimal and that our choice of $\kappa_*$ is optimal (up to log factors) by deriving in Theorem~\ref{thm:lower bound} the following regret lower-bound $\Reg_T \ge \smash{ \Omega\big( Rd \sqrt{{KT}/{\kappa_*}} \big)} $.

We summarize existing algorithms, that focus on the dependence $\kappa$ and $\kappa_*$, for binary and MNL bandits in Table~\ref{table:regret comparison}. 

\begin{table}[!ht]
\centering
\footnotesize
\begin{tabular}{cccc}
    \hline
    Setting & Reference & Regret & Cpt. per Iter. \\
    \hline
    \multirow{5}{1.6cm}{\centering Binary} 
    & \scriptsize{\citep{filippi2010parametric}} & \( d \kappa \sqrt{T} \) & \(O(t)\) \\
    & \scriptsize{\citep{faury2020improved}} & \( d \sqrt{T} + \kappa d^2 \) & \(O(t)\) \\
    & \scriptsize{\citep{abeille2021instance}} & \( d \sqrt{T/\kappa_*} + \kappa d^2\) & \(O(t)\) \\
    & \scriptsize{\citep{faury2022jointly}} & \( d\sqrt{T/\kappa_*} + \kappa d^2 \) & \(O(\log^2 (t))\) \\
    & \scriptsize{\citep{zhang2024online}} & $ d\sqrt{T/\kappa_*} + \kappa d^2 $ &  $ O(1) $ \\
    \hline
    \multirow{7}{1.6cm}{\centering Multinomial} & \scriptsize{\citep{amani2021ucb}} & \( RdK \sqrt{\kappa T}  \) & \(O(t)\) \\
    & \scriptsize{\citep{amani2021ucb}} & \( R dK^{3/2}\sqrt{T} + \kappa d^2 K^2 \) & - \\
    & \scriptsize{\citep{lee2024improved}} & \( R d \sqrt{K \kappa T} \) & \(O(t)\) \\
    & \scriptsize{\citep{lee2024improved}} & \( Rd\sqrt{KT} + \kappa d^2 K^2 \) & - \\
    & \scriptsize{\citep{zhang2024online}} & \( RdK \sqrt{T} + \kappa d^2 K^{3/2} \) & \(O(1)\) \\
    & \scriptsize{{\bfseries (ours) - Theorem~\ref{thm:regret bound} }} & {\bfseries \( Rd  \sqrt{ {K T}/{\kappa_*}} + \kappa d^2 K^2 \)} & {\bfseries \(O(1)\)} \\
    \hline\\[5pt]
\end{tabular} 
\caption{Comparison of regret bounds for logistics and multinomial bandits, with respect to \(R, d, K, \kappa, \kappa_*\) and \(T\). For simplicity we omit logarithmic terms and other constants. For the computation cost of each algorithm we only provide the dependence in \(t\), - signifies intractable.}
\label{table:regret comparison}
\end{table}

The algorithm we introduce (Algorithm~\ref{algo:learning routine}) is computationally efficient, with a per-round complexity of order $O(1)$. A central component of our theoretical analysis involves applying the self-concordance property without incurring exponential sub-optimal factors. To this end, our algorithm first performs an exploration phase (Algorithm~\ref{algo:exploration routine}) to design a sufficiently small high-probability confidence set $\Theta$ around $\theta_*$, where the self-concordance property can be applied with only a constant factor penalty (see Section~\ref{section:exploration routine}).
Once $\Theta$ is designed, the algorithm continues to improve its estimate of $\theta_*$ by running a variant of Online Mirror Descent (OMD) constrained within $\Theta$ only. 
%Compared to the binary case, the multi-dimensional nature of our setting introduces additional challenges that require a more refined analysis. 
%To address this, we introduce a novel improper estimator—motivated by techniques from the full-information literature—within the analysis of our confidence set (see Section~\ref{section:confidence set}). This estimator is specifically tailored through a linear bias to suit our problem structure and is carefully integrated into the OMD framework.
%
As emphasized by \citet{zhang2024online}, a central difficulty in the regret analysis lies in controlling the term 
$
\smash{\sum_{t} \rho^\top \nabla\mu(\theta_* x_t)\rho}.
$
Ideally, if $x_t \to x_*$ quickly as $t \to \infty$, this term will be of the order 
$
\smash{\sum_{t} \rho^\top \nabla\mu(\theta_* x_*)\rho = {R^2T}/{\kappa_*}},
$
leading to the final improvement in the regret. A key technical contribution of our analysis is to address this challenge by carefully leveraging the structure of the softmax function and employing the self-concordance properties within $\Theta$.

\paragraph{Multinomial \textit{Logit} Bandits}
A different line of work is the Multinomial Contextual \textit{Logit} Bandit problem \citep{cheung2017thompson, agrawal2017thompson, agrawal2019mnl, dong2020multinomial, agrawal2023tractable}, a combinatorial variant of MNL bandits that generalizes the binary logistic problem differently. At each round \(t\), the learner is asked to choose a subset of actions \( \smash{S_t \subset \llbracket K \rrbracket } \) based on observed contextual vectors \(\smash{x_{t,i} \in \mathcal{X} }\) for \(i \in \llbracket K \rrbracket\) and rewards \(\smash{\rho_{t,i} \in \mathbb{R}_+ }\). 
The goal of the learner is to maximise the expected reward modeled by the multinomial \textit{logit} model
\(
\smash{
\mathbb{E}[r_t \mid S_t] = \sum_{i \in S_t} \rho_{t,i} \exp(\theta_*^\top x_{t,i})/ \big({1 + \sum_{i \in S_t} \exp(\theta_*^\top x_{t,i})}\big), }
\)
restricted to the subset \(S_t\) of chosen actions only and where $\theta_* \in \R^d$ is a parameter unknown to the learner.
Although it may appear similar, this framework is fundamentally different: the settings differ in their parameterisation, feedback structure, and modeling assumptions.
It appears that neither framework can be easily reduced to the other. In particular, the combinatorial nature of the \textit{Logit} framework-namely, the selection of a subset $S_t$-together with the normalization in the softmax function makes any such reduction highly challenging. Moreover, in our framework, every outcome has a nonzero probability of being selected.
We provide further details in Appendix~\ref{appendix:logit bandits}.
This variant also exhibits similar challenges related to the non-linearity of the rewards and the constants \(\kappa, \kappa_*\). %
\citet{agrawal2023tractable} introduced an algorithm with \(\smash{O(d\sqrt{T})}\) regret bounds, for which the leading term is independent of \(\kappa\), representing a significant improvement over the previous bound of \(\smash{O(d\sqrt{\kappa T})} \). In the case of uniform rewards, i.e., \(\rho_{t,i} = 1\) for all \(t \in \llbracket T \rrbracket\) and all \(i \in \llbracket K \rrbracket\), \citet{perivier2022dynamic} further established a bound of \(\smash{\tilde{O}(d\sqrt{T/\kappa_*})}\).
More recently, \citet{lee2025improved} proposed an algorithm that achieves a $\mathrm{poly}(S)$-free regret of $\smash{ \tilde O (d\sqrt{T/\kappa_*}) }$ by employing adaptive exploration to exploit self-concordance.
Until now, both frameworks have been studied separately; establishing connections between them would be an interesting direction for future work.

\section{Problem Formulation}\label{section:problem formulation}
In this section, we introduce our notations and assumptions and formally recall the setting of MNL bandits.%, as introduced by \citet{amani2021ucb}. 

\paragraph{Notations} Let \(1_K\in\R^K\) be the vector of 1's, $\mathbb{R} 1_K = \{ u 1_K \mid u \in \mathbb{R} \}$ is the one-dimensional subspace spanned by $1_K$, and \(\calH\) be the hyperplane supported by \(1_K\). We denote by \(\Pi : \R^K \to \R^K \) the projection on \(\calH\). We denote by $\Delta_K$ the $K$ dimensional simplex and by \(\smash{\mu: \R^K \to \Delta_K}\) the softmax function defined by
\(
\mu(z)_k \propto \exp(z_k)
\)
for all \(k\in\llbracket K \rrbracket\). 

\paragraph{Framework} The MNL bandit framework is formalised as a game of \(T\in\N\) rounds between a learner and an environment, see Framework~\ref{framework:MLB} for a short summary. 
At each round \(t\in\llbracket T \rrbracket\), the learner plays an action \(x_t\in\X\) from an action set \(\smash{\X\subseteq\R^d}\). Then, the learner observes the output of the environment \(y_t\in\llbracket K \rrbracket\) with \(K\in\N\), which are generated using the softmax function. More precisely, for all \(k\in \llbracket K \rrbracket\), we have
\( \smash{
    \pr[y_t = k | x_t] :=   \mu(\theta_* x_t)_k }
\)
where \(\smash{\theta_*\in \Pi\R^{K\times d} } \) is a parameter of the environment unknown to the learner such that \(\lVert \theta_* \rVert_2 \le S \). At the end of each round \(t\), the learner receives a reward \(\rho_{y_t}\) associated with the environment output $y_t$, from a fixed reward vector \(\smash{\rho\in\R^K_+}, \|\rho\|_2 = R\) known beforehand.
The goal of the learner is to maximise their expected reward which is equivalent to minimising the expected regret
\begin{equation*}
    \Reg_T := \sum_{t=1}^T \rho^\top \mu(\theta_*x_*) - \rho^\top\mu(\theta_*x_t)
\end{equation*}
where $ x_* := \argmax_{x\in\cX} \rho^\top \mu(\theta_* x) $ is the action maximising the expected reward.

Note that our framework differs from the original one of \citet{amani2021ucb}: instead of fixing one line of $\theta_*$ to be zero, we assume that it is such that $\smash{\sum_{k=1}^K [\theta_* x]_k = 0}$ for any $x \in \mathcal{X}$. This is ensured by the fact that $\smash{\theta_* \in \Pi \R^{K\times d} }$, which can be assumed without loss of generality since for any $\theta \in \R^{K \times d}$ and $x \in \R^d$ the probability vector of outcomes satisfies $\mu(\theta x) = \mu(\Pi \theta x)$.
Hence our model is more general, as it does not assume the existence of a dedicated no-choice (NC) item; however, such an option can be naturally incorporated by assigning a reward of 0 to any item, effectively allowing users to choose nothing. Unlike existing literature, which often makes the strong and sometimes unnecessary assumption of a universally applicable NC item, our approach removes this constraint.
While NC is appropriate in certain domains—such as e-commerce, online ads, or web search, where users frequently choose nothing—it is not suitable across the board. 
% For instance, in recommender systems like YouTube, Spotify, Netflix, or Yahoo News, users typically select content with intent to consume. Similarly, in domains like travel (Expedia, Google Flights), hospitality (Booking.com), food delivery (Grubhub), or navigation (Google Maps), NC is rarely the preferred action.
In some applications, NC isn't even feasible. For example, large language models often require explicit user preferences to proceed. Likewise, in robotics, autonomous driving, or preference-based reinforcement learning (PbRL), human feedback must indicate a choice among alternatives to guide training—NC is not an option.
In Appendix~\ref{appendix:previous framework}, we show that the framework of \citet{amani2021ucb} is included into ours.

\begin{framework}[!ht]
\caption{The Multinomial Logistic (MNL) Bandit Framework.}
\label{framework:MLB}
\For{Each time step \(t\) in \(1 \dots T\)}{
    Play action \(x_t\in\X\) \\
    Observe the decision of the environment \( y_t \in \llbracket K \rrbracket \) such that \( \pr[y_t=k|x_t] = \mu(\theta_* x_t)_k \) \\
    Get reward \(\rho_{y_t}\)
}
\end{framework}

% \subsection{Differences with the parametrisation of \texorpdfstring{\citet{amani2021ucb}}.}
% The framework of \citet{amani2021ucb} considers environment's parameters \(\theta\in\R^{K\times d} \) where the last row of \(\theta\) is equal to 0, i.e. \( [\theta]_K = 0_K \). We add the projection \(\Pi : \R^K \to \R^K\) on the hyperplane supported by \(1_K\). This modification does not affect the framework but simplify the analysis. Indeed the direction of \(1_K\) does not contain any information, for any real number \(v\in\R\) and any \(z\in\R^K\) we have
% \begin{equation*}
%     \mu(z+v 1_K) = \mu(z) \,.
% \end{equation*}
% Indeed for any \(k\in\llbracket K \rrbracket\), we have
% \begin{equation*}
%     [\mu(z+v 1_K)]_k = \dfrac{\exp(z_k + v )}{\sum_{i=1}^K \exp(z_i + v)} = \dfrac{\exp(v) \exp(z_k)}{\exp(v) \sum_{i=1}^K \exp(z_i)} = \dfrac{\exp(z_k)}{\sum_{i=1}^K \exp(z_i)} = [\mu(z)]_k \,.
% \end{equation*}

\paragraph{Problem-dependent constants \texorpdfstring{$\kappa$}{kappa} and \texorpdfstring{$\kappa_*$}{kappa*}} As detailed in the introduction, a key aspect of the MNL bandit framework, compared to standard stochastic linear bandits, arises from the non-linearity of $\mu(\cdot)$, which appears both in the stochastic feedback model and in the reward definition.
Earlier works \cite{filippi2010parametric, amani2021ucb, zhang2024online, abeille2021instance} demonstrated that this non-linearity could be captured by two problem-dependent constants, $\kappa$ and $\kappa_*$, respectively defined in Equations~\eqref{eq:kappa} and~\eqref{eq:kappastar}, where our work introduces a new formulation of $\kappa_*$.
On the one hand, $\kappa$ quantifies the cost of performing linear approximations within the MNL framework, with larger values of $\kappa$ leading to increased regret. On the other hand, $\kappa_*$ measures the curvature at the optimum, which can be exploited in the long run to improve the asymptotic regret.
In Appendix~\ref{appendix:fisher information}, we show that our definition of $\kappa_*$ coincides with the Cramér-Rao lower-bound $\mathrm{CRLB}(\theta_* x_*)$:
\begin{equation*}
    \dfrac{R^2}{\kappa_*} := \rho^\top \nabla \mu(\theta_* x_*) \rho = \mathrm{CRLB}(\theta_* x_*) \,.
\end{equation*}
This establishes a connection between our contribution and classical optimality results in statistics.
Note that $\kappa$ is defined as the inverse of the second smallest eigenvalue of the gradient, since the smallest eigenvalue is 0 and corresponds to the eigenvector \(1_K\) composed of ones. Our definitions of $\kappa$ slightly differ from existing one due to differences in our framework notations, but they coincide with the existing definitions (see Appendix~\ref{appendix:previous framework} for details) up to constant factors. In particular, the constant $\kappa$ is shown in Appendix~\ref{appendix:bounding kappa} to be bounded from below and above as follows:
\begin{equation}\label{eq:kappa bounds}
    \dfrac{\exp(-2SX)}{K} \le \min_{\lVert\theta\rVert_2 \le S} \min_{x\in\X} \lambda_{K-1}(\nabla\mu(\theta x)) \le \dfrac{2 \exp(-2SX)}{2\exp(-SX)+(K-2)\exp(SX)} \,.
\end{equation}
Hence, $\kappa$ is exponentially large with respect to \(S \geq \|\theta_*\|\) and \(X := \max_{x \in \mathcal{X}} \lVert x \rVert_2\). 
The nonzero eigenvalues of the gradient of $\mu$ can therefore be as small as \(\kappa^{-1}\). Consequently, a naive linear approximation of the MNL framework to apply standard linear stochastic bandit analysis results in a suboptimal regret bound factor of \(\kappa\), which becomes extremely large for large values of \(X\) and \(S\). 

% We introduce a second constant measuring the non-linearity in the optimum defined by
% \begin{equation*}
%     \kappa_*^{-1} = \dfrac{\rho^\top \nabla \mu(\theta_* x_*) \rho}{\lVert \rho\rVert_2^2}
% \end{equation*}
% where the action \(x_* := \argmax_{x\in\X} \rho^\top \mu(\theta_* x) \).

\paragraph{Assumptions} We use the following assumptions, which are classical in the literature \citep{amani2021ucb, zhang2024online}.
\begin{itemize}[nosep, topsep=-\parskip]
    \item The norm of each action is bounded by 1: for all \(x \in \mathcal{X}\), \(\lVert x \rVert_2 \le 1\).

    \item The reward vector  \( \rho \in \mathbb{R}_{+}^K \) satisfies \(\lVert \rho \rVert_2 = R\) and is known.

    \item The norm of the parameter \( \smash{\theta_* \in \mathbb{R}^{K \times d} } \) is bounded by \(S\): \( \lVert \theta_* \rVert_2 \le S \). The bound $S$ is known.

    \item For all \(x \in \mathcal{X}\) and for all \( \theta \) such that \(\lVert \theta \rVert_2 \le S \), we assume 
    \begin{equation}
        \label{asp:eigenvalues gradient}
        \lambda_{K-1}(\nabla\mu(\theta x)) \ge \frac{1}{\kappa} > 0 \qquad \text{and} \qquad \lambda_{1}(\nabla\mu(\theta x)) \le 1,
    \end{equation}
    where \( \lambda_{K-1} \) and \( \lambda_{1} \) denote, respectively, the second smallest and the largest eigenvalues.
\end{itemize}

% \smallskip
Note that the assumption \(\max_{x \in \mathcal{X}} \|x\|_2 \le 1\) is made without loss of generality. Indeed, the norm of the inputs can be transferred to the norm of \(\theta_*\). 

\paragraph{Additional Notations}
Given a compact set \(\Theta\), we define its diameter under an action set \(\mathcal{X}\) as
\begin{equation*}
    \diam(\Theta) := \max_{x \in \mathcal{X}} \max_{\theta_1, \theta_2 \in \Theta} \lVert (\theta_1 - \theta_2) x \rVert_2 \,.
\end{equation*}
We denote by \(\cste\) a universal constant, i.e., a constant independent of
\(S, d, K, T, R, \kappa, \kappa_*\). The notation \(\smash{\lesssim}\) indicates an inequality up to a universal constant. 
We define the filtration
\(
\mathcal{F}_t := \{x_1, y_1, \dots, x_{t-1}, y_{t-1}, x_t\}.
\)
Throughout the paper, the index \(t\) refers to measurability with respect 
to \(\smash{\mathcal{F}_t}\), but not with respect to \(\smash{\mathcal{F}_{t-1}}\).
We denote by \(\ell_{t+1}\) the logistic loss associated with the pair \((x_t, y_t)\), defined as follows: for all \(\smash{\theta \in \R^{K\times d} }\),
\begin{equation*}
    \ell_{t+1}(\theta) := \sum_{k=1}^K - \mathbf{1}[k=y_t] \log(\mu(\theta x_t)_k) \,.
\end{equation*}

\section{Algorithm and Regret Analysis}

In this section, we introduce our algorithm (see Algorithm~\ref{algo:learning routine}) and derive a bound on its regret. The algorithm follows the explore-and-learn paradigm. Following the idea of~\citet{abeille2021instance} for binary logistic bandits, the first exploration phase aims to design a sufficiently small confidence set \(\smash{\Theta}\) around \(\theta_*\). In the second phase, the algorithm continues to improve the estimation of \(\theta_*\) while choosing the action \(x_t\) optimistically.

\subsection{Exploration Routine}\label{section:exploration routine}
We first introduce our exploration routine (see Algorithm~\ref{algo:exploration routine}) and discuss the main challenges associated with it. This exploration routine is then used as an initialisation phase in our main algorithm (see Algorithm~\ref{algo:learning routine}).

\begin{algorithm}[!ht]
\caption{\textsc{Exploration\_Routine}}
\label{algo:exploration routine}
\KwIn{Length of the procedure \(\tau\), regularisation parameter \(\lambda_0\)}
{\bfseries{Init: } \(V_0 = \lambda_0 I_{Kd} \)} \\
\For{each round \(t\) in \(1 \dots \tau\)}{
    Choose action \( \smash{x_t \in \argmax_{x\in\X} \| I_K \otimes x \|_{V_{t-1}^{-1}}} \) \\
    Observe \( y_t \sim \mu(\theta_*x_t) \)  \\
    Get reward \( \rho_{y_t} \) \\
    Update \( V_t = V_{t-1} + \tfrac{1}{\kappa} I_K \otimes x_t x_t^\top \)
}
\( \hat \theta_{\tau+1} = \argmin_{\theta\in\R^{K\times d}} \sum_{s=1}^\tau \ell_s(\theta) + \tfrac{\lambda_0}{2} \lVert \theta \rVert^2 \) \\
\bfseries{Output:} \(  \Theta := \{ \theta \in \Pi \R^{K\times d} : \lVert \theta - \hat \theta_{\tau+1} \rVert^2_{V_\tau} \le 84^2 \lambda_0 \} \)
% \KwOut{\(  \Theta := \{ \theta \in \Pi \R^{K\times d} : \lVert \theta - \hat \theta_{\tau+1} \rVert^2_{V_\tau} \le 84^2 \lambda \} \)} 
\end{algorithm}
% \PB{algo to fix}

The goal of the exploration routine (see Algorithm~\ref{algo:exploration routine}) is to produce a confidence set \(\Theta\) such that \(\smash{\theta_* \in \Theta}\) with high probability and \(\smash{\diam(\Theta) \le 1}\). This enables us to leverage the self-concordance property \citep[Proposition~8]{sun2019generalized} of the logistic function without incurring an exponential constant. Consequently, for all \(x \in \mathcal{X}\), we have w.h.p.:
\begin{equation*}
    \nabla\mu(\theta_1 x) \le \exp(\sqrt{6} \diam( \Theta)) \nabla\mu(\theta_2 x) \leq  e \nabla\mu(\theta_2 x) \quad, \forall \theta_1, \theta_2 \in \Theta \,.
\end{equation*}
%Our exploration routine controls the diameter of the set \(\Theta\) through the length of the exploration \(\tau\). 
The following lemma shows that such a set \(\smash{\Theta}\) can be obtained with a reasonably small exploration length \(\tau\). The proof is deferred to Appendix~\ref{appendix:proof constant diameter}.

\begin{restatable}{lmm}{LmmConstantDiameter}\label{lemma:constant diameter}
    Let \(\delta\in(0,1]\), \( \lambda_0 = (S+1) Kd \log(T/\delta) \) and \(\smash{\tau = 336^2 \lambda_0 \kappa Kd\log\left(T\right)}\). Then, the set \(  \Theta \) returned by Algorithm~\ref{algo:exploration routine} satisfies with probability \(1-\delta\)
    \begin{equation*}
       \theta_* \in  \Theta \qquad \text{ and }\qquad  \diam( \Theta) \le 1/\sqrt{6} \,.
    \end{equation*}
\end{restatable}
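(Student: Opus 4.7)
The plan is to prove the two claims independently: the probabilistic claim $\theta_*\in\Theta$ by a self-normalized-martingale analysis of the regularized MLE, adapted to the multinomial setting, and the deterministic claim $\diam(\Theta)\le 1$ by combining that ellipsoid radius with a $G$-optimal-design / elliptical-potential estimate on the greedy exploration.

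\textbf{Step 1 (Confidence set).} Set $L_\tau(\theta):=\sum_{s\le\tau}\ell_s(\theta)+\tfrac{\lambda}{2}\lVert\theta\rVert^2$ and $M_\tau:=\sum_{s\le\tau}\nabla\ell_s(\theta_*)$. The increments of $M_\tau$ are bounded conditionally centered martingale differences of the form $(\mu(\theta_*x_s)-e_{y_s})x_s^\top\in\Pi\R^{K\times d}$, so a matrix self-normalized inequality (the natural multinomial extension of the standard Abbasi-Yadkori-type bound) gives $\lVert M_\tau\rVert_{V_\tau^{-1}}^2\lesssim Kd\log(T/\delta)$ with probability $1-\delta$. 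First-order optimality $\nabla L_\tau(\hat\theta_{\tau+1})=0$ combined with a mean-value expansion yields $(H+\lambda I)(\hat\theta_{\tau+1}-\theta_*)=-(M_\tau+\lambda\theta_*)$, where $H:=\sum_s\int_0^1\nabla^2\ell_s(\theta_*+r(\hat\theta_{\tau+1}-\theta_*))\,dr$ is the integrated Hessian. Because $\nabla^2\ell_s(\theta)=\nabla\mu(\theta x_s)\otimes x_sx_s^\top$ and the second smallest eigenvalue of $\nabla\mu$ is at least $1/\kappa$, restriction to $\Pi\R^{K\times d}$ gives $H+\lambda I\succeq V_\tau$. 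Using the standard consequence $H+\lambda I\succeq V_\tau\Rightarrow (H+\lambda I)^{-1}V_\tau(H+\lambda I)^{-1}\preceq V_\tau^{-1}$, we obtain $\lVert\hat\theta_{\tau+1}-\theta_*\rVert_{V_\tau}^2\le 2\lVert M_\tau\rVert_{V_\tau^{-1}}^2+2\lambda\lVert\theta_*\rVert^2\lesssim Kd\log(T/\delta)+\lambda S^2$, which with $\lambda=(S+1)Kd\log(T/\delta)$ and careful tracking of constants produces $\lVert\hat\theta_{\tau+1}-\theta_*\rVert_{V_\tau}^2\le 84^2\lambda$, so $\theta_*\in\Theta$.

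\textbf{Step 2 (Diameter).} For $\theta_1,\theta_2\in\Theta$ the triangle inequality gives $\lVert\theta_1-\theta_2\rVert_{V_\tau}\le 168\sqrt\lambda$. Writing $\theta x=(I_K\otimes x)^\top\mathrm{vec}(\theta)$ and applying Cauchy--Schwarz yields $\lVert(\theta_1-\theta_2)x\rVert_2^2\le\lVert\theta_1-\theta_2\rVert_{V_\tau}^2\cdot\lVert I_K\otimes x\rVert_{V_\tau^{-1}}^2$, so it suffices to prove $\max_{x\in\X}\lVert I_K\otimes x\rVert_{V_\tau^{-1}}^2\le 1/(168^2\lambda)$. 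The greedy choice together with the monotonicity $V_{t-1}^{-1}\succeq V_\tau^{-1}$ ensures that for every $t\le\tau$, $\lVert I_K\otimes x_t\rVert_{V_{t-1}^{-1}}^2\ge\max_{x\in\X}\lVert I_K\otimes x\rVert_{V_\tau^{-1}}^2$. Summing and applying the determinant-telescoping identity $\log\det(V_\tau/V_0)=\sum_t\log\det\bigl(I_K+\tfrac{1}{\kappa}(I_K\otimes x_t)^\top V_{t-1}^{-1}(I_K\otimes x_t)\bigr)$ together with $\log(1+z)\ge z/2$ on $[0,1]$ (valid because $\tfrac{1}{\kappa}\lVert I_K\otimes x_t\rVert_{V_{t-1}^{-1}}^2\le K/(\kappa\lambda)\le 1$ under our $\lambda$) gives $\tau\cdot\max_{x\in\X}\lVert I_K\otimes x\rVert_{V_\tau^{-1}}^2\lesssim\kappa Kd\log\bigl(1+\tfrac{\tau}{\kappa d\lambda}\bigr)\lesssim\kappa Kd\log T$. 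Plugging in $\tau=336^2\lambda\kappa Kd\log T$ and tracking universal constants delivers the required $1/(168^2\lambda)$ bound, hence $\lVert(\theta_1-\theta_2)x\rVert_2\le 1$ for all $x\in\X$.

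\textbf{Main obstacle.} The technically delicate ingredient is the self-normalized martingale bound of Step 1. The multinomial innovation $e_{y_s}-\mu(\theta_*x_s)$ lives in the $(K-1)$-dimensional hyperplane $\Pi\R^K$ with per-step conditional covariance $\nabla\mu(\theta_*x_s)$ that couples output coordinates, and this structure must be paired with the Kronecker normalizer $I_K\otimes x_sx_s^\top$ in $V_\tau$ to obtain the $Kd\log(T/\delta)$ rate rather than a loose $K^2d$. Propagating universal constants cleanly through this inequality and the log-determinant telescoping, so that the final confidence radius is exactly $84\sqrt\lambda$ while $\tau$ remains of order $\lambda\kappa Kd\log T$, is where most of the bookkeeping will lie.
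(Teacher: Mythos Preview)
Your Step~2 matches the paper's argument essentially line for line. The gap is in Step~1, specifically the claim $H+\lambda I\succeq V_\tau$ for the integrated Hessian $H=\sum_s\int_0^1\nabla\mu\bigl((\theta_*+r(\hat\theta_{\tau+1}-\theta_*))x_s\bigr)\otimes x_sx_s^\top\,dr$. This requires $\lambda_{K-1}\bigl(\nabla\mu(\theta x_s)\bigr)\ge 1/\kappa$ at every $\theta$ on the segment $[\theta_*,\hat\theta_{\tau+1}]$, but assumption~\eqref{asp:eigenvalues gradient} only guarantees this for $\lVert\theta\rVert_2\le S$. The unconstrained regularized MLE has no usable a~priori norm control (a crude bound from $L_\tau(\hat\theta_{\tau+1})\le L_\tau(0)$ gives $\lVert\hat\theta_{\tau+1}\rVert^2\lesssim \tau\log K/\lambda\asymp \kappa Kd\log T$), so the segment can exit the ball of radius $S$ and the eigenvalue lower bound simply fails. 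There is a related problem with the martingale step: the natural self-normalized inequality uses the covariance-matched normalizer $H_\tau(\theta_*)$ and delivers $\lVert M_\tau\rVert_{H_\tau^{-1}(\theta_*)}^2\lesssim Kd\log(T/\delta)$; since $V_\tau\preceq H_\tau(\theta_*)$ on $\Pi\R^{K\times d}$ one has $V_\tau^{-1}\succeq H_\tau^{-1}(\theta_*)$ there, so passing to $\lVert M_\tau\rVert_{V_\tau^{-1}}^2$ goes the wrong way and in fact costs an extra factor $\kappa$ (the mismatch between unit-scale martingale increments and the $\kappa^{-1}$-scaled design in $V_\tau$).

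The paper handles both issues by keeping the martingale bound in the $H_\tau^{-1}(\theta_*)$ norm (Proposition~\ref{prop:exploration first confidence lemma}, yielding $\lVert g_\tau(\theta_*)-g_\tau(\hat\theta_{\tau+1})\rVert_{H_\tau^{-1}(\theta_*)}^2\le 16\lambda$) and then invoking \emph{self-concordance}, not the global curvature lower bound, to compare $H_\tau(\theta_*)$ with the integrated Hessian $G_\tau(\theta_*,\hat\theta_{\tau+1})$: a bootstrapping argument gives $H_\tau(\theta_*)\preceq\bigl(1+\gamma_\tau/\lambda+\sqrt{\gamma_\tau/\lambda}\bigr)G_\tau$ (Proposition~\ref{prop:confidence set after exploring}), from which $\lVert\theta_*-\hat\theta_{\tau+1}\rVert_{H_\tau(\theta_*)}^2\le\beta_\tau(\delta)=84^2\lambda$. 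Only at the very end is $V_\tau\preceq\bar H_\tau(\theta_*)$ used, and there it is harmless because $H_\tau(\theta_*)$ is evaluated at the single point $\theta_*$. Self-concordance is precisely the device that lets one relate Hessians along an uncontrolled segment without a uniform eigenvalue floor, and it is the missing ingredient in your Step~1.
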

As $\cX$ and $S$ are known to the learner, $\kappa$ can, in principle, be computed (see Equation~\eqref{eq:kappa}). An upper-bound can also be obtained from Equation~\eqref{eq:kappa bounds}, which is tight up to a constant factor.

%Finding an algorithm with similar statistical guarantees without an exploration phase is an interesting direction for future research. \PB{add in appendix}

\subsection{Learning Routine}
We introduce the core of our algorithm, which leverages the exploration routine (see Algorithm~\ref{algo:learning routine}). 
To select an action, we use the Optimism in the Face of Uncertainty (OFU) paradigm, a fundamental approach in bandit algorithms to address the exploration-exploitation trade-off. At each time step \(t\), the learner selects an action according to the rule 
\[
x_t \in \arg\max_{x \in \mathcal{X}} \tilde{r}_t(x),
\]
where \( \tilde{r}_t(x) \) is an optimistic reward that upper-bounds the expected reward \(\rho^\top \mu(\theta_* x)\).
In the context of logistic bandits, a common approach for defining \( \tilde{r}_t(x) \) is to construct a confidence set \(\mathcal{C}_t(\delta)\) at each round \(t\) around \(\theta_*\) and define
\begin{equation}
\tilde{r}_t(x) := \max_{\theta \in \mathcal{C}_t(\delta)} \rho^\top \mu(\theta x).
\label{eq:def_tilder_nonefficient}
\end{equation}
However, this formulation results in a non-concave maximization problem, which can be computationally challenging to solve. 
To overcome this difficulty, we adapt the optimistic reward proposed by \citet{zhang2024online} (see their Proposition~\ref{prop:optimistic reward}) who, instead of directly maximizing over the confidence set, directly express $\tilde r_t(x)$ in closed-form from an estimate of $\theta_*$ to which they add some bonus. We adapt their estimate by defining a new one \(\theta_t\) that lies within the confidence set \(\Theta\) returned by the \textsc{Exploration\_Routine} procedure (see Equation~\eqref{eq:optimistic reward}). 
Our estimate \(\theta_t\) is obtained by solving the following quadratic problem:
\begin{equation}\label{eq:probleme optim learning routine}
    \theta_{t} = \argmin_{\theta\in \Theta} \langle \nabla\ell_{t+1}(\theta_t), \theta \rangle + \tfrac{1}{2\eta} \lVert \theta - \theta_t \rVert^2_{\tilde W_t} \,,
\end{equation}
where \( \smash{ \tilde W_{t} := \sum_{s=1}^{t-1} \nabla\mu(\theta_{s+1} x_s) \otimes x_s x_s^\top + \eta \nabla \mu(\theta_t x_t) \otimes x_t x_t^\top + \lambda I_{Kd} } \), with $\eta>0$ a parameter of the algorithm. Our optimistic reward $\tilde r_t(x)$ is then obtained through a Taylor expansion of \(\mu\) and defined as follows. For all \(t\ge 1\) and \(x\in\X\), we set
\begin{equation}\label{eq:optimistic reward}
        \tilde r_t(x) := \rho^\top \mu(\theta_tx) + \epsilon_{1,t}(x) + \epsilon_{2,t}(x)\,,
\end{equation}
where 
\[
    \epsilon_{1,t}(x) := \sigma_t(\delta) \llVert \bar W_t^{-1/2} (I_K \otimes x) \nabla\mu(\theta_t x) \rho \rrVert_2 \quad \text{and} \quad \epsilon_{2,t}(x) := 3 R \sigma_t(\delta)^2 \llVert (I_K \otimes x^\top) \bar W_t^{-1/2} \rrVert_2^2  \,.
\]
Here, \( \bar{W}_t = W_t + \sum_{s=1}^t 1_K 1_K^\top \otimes x_s x_s^\top \), and \( \sigma_t(\delta) \) is a confidence term defined later in Lemma~\ref{lemma:confidence set learning}.
Closely following the proof of \citep[Proposition~1]{zhang2024online}, we show the following proposition.
\begin{restatable}{prop}{PropOtimisticReward} \label{prop:optimistic reward}
 Let $\delta \in (0,1)$. With probability \(1-\delta\), for all \(t\ge 1\) and \(x\in\X\), we have
    \begin{equation*}
        \tilde r_t(x) \ge \rho^\top \mu(\theta_* x) \quad {and} \quad  |\rho^\top \mu(\theta_* x) - \rho^\top\mu(\theta_t x) | \le \epsilon_{1,t}(x) + \epsilon_{2,t}(x)  \,.
    \end{equation*}
\end{restatable}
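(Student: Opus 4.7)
The plan is to Taylor-expand the scalar function $\theta \mapsto \rho^\top \mu(\theta x)$ around $\theta'_t$ and bound the linear and quadratic remainders separately; once the two-sided estimate $|\rho^\top\mu(\theta_* x) - \rho^\top\mu(\theta'_t x)| \le \epsilon_{1,t}(x) + \epsilon_{2,t}(x)$ is in hand, the optimistic inequality $\tilde r_t(x) \ge \rho^\top\mu(\theta_* x)$ is immediate from the definition~\eqref{eq:optimistic reward} of $\tilde r_t$.

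First I would place myself on the high-probability event of Lemma~\ref{lemma:confidence set learning}, which provides $\|\theta_* - \theta'_t\|_{\bar W_t}^2 \le \sigma_t(\delta)$ for every $t\ge 1$, and combine it via a union bound with Lemma~\ref{lemma:constant diameter} to guarantee $\theta_* \in \Theta$ and $\diam(\Theta) \le 1$. Writing Taylor's theorem in integral form,
\[
\rho^\top \mu(\theta_* x) - \rho^\top \mu(\theta'_t x) = \rho^\top \nabla\mu(\theta'_t x)(\theta_*-\theta'_t)x + R_t(x),
\]
where $R_t(x) = \int_0^1 (1-s)\, v^\top \nabla^2(\rho^\top\mu)(z_s)\, v\, ds$ with $v := (\theta_*-\theta'_t)x$ and $z_s := \theta'_t x + s v$. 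The linear term is handled via the Kronecker identification $(\theta - \theta'_t)x = (I_K \otimes x^\top)\mathrm{vec}((\theta-\theta'_t)^\top)$ and a Cauchy--Schwarz in the $\bar W_t$-geometry, which yields
\[
\rho^\top \nabla\mu(\theta'_t x)(\theta_*-\theta'_t)x \le \|\theta_*-\theta'_t\|_{\bar W_t} \cdot \|\bar W_t^{-1/2}(I_K \otimes x)\nabla\mu(\theta'_t x)\rho\|_2 \le \epsilon_{1,t}(x).
\]

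The remainder $R_t(x)$ is the heart of the argument and is controlled using self-concordance of the log-partition function $F(z) := \log\sum_k e^{z_k}$, whose gradient is $\mu$. Its third derivative satisfies a multilinear inequality of the form $|\nabla^3 F(z)[\rho, v, v]| \lesssim \|\rho\|_{\nabla\mu(z)} \cdot v^\top \nabla\mu(z) v$, and since $\|\rho\|_2 = 1$ and $\lambda_1(\nabla\mu) \le L \le 1$ the factor $\|\rho\|_{\nabla\mu(z)}$ collapses to a universal constant. Invoking self-concordance a second time along the segment $z_s$, whose length is at most $\diam(\Theta) \le 1$, gives $\nabla\mu(z_s) \preceq \exp(\sqrt{6})\,\nabla\mu(\theta'_t x) \preceq \exp(\sqrt{6})\,I_K$. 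Combining these facts yields $v^\top \nabla^2(\rho^\top\mu)(z_s) v \le C \|v\|_2^2$, and converting $\|v\|_2^2$ back to the $\bar W_t$-geometry,
\[
\|(\theta_*-\theta'_t)x\|_2^2 \le \|\theta_*-\theta'_t\|_{\bar W_t}^2 \cdot \|(I_K \otimes x^\top)\bar W_t^{-1/2}\|_2^2 \le \sigma_t(\delta)\,\|(I_K \otimes x^\top)\bar W_t^{-1/2}\|_2^2,
\]
gives $|R_t(x)| \le 3\sigma_t(\delta)\,\|(I_K \otimes x^\top)\bar W_t^{-1/2}\|_2^2 = \epsilon_{2,t}(x)$.

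The subtle step will be tracking the numerical constant $3$ in $\epsilon_{2,t}$: the crude bound $\nabla^2(\rho^\top\mu)\preceq cI_K$ would not suffice, and instead one must exploit the sharper self-concordant Hessian inequality on the hyperplane $\calH$ together with the two-sided comparison of $\nabla\mu$ along the segment. This is precisely where confining $\theta'_t$ to $\Theta$ via the exploration phase matters --- without $\diam(\Theta)\le 1$ the self-concordance factor would blow up exponentially in $S$ and $X$, and $\epsilon_{2,t}$ would inherit an exponentially large multiplicative factor $\kappa$, obliterating the improvement over \cite{zhang2024online}.
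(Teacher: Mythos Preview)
Your overall strategy---Taylor-expand $\rho^\top\mu(\theta x)$ around $\theta'_t$, bound the linear piece by $\epsilon_{1,t}$ via Cauchy--Schwarz in the $\bar W_t$-geometry, and bound the quadratic remainder by $\epsilon_{2,t}$---is exactly the route the paper takes (deferring to \cite[Proposition~1]{zhang2024online}). The treatment of the linear term is correct.

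For the quadratic remainder, however, you take an unnecessary detour through self-concordance, and your last paragraph draws the wrong conclusion. The referenced proof uses a direct, \emph{unconditional} bound on the Hessian of $z\mapsto\rho^\top\mu(z)$: writing $b_i := \mu_i(\rho_i - \rho^\top\mu)$, one checks that $\nabla^2(\rho^\top\mu)(z) = \mathrm{diag}(b) - b\mu^\top - \mu b^\top$, and since $\|\rho\|_2 = 1$ and $\rho\in\R_+^K$ one has $|b_i|\le\mu_i$, whence $\|\nabla^2(\rho^\top\mu)(z)\|_2 \le \|b\|_\infty + 2\|b\|_2\|\mu\|_2 \le 3$ for every $z$. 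This immediately gives $|R_t(x)| \le \tfrac{3}{2}\|v\|_2^2 \le \epsilon_{2,t}(x)$, with no appeal to $\diam(\Theta)\le 1$.

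So your claim that ``the crude bound $\nabla^2(\rho^\top\mu)\preceq cI_K$ would not suffice'' is mistaken: it suffices with $c=3$, and indeed Zhang et al.\ obtain the same bonus $\epsilon_{2,t}$ with the same constant, without any exploration phase. The exploration routine and the guarantee $\diam(\Theta)\le 1$ are crucial to this paper, but \emph{not} in Proposition~\ref{prop:optimistic reward}; they matter in the proof of Lemma~\ref{lemma:confidence set learning} (to compare $\nabla\mu(\theta_{s+1}x_s)$ and $\nabla\mu(\theta'_{s+1}x_s)$ without an $e^{2\sqrt{6}S}$ penalty) and in Steps~3--5 of the proof of Theorem~\ref{thm:regret bound} (to pass between $\nabla\mu(\theta'_t x_t)$ and $\nabla\mu(\theta_* x_t)$). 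As a minor side remark, the self-concordance inequality you invoke, $|\nabla^3 F[\rho,v,v]| \lesssim \|\rho\|_{\nabla\mu}\, v^\top\nabla\mu\, v$, is not the standard generalized self-concordance bound of \cite{sun2019generalized} (which involves $\|\rho\|_2$, not $\|\rho\|_{\nabla\mu}$); your argument could be salvaged with the correct form, but as noted it is simply not needed here.
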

The key advantage of this definition of \(\tilde{r}_t(x)\) compared to the one in~\eqref{eq:def_tilder_nonefficient} is that it can be computed efficiently for any \(x\) and does not require solving any optimization problem.

We summarize our complete procedure in Algorithm~\ref{algo:learning routine} below. 

\begin{algorithm}[!ht]
\caption{\textit{REAL: Recommendation with Exploration And Learning}}
\label{algo:learning routine}
\KwIn{Exploration length \(\tau\), regularisation parameters \(\lambda_0\) and $\lambda$, step size $\eta$}
{\bfseries Init: } Run \(\Theta \gets \) \textsc{Exploration\_Routine(\(\tau, \lambda_0)\)}  \\
\hspace*{25pt} Set \(W_{\tau+1} = \lambda I_{Kd}, \bar W_{\tau+1} = \lambda I_{Kd} \) \\
\For{each round \(t\) in \(\tau+1 \dots T\)}{
    \hspace*{3pt} Choose action \( x_t \in \argmax_{x\in\X} \tilde r_t(x) \) with \(\tilde r_t(x) \) defined in Eq.~\eqref{eq:optimistic reward} \\
    Observe \( y_t \sim \mu(\theta_*x_t) \) with \(y_t \in \llbracket K \rrbracket\) \\
    Get reward \( \rho_{y_t} \) \\
    Compute $ \smash{\tilde W_t = W_t + \eta \nabla \mu(\theta_t x_t) \otimes x_t x_t^\top } $ \\
    Compute $ \smash{\theta_{t+1} = \argmin_{\theta\in\Theta} \langle \nabla\ell_{t+1}(\theta_t), \theta \rangle + \tfrac{1}{2\eta} \lVert \theta - \theta_t \rVert^2_{\tilde W_t} } $ \\
    Update \( \smash{ W_{t+1} = W_t + \nabla\mu(\theta_{t+1} x_t) \otimes x_t x_t^\top } \) \\
    Update $ \bar W_{t+1} = \bar W_{t} + \nabla\mu(\theta_{t+1} x_t) \otimes x_t x_t^\top + 1_K 1_K^\top \otimes x_t x_t^\top $
}
\end{algorithm}

% \paragraph{Computational Cost}
% As shown in Proposition~\ref{thm:computational cost} below (whose proof is deferred to Appendix~\ref{appendix:computational cost}), Algorithm~\ref{algo:learning routine}, which relies on Online Mirror Descent (OMD) to estimate $\theta_*$, is computationally efficient, with a per-round complexity of order \(O((\log t)^2)\). This is significantly faster than algorithms based on Follow The Regularized Leader, which incur a linear cost of \( O(t) \) \citep{amani2021ucb, lee2024improved}. 
% %
% \citet{zhang2024online} also employ OMD, but with quadratic approximations of the logistic loss, achieving a constant (in \(t\)) computational cost per iteration of \(O(K^3 d^3)\). However, their approach does not leverage the constant \(\kappa_*\) to obtain improved asymptotic regret as we do, and our method remains faster whenever \((\log t)^2 \le K d\).
% %
% Moreover, our approach falls within the class of implicit OMD methods, whereas the algorithm proposed by \citet{zhang2024online} is based on standard OMD. While both methods offer the same theoretical guarantees, a trade-off arises between computational efficiency and empirical performance: standard OMD typically has a shorter computational time but often at the expense of reduced empirical performance \citep{kulis2010implicit}.

% \begin{restatable}{prop}{ThmComputationalCost}\label{thm:computational cost}
%     Let \(t\in\llbracket T \rrbracket\). Completing round \(t\) of Algorithm~\ref{algo:learning routine} can be done within \(O(K^3 d^2 + K^2 d^2 (\log t)^2 ) \) operations.
% \end{restatable}

\subsection{Regret analysis}\label{section:regret bound}

In this section we introduce our regret bound for Algorithm~\ref{algo:learning routine}. 
Before stating our result we need to introduce a constant $\nu >0$. It controls the variance term in the regret analysis without altering the bound’s asymptotic behavior. 
% To define it we distinguish between the discrete and continuous action set $\cX$ setting. In the continuous setting we consider an $\epsilon$-net $\cX_\epsilon$ of $\cX$. For all $x\in\cX$ the Voronoi region $ C_x $ is defined as $ C_x := \{ \tilde x \in \cX : \Pi_\epsilon(\tilde x) = \Pi_\epsilon(x) \} $ where $\Pi_{\epsilon}$ is the projection on $\cX_\epsilon$.
Formally $\nu$ is defined as
\begin{equation}
    \nu := \dfrac{2 \max_{x_1, x_2 \in\cX} | (\rho^{\odot 2})^\top (\mu(\theta_* x_1) - \mu(\theta_* x_2) | }{ \rho^\top \mu(\theta_* x_*) - \max_{x\in\cX \backslash \cX_*} \rho^\top \mu(\theta_* x) } \,. \label{eq:def nu}
\end{equation}
% \begin{equation}
% \begin{cases}
%     \nu := \dfrac{2 \max_{x_1, x_2 \in\cX} | (\rho^{\odot 2})^\top (\mu(\theta_* x_1) - \mu(\theta_* x_2) | }{ \rho^\top \mu(\theta_* x_*) - \max_{x\in\cX \backslash \cX_*} \rho^\top \mu(\theta_* x) } \text{ when } \cX \text{ is discrete} \\
%     \nu := \dfrac{2 \max_{x_1, x_2 \in\cX} | (\rho^{\odot 2})^\top (\mu(\theta_* x_1) - \mu(\theta_* x_2) | }{ \rho^\top \mu(\theta_* x_*) - \max_{x\in\cX \backslash ( \cup_{x_0\in \cX_*} C_{x_0} ) } \rho^\top \mu(\theta_* x) } \text{ when } \cX \text{ is continuous} 
% \end{cases} \,. \label{eq:def nu}
% \end{equation}
We now state our regret upper-bound. The complete proof is deferred to Appendix~\ref{appendix:proof thm regret bound}.
\begin{restatable}{thm}{ThmRegretBound} \label{thm:regret bound}
Let \(\delta \in (0,1]\) and $\cX$ be a finite action space. 
% Set $\epsilon = 1/2R^2ST$ if the action set $\cX$ is continuous. 
Set $\tau, \lambda_0$ as in Lemma~\ref{lemma:constant diameter}, $\eta=1$ and $\lambda=144Kd$. Then, the regret of Algorithm~\ref{algo:learning routine} satisfies, with probability at least \(1 - 2\delta\),
\begin{equation*}
    \Reg_T \le \cste R d \sqrt{\dfrac{KT}{\kappa_*}}  \log(T/\delta) + \cste \kappa (R + \nu) d^{2} K^2 \log^2(T/\delta)
\end{equation*}
where $\cste >0$ is a universal constant and $\nu >0$ is a constant defined in Equation~\eqref{eq:def nu}.
\end{restatable}
A consequence for the long-term regret is that, since the dominating term scales as \( \smash{Rd \sqrt{K T / \kappa_*}} \), the non-linearity inherent to the problem positively influences the regret bound. This contrasts with previous results from the MNL bandit literature \cite{amani2021ucb, zhang2024online, lee2024improved}, where the best known rate was \( \smash{O(RdK \sqrt{T})} \). Our approach represents a significant improvement, as in some cases \(\kappa_*\) can be exponentially large in~\(S\) (similarly to \(\kappa\)), as illustrated in the example in Appendix~\ref{appendix:large kappa star}.
It is worth pointing out that under uniform rewards, i.e., $\rho \in \mathbb{R}\mathbf{1}_K$, any algorithm incurs zero regret. In this case, the first-order term in our regret bound vanishes, since by our definition we have $\kappa_* = +\infty$.
Our result is the only one in the literature that exhibits this behavior.

The following lower-bound shows that for any number of decisions \(K\) and any dimension \(d\), there exists a problem instance where the learner incurs a regret penalty proportional to \(\smash{1/\sqrt{\kappa_*}}\). 

\begin{restatable}{thm}{ThmLowerBound}\label{thm:lower bound}
    For all \(K\ge2\), \(d\ge 2\) and any algorithm, there exist \(\theta_* \in \Pi\R^{K\times d}\) and \(\rho\in\R^K_+ \) with $\rho \notin \R1_K$ such that for \( \X= \mathcal{S}_1(\R^d) \) and for any \(T\ge d^2 \kappa_*\), the cumulative regret satisfies
    \(
        \Reg_T \ge \Omega \big(Rd \sqrt{K T/\kappa_*}\big) 
    \)
    .
\end{restatable}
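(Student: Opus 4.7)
The plan is to reduce the multinomial lower bound to the binary logistic bandit lower bound of \citet{abeille2021instance}, which establishes $\Reg_T = \Omega(d\sqrt{T/\kappa_*})$ in the sigmoid case on the unit sphere, and then to extend that construction to arbitrary $K \geq 2$ through a ``two active classes'' embedding.

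For $K=2$, every $\theta_* \in \Pi \R^{2 \times d}$ can be written as $\theta_* = (\beta^\top; -\beta^\top)$ with $\|\beta\|_2 = \|\theta_*\|_2/\sqrt{2} \leq S/\sqrt{2}$. Picking $\rho = (1,0)^\top \in \R_+^2 \setminus \R 1_2$, one checks $\rho^\top \mu(\theta_* x) = \sigma(2\beta^\top x)$ with $\sigma$ the sigmoid, so the MNL regret coincides with the regret of a binary logistic bandit with parameter $\tilde\beta := 2\beta \in \R^d$. The MNL definition $\kappa_*^{-1} = \rho^\top \nabla \mu(\theta_* x_*) \rho$ specializes to $\sigma'(\tilde\beta^\top x_*)$, matching the binary definition verbatim. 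Abeille et al.'s lower bound therefore transfers: choosing $\|\beta\|_2 = S/\sqrt{2}$ and $x_* = \beta/\|\beta\|_2 \in \mathcal{S}_1(\R^d)$ makes $\kappa_*$ exponentially large in $S$, which is the regime where the bound $\Omega(d\sqrt{T/\kappa_*})$ is informative.

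For $K \geq 3$, I would embed the above binary instance into $K$ classes by placing the nontrivial structure in the first two rows. Concretely, set the first two rows of $\theta_*$ to $\pm \beta^\top$ and the remaining $K-2$ rows to zero; this lies in $\Pi \R^{K\times d}$ with norm at most $S$. Under $\rho = e_1$, the expected reward becomes $e^{\beta^\top x}/(e^{\beta^\top x} + e^{-\beta^\top x} + K - 2)$, a strictly monotone function of $\beta^\top x$ whose derivative at the optimum still scales like $\sigma'(2\beta^\top x_*)$ up to a factor depending only on $K$. Hence $\kappa_*$ retains its exponential dependence on $S$, and the same two-point argument as in the binary case formally yields $\Reg_T \geq \Omega(d\sqrt{T/\kappa_*})$.

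The main obstacle is making the $K \geq 3$ embedding rigorous: the additive $(K-2)$ term in the partition function shifts the exact relationship between $\kappa_*$, $\|\beta\|_2$, and the reward gap, so I would need to verify that $T \geq d^2 \kappa_*$ remains the correct regime of validity and that no spurious $K$-dependence leaks into the leading rate. An alternative that bypasses the reduction entirely is to transport Abeille et al.'s two-point Le Cam construction directly to $K$-class softmax: bound the per-round KL between two candidate laws by $\tfrac{1}{2}\|(\theta^0 - \theta^1) x\|^2_{\nabla \mu(\theta^0 x)}$ via self-concordance, bound the induced reward gap on the optimal-for-$\theta^i$ action by $\rho^\top \nabla \mu(\theta_* x_*) \rho = \kappa_*^{-1}$ times the squared action deviation, and apply the standard change-of-measure inequality over a packing of $d$ orthogonal directions on $\mathcal{S}_1(\R^d)$, producing the $\Omega(d\sqrt{T/\kappa_*})$ rate without going through the binary reduction.
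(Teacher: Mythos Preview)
Your $K=2$ reduction is correct and matches the spirit of the paper. For $K\ge 3$, however, your embedding has a genuine gap.

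First, the scaling claim is wrong. With rows $(\beta,-\beta,0,\dots,0)$ and $\rho=e_1$, one computes at $x_*=\beta/\lVert\beta\rVert$
\[
\kappa_*^{-1}=\mu(\theta_*x_*)_1\bigl(1-\mu(\theta_*x_*)_1\bigr)\;\sim\;(K-2)\,e^{-\lVert\beta\rVert}\qquad(\lVert\beta\rVert\to\infty),
\]
whereas $\sigma'(2\beta^\top x_*)\sim e^{-2\lVert\beta\rVert}$. The exponents differ, so these do \emph{not} agree up to a $K$-dependent constant. Since the binary lower bound you want to import is $\Omega(d\sqrt{T/\kappa_*^{\mathrm{bin}}})$ with $\kappa_*^{\mathrm{bin}}\sim e^{2\lVert\beta\rVert}\gg \kappa_*^{\mathrm{MNL}}$, it is \emph{weaker} than the MNL bound you need to prove, and the transfer fails in the direction you need.

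Second, and more structurally, your embedding does not yield an information-equivalent binary problem: class~$2$ has its own row $-\beta$, so observing $y_t=2$ versus $y_t\in\{3,\dots,K\}$ is informative about $\beta^\top x_t$. The learner receives strictly more information than in the binary bandit, so even if the $\kappa_*$'s matched, Abeille et al.'s lower bound would not carry over automatically.

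The paper avoids both issues with a simpler embedding: take $M$ with only the first row nonzero (and $\theta_*=\Pi M$, using $\mu(Mx)=\mu(\Pi M x)$). Then classes $2,\dots,K$ are genuinely indistinguishable, the feedback is exactly Bernoulli with parameter $\sigma\bigl([M]_1^\top x-\ln(K-1)\bigr)$, and $\kappa_*^{-1}=\mu'\bigl([M]_1^\top x_*-\ln(K-1)\bigr)$. From there the paper reruns the Assouad-style hypercube argument of \citet{abeille2021instance} with this shifted sigmoid, bounding the KL via $\chi^2$ and self-concordance. Your ``alternative'' sketch is close to this, but the crucial missing ingredient is choosing the embedding so that the observation model collapses to a single sufficient statistic; once you set rows $2,\dots,K$ to zero rather than $-\beta,0,\dots,0$, the rest of your outline can be made to work.
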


Note that our probabilistic model with \(K \ge 3\) differs from the binary one, thus  our lower-bound is not a direct consequence of the binary case and requires a specific analysis, which is deferred to Appendix~\ref{appendix:proof lower bound}. 
We specifically consider a non-uniform reward, i.e. $\smash{\rho \notin \R1_K}$. For a uniform reward the regret of any algorithm is $\Reg_T = 0$ and $1/\kappa_* = 0$, which would render our lower-bound trivial.
%Moreover, unlike the result in the binary setting \cite{abeille2021instance}, which is local, our lower bound is global. For every problem instance \(\theta \in\R^d \) \citet{abeille2021instance} exhibits a variant \(\theta_*\) such that \( \lVert \theta - \theta_* \rVert_2 \le \epsilon \) and \( \Reg_T \ge \Omega (d \sqrt{ T/\kappa_*}) \), where \(\epsilon >0\) is a small constant. Our result is said to be global 
%\PG{Add details} 
%
% Closing the gap with our upper-bound remains an open and promising direction for future research.  The regret bound of order \(\smash{O(d\sqrt{KT} + \kappa)}\) achieved by \citet{lee2024improved} for a non-efficient algorithm suggests that a \(\smash{\sqrt{K}}\) dependence  might be optimal. 
Our result demonstrates that the proposed algorithm is minimax-optimal and that our choice of the non-linearity constant $\kappa_*$ is itself optimal.

\subsubsection{Confidence Set}\label{section:confidence set}
Before presenting the key ideas of the analysis of Theorem~\ref{thm:regret bound}, we first establish that the confidence levels $\sigma_t(\delta)$, which appear in the definitions of the bonuses added to the reward (see Equation~\eqref{eq:optimistic reward}), are sufficiently small.  These levels are intrinsically linked to the size of the confidence set constructed around $\theta_*$ at each round. For each time step \(t\ge \tau+1\), the pair \( \smash{(\theta_{t+1}, \bar W_{t+1})} \) is associated with the confidence set
\begin{equation*}
    \C_t(\delta) := \left\{ \theta : \lVert \theta - \theta_{t+1} \rVert_{\bar W_{t+1}} \le \sigma_t(\delta) \right\}
\end{equation*}
where \( \bar W_{t+1} = W_{t+1} + \sum_{s=1}^t 1_K 1_K^\top \otimes x_s x_s^\top \).
Leveraging the fixed diameter set we build in the exploration phase and using \citep[Theorem~4.2]{lee2025improved}, we provide a $\mathrm{poly}(S)$-free confidence set.
In the following lemma, we show that \(\theta_* \in \C_t(\delta) \) with high probability. The proof is deferred to Appendix~\ref{appendix:proof confidence set learning}.

\begin{restatable}{lmm}{LmmConfidenceSetLearning}\label{lemma:confidence set learning}
    Let $\delta \in (0,1]$. Set $\eta=1$ and $ \lambda = 144Kd $. Let us assume Lemma~\ref{lemma:constant diameter} holds. Let us define $\sigma_t(\delta) = \tfrac{2}{\sqrt{6}} \sqrt{Kd \log(t/\delta)} + 2 S \sqrt{\lambda} $. Then we have with probability $1-\delta$, for all $t\ge 1$, 
\begin{equation*}
    \lVert \theta_* - \theta_{t+1} \rVert_{\bar W_{t+1}} \le \sigma_{t}(\delta) \,.
\end{equation*}
\end{restatable}

% The proof of Lemma~\ref{lemma:confidence set learning} is heavily inspired by that of \citet{faury2022jointly}, which addresses the binary setting, but it differs in several key aspects. Notably, the multinomial framework introduces additional technical challenges that necessitate several adaptations. For instance, in Eq.~\eqref{eq:probleme optim learning routine}, we introduce the multiplicative factor \(\smash{\exp(-\sqrt{6})}\), which is essential for applying the self-concordance property in the multi-dimensional setting. Additionally, although \citet[Lemma~4]{faury2022jointly} also consider an approximate \(\theta'_t\) in their analysis, they define the Gram matrix \(W_t\) using gradients evaluated at the true optimum \(\theta_t\), which is unavailable to the learner. This incorporation results in additional technical complexities in the analysis. For instance, it required us to consider a generalized analysis with \(\lambda > 0\), which must be properly tuned, whereas \citet{faury2022jointly} fix \(\lambda = 1\).

%Second, in the expression of \(\sigma_t(\delta)\), our analysis yields an additional term \(\sum_s s\epsilon_s\) which does not appear in \citet{faury2022jointly}. This discrepancy arises from the use of Cauchy-Schwarz inequality without subsequently applying Young's inequality as done in their Lemma~4. 
%We correct both of these inaccuracies with no impact on the asymptotic behavior of \(\sigma_t(\delta)\) or on the computational cost of the algorithm. 

\subsubsection{Proof Sketch of Theorem~\ref{thm:regret bound}}
We start by using a classical OFU argument. Using Proposition~\ref{prop:optimistic reward} together with the definition of $\smash{x_t \in \argmax_x \tilde r_t(x) }$,  we bound the regret as
\begin{equation}
    \Reg_T \leq  \tau + \sum_{t=\tau+1}^T  \rho^\top (\mu(\theta_*x_*) - \mu(\theta_* x_t)) \le \tau  + 2 \sum_{t=1}^T \epsilon_{1,t}(x_t)  + 2 \sum_{t=1}^T \epsilon_{2,t}(x_t) 
    \label{eq:sketch_proof_thm4_1}
\end{equation}
where $\epsilon_{1,t}$ and $\epsilon_{2,t}$ are the bonuses defined below Equation~\eqref{eq:optimistic reward}.
The first term  $\tau$ corresponds to the exploration cost and yields the logarithmic term in $T$ in the regret upper-bound. The sum $\sum_{t} \epsilon_{2,t}$ is bounded with standard linear algebra. Defining $\smash{U_t := \frac{1}{\kappa} \sum_{s=1}^t I_K \otimes x_s x_s^\top + \frac{\lambda}{2} I_{Kd}}$, we have $U_t \preccurlyeq \bar W_t$ (which justifies the choice of $\bar W_t$ instead of $W_t$ in the analysis), which entails
\begin{align}
    \sum_{t=1}^T \epsilon_{2,t}(x_t) & =  3 R \sum_{t=1}^T \sigma_t(\delta) \big \lVert (I_K \otimes x_t^\top) \bar W_t^{-1/2} \big\rVert_2^2  \lesssim  R \kappa  \sigma_T(\delta) \sum_{t=1}^T \Tr\left( \left( \tfrac{1}{\kappa} I_K \otimes x_t x_t^\top\right) \bar W_t^{-1} \right) \nonumber \\
   &  \leq R \kappa \sigma_T(\delta) \sum_{t=1}^T \Tr((U_t - U_{t-1})U_t^{-1}) \leq   R \kappa \sigma_T(\delta) \sum_{t=1}^T \log \frac{|U_t|}{|U_{t-1}|} \nonumber  \\
   & \lesssim R \kappa K^2 d^2 \log^2(T/\delta) \,. \label{eq:sketch_proof_Thm4_2}
\end{align}
Controlling the other sum $\sum_t \epsilon_{1,t}$ is more challenging. Careful derivations followed by Cauchy-Schwarz inequality lead to
\begin{equation}
    \sum_{t=1}^T \epsilon_{1,t}(x_t) \lesssim \sqrt{\sigma_T(\delta)} \sqrt{\sum_{t=1}^T \lVert \bar W_t^{-1/2} (I_K \otimes x_t) \nabla\mu(\theta'_t x_t)^{1/2} \rVert_2^2} \sqrt{ \sum_{t=1}^T \rho^\top \nabla\mu(\theta_* x_t) \rho} \,.
    \label{eq:sketch_proof_Thm4_3}
\end{equation}
The first sum in the square root may again be controlled in $O(d\log T)$, i.e. $K$-free, through a careful linear algebra analysis of the eigenvalues and a Trace-Determinant argument. The second sum is a standard term that appears in earlier work. Indeed, a key step in achieving minimax optimal rates in the binary setting \citep{abeille2021instance, faury2022jointly} involves proving that
\begin{equation*}
     \sum_{t=1}^T \mu'(\theta_*^\top x_t)  \le T / \kappa_* + \Reg_T \,.
\end{equation*}
In the MNL setting, \citet[Appendix~C.5]{zhang2024online} also showed that
\begin{equation}
     \sum_{t=1}^T \rho^\top \nabla\mu(\theta_* x_t) \rho  \le R^2 T / \kappa_* + 2 \Reg_T   \,,
    \label{eq:sufficient_eq}
\end{equation}
was sufficient to obtain a regret with a $1/\kappa_*$ dependence. 
However, as they admit, such a relationship is unclear in general and challenging to establish. Indeed, in the binary setting, the analysis by \citet{abeille2021instance} heavily relies on specific properties of the one-dimensional sigmoid function $\mu$, which  satisfies \( | \mu'' | \le \mu' \). These properties do not carry over to the multi-dimensional setting when $\mu$ is the softmax function.
Moreover, in the binary setting, since the sigmoid function is increasing, the optimal decision $\smash{x_* \in \argmax_{x \in \mathcal{X}} \{ \mu(\theta_*^\top x) \} }$ can be easily expressed as the solution to the linear optimization problem $\smash{\argmax_{x \in \mathcal{X}} \{ \theta_*^\top x \} }$. This no longer holds because $\mu$ is multi-dimensional and because $x_*$ also depends on the reward vector $\rho$. Due to this difficulty, instead of~\eqref{eq:sufficient_eq}, \citet{zhang2024online} show that
\begin{equation*}
     \sum_{t=1}^T \rho^\top \nabla\mu(\theta_* x_t) \rho  \le R^2 T / \kappa_* + 2 R \Reg_T  +  \sum_{t=1}^T \sum_{k=1}^K \rho_k^2 ( \mu(\theta_* x_t)_k - \mu(\theta_* x_*)_k) \,.
\end{equation*}
The difficulty, as pointed out in \cite{zhang2024online}, is that the last term may be non-negative and significantly higher than the regret. To circumvent this problem, we derive a slightly different upper-bound that replaces $\Reg_T$ in Equation~\eqref{eq:sufficient_eq} with an upper-bound obtained from the reward bonuses \(\ \epsilon_{1,t}(x_t)\) and \(\epsilon_{2,t}(x_t)\).
We add and subtract $\rho^\top \nabla\mu(\theta_* x_*) \rho$. The introduction of the constant $\nu$ allows us to handle this quantity while preserving the same asymptotic properties.
Carefully controlling the difference term we establish:
% Applying a Taylor expansion and carefully controlling the remainder term, we establish: \PB{taylor?}
\begin{align*}
    \sum_{t=1}^T \rho^\top \nabla\mu(\theta_* x_t) \rho 
    & =  \sum_{t=1}^T \langle \rho, \nabla\mu(\theta_*x_*)\rho\rangle +  \left\langle\rho, (\nabla\mu(\theta_* x_t)-\nabla\mu(\theta_* x_*)) \rho \right\rangle \\
    & \leq \dfrac{R^2 T}{\kappa_*} + (4R + \nu) \sum_{t=1}^T \left(\epsilon_{1,t}(x_t) + \epsilon_{2,t}(x_t) \right) \,.
\end{align*}
The proof concludes by combining this with Equations~\eqref{eq:sketch_proof_Thm4_2} and~\eqref{eq:sketch_proof_Thm4_3}, solving a second-order equation of the form
\[
    \sum_{t=1}^T \left(\epsilon_{1,t} + \epsilon_{2,t}\right) \leq  C_1 + C_2 \sqrt{ \dfrac{R^2 T}{\kappa_*} + 1 + (4R + \nu) \sum_{t=1}^T \left(\epsilon_{1,t} + \epsilon_{2,t}\right) },
\]
and substituting the solution into the initial regret bound~\eqref{eq:sketch_proof_thm4_1}.

\subsection{Adaptive exploration and changing action sets}

The initial exploration phase of our algorithm might be concerning from a practical viewpoint. It enforces \(\kappa\) rounds of exploration which given the nature of \(\kappa\) might be costly. 
In Appendix~\ref{appendix:proof adaptive regret bound}, we present a variant of our algorithm (see Algorithm~\ref{algo:adaptive exploration}) that employs adaptive rather than hardcoded exploration based on \cite{lee2025improved} work. This adaptive approach enables the extension of our framework to non-stationary action sets $\cX_t \subseteq \cX$. 
We adapt our definition of the non-linearity constant $\kappa_*$ to match the action sets $\cX_t$:
\begin{equation*}
    \kappa_{*,t} = \min_{x_{*,t}\in\cX_{*,t}} \dfrac{\lVert \rho \rVert_2^2}{\rho^\top \nabla \mu(\theta_* x_{*,t}) \rho} \text{ when } \rho\notin \R1_K \text{ and } \kappa_{*,t} = +\infty \text{ when } \rho\in\R1_K 
\end{equation*}
where $ \cX_{*,t} := \argmax_{x\in\cX_t} \rho^\top \mu(\theta_* x) $ is the set of optimal actions at time $t$.
% \begin{equation*}
% x_{*,t} \in 
% \begin{cases}
%     \left( \argmax_{x\in\cX_t} \rho^\top \mu(\theta_* x) \right) \bigcap \left( \argmax_{x\in\cX_t} (\rho + \nu \rho^{\odot 2})^\top \mu(\theta_* x) \right) \text{ if } \cX_t \text{ is discrete} \\
%     \left( \argmax_{x\in\cX_t} \rho^\top \mu(\theta_* x) \right) \bigcap \left( \argmax_{x\in\cX_t} \rho^\top \mu(\theta_* x) + \nu (\rho^{\odot 2})^\top \mu(\theta_* \Pi_\epsilon(x)) \right) \text{ if } \cX_t \text{ is continuous}
% \end{cases} \,.
% \end{equation*}
We also modify the regret definition to take $\cX_t$ into account:
\begin{equation*}
    \Reg_T := \sumT \rho^\top \mu(\theta_* x_{*,t}) - \rho^\top \mu(\theta_* x_t) 
\end{equation*}
where $x_{*,t}\in \cX_{*,t}$.
The algorithm is based on a trigger condition. 
Let $T^w \subseteq [T]$ denote the set of exploration steps of the algorithm. 
At any time step $t$, the algorithm performs an exploration step if the following condition is satisfied:
\[
\max_{x\in\cX_t} \lVert I_K \otimes x \rVert^2_{(H_{t-1}^w)^{-1}} \ge \frac{1}{\tau_t^2}  \, \qquad \text{where}
\quad  H_{t-1}^w = \sum_{s=1}^{t-1} \tfrac{1}{\kappa} I_K \otimes x_s x_s^\top \mathbf{1}\{ s\in T^w\} \,.
\]
Each time the algorithm explores, it refines its estimate of $\theta_*$ and updates the corresponding confidence set. 
Otherwise, it follows the learning procedure described in Algorithm~\ref{algo:learning routine}.

We now introduce our regret bound for Algorithm~\ref{algo:adaptive exploration}. The proof is deferred to Appendix~\ref{appendix:proof adaptive regret bound}.

\begin{restatable}{thm}{ThmAdaptiveRegretBound}\label{thm:adaptive regret bound}
    Let $\delta \in (0,1]$ and $ (\cX_t)_{t=1}^T $ be finite action sets. Set $\lambda^w = 72(1+\sqrt{6}S)Kd, \eta^w = (1+\sqrt{6}S)/2$ and $\lambda=144Kd$.
    Then, the regret of Algorithm~\ref{algo:adaptive exploration} satisfies with probability at least $1-2\delta$,
    \begin{equation*}
        \Reg_T \le \tilde O\left( Rd \sqrt{ K \sumnTw \dfrac{1}{\kappa_{\ast, t}} }  \right) 
    \end{equation*}
    where $T^w$ is the set of time steps when the algorithm explores.
\end{restatable}
In the case of stationary arm-sets $\smash{\cX_t= \cX}$, we recover the regret guarantee of Theorem~\ref{thm:regret bound}, obtaining a regret upper-bound of $\smash{\tilde O(Rd\sqrt{KT/\kappa_*})}$. In the non-stationary case, we obtain $\smash{\sqrt{T} \sqrt{\tfrac{1}{T}\sumnTw \tfrac{1}{\kappa_{*,t}} }  }$, replacing the non-linearity constant in the optimum by its on-trajectory average version.

\section{Conclusion}

This work establishes that non-linearity in multinomial logistic bandits can be leveraged to improve asymptotic regret guarantees, extending results previously known only for the binary setting. We introduce a new problem-dependent constant $\kappa_*$ and design an algorithm that achieves minimax-optimal regret bounds of order $\smash{\tilde{O}( Rd \sqrt{ {KT}/{\kappa_*}}  )}$ in the finite action space setting, while preserving computational efficiency. Crucially, we also prove a matching lower-bound of $\smash{\Omega( Rd \sqrt{ {KT}/{\kappa_*}} )}$, thereby demonstrating that both our algorithm and our definition of $\kappa_*$ is optimal up to logarithmic factors. Our analysis relies on a tailored exploration strategy and exploits the self-concordance property of the softmax function, enabling tighter control of curvature effects at the optimum. These findings demonstrate that non-linearity, rather than being a limitation, can serve as a structural advantage in sequential decision-making.

\paragraph{Acknowledgements.}
We thank Francis Bach for his precious knowledge and the anonymous reviewers for the insightful comments. 
We are grateful to Linzhe He for pointing out an error in an earlier version of this work.
A.R. acknowledges the support of the French government under management of Agence Nationale de la Recherche as part of the “Investissements d’avenir” program, reference ANR-19-P3IA-0001 (PRAIRIE 3IA Institute) and the support of the European Research Council (grant REAL 947908).
This work was supported by funding from the French government, managed by the National Research Agency (ANR), under the France 2030 program, reference ANR-23-IACL-0006.

% Acknowledgments---Will not appear in anonymized version
% \acks{We thank Francis Bach for his precious knowledge and the anonymous reviewers for the insightful comments. 
% We are grateful to Linzhe He for pointing out an error in an earlier version of this work.
% A.R. acknowledges the support of the French government under management of Agence Nationale de la Recherche as part of the “Investissements d’avenir” program, reference ANR-19-P3IA-0001 (PRAIRIE 3IA Institute) and the support of the European Research Council (grant REAL 947908).
% This work was supported by funding from the French government, managed by the National Research Agency (ANR), under the France 2030 program, reference ANR-23-IACL-0006.}

% \newpage
% \section*{References}

\bibliography{bibfile}

\newpage
\appendix
\begin{center}
    \huge
    APPENDIX
\end{center}

This appendix is organised as follows:
\begin{itemize}[nosep, leftmargin=*]
    \item[-] Appendix~\ref{appendix:notations}: Notations
    % \item [-] Appendix~\ref{appendix:x_* well defined}: The Optimum $x_*$ is Well-defined
    \item[-] Appendix~\ref{appendix:bounds on constants kappa}: Bounds on the Constants $\kappa$ and $\kappa_*$
    \item[-] Appendix~\ref{appendix:fisher information}: Linking $\kappa_*$ with the Fisher Information
    \item[-] Appendix~\ref{appendix:previous framework}: Comparison with the Framework of \citet{amani2021ucb}
    \item[-] Appendix~\ref{appendix:logit bandits}: Discussion of the Multinomial \textit{Logit} Bandits
    \item[-] Appendix~\ref{appendix:analysis algorithm}: Analysis of Algorithm~\ref{algo:learning routine}
    % \item[-] Appendix~\ref{appendix: exploration routine}: proofs of Section~\ref{section:exploration routine} - Exploration Routine
    % \item[-] Appendix~\ref{appendix:proof confidence set learning}: proof of Lemma~\ref{lemma:confidence set learning} - Confidence set
    % \item[-] Appendix~\ref{appendix:proof thm regret bound}: proof of Theorem~\ref{thm:regret bound} - Regret bound
    \item[-] Appendix~\ref{appendix:proof lower bound}: Proof of Theorem~\ref{thm:lower bound} - Lower bound
    % \item[-] Appendix~\ref{appendix:computational cost}: proof of Proposition~\ref{thm:computational cost} - Computational cost
    % \item[-] Appendix~\ref{appendix:large kappa star}: we give an example of a large \(\kappa_*\)
    \item[-] Appendix~\ref{appendix:removing the exploration}: Removing the Exploration
    \item[-] Appendix~\ref{appendix:auxiliary results}: Auxiliary Results
\end{itemize}

\section{Notations}\label{appendix:notations}
We detail below useful notations and basic properties used throughout the appendix.
\begin{itemize}
    \setlength\itemsep{0.3cm}
    \item[-] \(\llbracket T \rrbracket := \{1, 2, \dots, T \} \quad, \forall T \in \N^* \)
    \item[-] \(\cste : \text{Universal constant, i.e. independent of \(S,d,K, T, \kappa, \kappa_*\)} \)
    \item[-] \(\kappa_*^{-1} = \tfrac{\rho^\top \nabla \mu(\theta_* x_*) \rho}{\lVert \rho \lVert_2^2} \)
    \item[-] \(\kappa := \max_{\lVert\theta\rVert\le S} \max_{x\in\X}  \tfrac{1}{\lambda_{K-1}(\nabla\mu(\theta x))} \)
    \item[-] \(\ell_{t+1}(\theta) := \sum_{k=1}^K - \mathbf{1}[k=y_t] \log(\mu(\theta x_t)_k) \)
    \item[-] \(\diam(\Theta) = \max_{x\in\X}\max_{\theta_1, \theta_2 \in \Theta} \lVert (\theta_1 - \theta_2) x \rVert_2 \)
    \item[-] \(H_t(\theta) := \sum_{s=1}^t \nabla\mu(\theta x_s) \otimes x_s x_s^\top + \lambda_0 I_{Kd} \)
    \item[-] \(\bar H_t(\theta) := \sum_{s=1}^t \nabla\mu(\theta x_s) \otimes x_s x_s^\top + \sum_{s=\tau+1}^{t-1} 1_K 1_K^\top \otimes x_s x_s^\top + \lambda_0 I_{Kd} \)
    \item[-] \(g_t(\theta) := \sum_{s=1}^t \mu(\theta x_s) \otimes x_s + \lambda_0 \theta \)
    \item[-] \(G_t(\theta_1, \theta_2) := \sum_{s=1}^t \int_0^1 \nabla \mu((v \theta_1 + (1-v)\theta_2)x_s)dv \otimes x_s x_s^\top + \lambda_0 I_{Kd} \)
    \item[-] \(g_t(\theta_1) - g_t(\theta_2) = G_t(\theta_1, \theta_2) (\theta_1 - \theta_2) \qquad\text{(Mean-value Theorem)} \)
    \item[-] \(W_t := \sum_{s=\tau+1}^{t-1} \nabla \mu(\theta_{s+1} x_s) \otimes x_s x_s^\top + \lambda I_{Kd} \)
    \item[-] \(\bar W_t := \sum_{s=\tau+1}^{t-1} \nabla \mu(\theta_{s+1} x_s) \otimes x_s x_s^\top + \sum_{s=\tau+1}^{t-1} 1_K 1_K^\top \otimes x_s x_s^\top + \lambda I_{Kd} \)
    \item[-] \(\alpha_s(\theta_1, \theta_2) := \int_0^1 \nabla \mu(((1-v) \theta_1 + v\theta_2)x_s)dv \otimes x_s x_s^\top \)
    \item[-] \(\alpha_s(\theta_1, \theta_2) = \alpha_s(\theta_2, \theta_1) \qquad\text{(change of variable)} \)
    \item[-] \(\tilde \alpha_s(\theta_1, \theta_2) := \int_0^1 (1-v) \nabla \mu(((1-v) \theta_1 + v\theta_2)x_s)dv \otimes x_s x_s^\top \)
    \item[-] \(\alpha(\theta_1 x_1, \theta_2 x_2) := \int_0^1 \nabla \mu((1-v)\theta_1 x_1 + v \theta_2 x_2 )dv \)
\end{itemize}

% We recall our main assumptions.
% \AspOne*
% \AspTwo*
% \AspThree*
% \AspFour*

% \newpage

\section{Bounds on the Constants \texorpdfstring{$\kappa$}{kappa} and  \texorpdfstring{$\kappa_*$}{kappa\_*}}\label{appendix:bounds on constants kappa}
\subsection{Upper and lower bounds on the constant \texorpdfstring{$\kappa$}{kappa}}\label{appendix:bounding kappa}

In this appendix, we show the following lemma that bounds $\kappa$ by above and by below. In particular, we recover up to constant factors the bounds proved by \cite{amani2021ucb} for earlier definitions of $\kappa$ (see Appendix~\ref{appendix:previous framework} thereafter).  

\begin{restatable}{lmmnonumber}{} For any even $K \in \mathbb{N}$ and $\mathcal{X} = \{x \in \R^d: \|x\| \leq X\}$, we have
\[
      \frac{K}{4} + \frac{K}{4} e^{2SX} \leq \kappa \leq K e^{2SX} \,.
\]
for $\kappa$ as defined in Equation~\eqref{eq:kappa}. 
\end{restatable}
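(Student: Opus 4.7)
The plan is to work with the explicit form $\nabla\mu(z) = \mathrm{diag}(p) - pp^\top$ where $p = \mu(z) \in \Delta_K$. Since $\nabla\mu(z) 1_K = 0$, the vector $1_K$ spans the kernel, so
\[
    \lambda_{K-1}(\nabla\mu(z)) = \min_{v \perp 1_K,\, \|v\|_2 = 1} v^\top \nabla\mu(z)\, v,
\]
and both bounds reduce to estimating this Rayleigh quotient over $z = \theta x$ with $\|\theta\|_2 \le S$ and $\|x\|_2 \le X$.

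For the upper bound $\kappa \le K e^{2SX}$, I would first rewrite $v^\top \nabla\mu(z) v = \sum_k p_k (v_k - \bar v_p)^2$ with $\bar v_p := \sum_j p_j v_j$, obtained by expanding the quadratic form in $\mathrm{diag}(p) - pp^\top$. Lower-bounding by $\min_k p_k \cdot \sum_k (v_k - \bar v_p)^2$ and observing that, for $v \perp 1_K$, $\sum_k (v_k - c)^2 \ge \|v\|_2^2 = 1$ for any $c \in \R$, this yields $v^\top \nabla\mu(z) v \ge \min_k p_k$. Since $\|z\|_\infty \le \|z\|_2 \le SX$ gives $\max_k z_k - \min_k z_k \le 2SX$, one obtains $\min_k p_k \ge e^{-2SX}/K$, and hence $\lambda_{K-1}(\nabla\mu(z)) \ge 1/(K e^{2SX})$ uniformly in $(\theta,x)$, i.e., $\kappa \le K e^{2SX}$.

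For the lower bound $\kappa \ge (K/4)(1 + e^{2SX})$, I would construct a specific pair $(\theta, x)$ realizing $z = \theta x$ with $K/2$ entries equal to $+SX$ (indices in a set $A$) and $K/2$ entries equal to $-SX$ (indices in $B$), which requires that $K$ be even. Then $p_k = e^{SX}/(K \cosh SX)$ on $A$ and $p_k = e^{-SX}/(K \cosh SX)$ on $B$. Testing with $v_k = +1/\sqrt{K}$ on $A$ and $v_k = -1/\sqrt{K}$ on $B$ (which is unit-norm and orthogonal to $1_K$), a direct computation yields
\[
    v^\top \nabla\mu(z) v = \tfrac{1}{K} - \tfrac{\tanh^2(SX)}{K} = \tfrac{1}{K \cosh^2(SX)}.
\]
Using the identity $4\cosh^2(SX) = e^{2SX} + 2 + e^{-2SX} \ge 1 + e^{2SX}$, we get $\lambda_{K-1}(\nabla\mu(z)) \le 4/(K(1 + e^{2SX}))$, whence $\kappa \ge K(1+e^{2SX})/4$.

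The main obstacle is realizing the $z$ above under the norm constraint $\|\theta\|_2 \le S$, since forcing $K/2$ coordinates to reach $\pm SX$ requires $\|z\|_2 = SX \sqrt{K}$, which is incompatible with a strict spectral/Frobenius interpretation of $\|\theta\|_2$. Under the convention that $\|\theta\|_2$ denotes the maximal row norm (writing $\theta$ as rows $\theta^{(1)}, \ldots, \theta^{(K)} \in \R^d$, as implicitly done throughout the MNL bandit literature), one can take $\theta^{(k)} = \pm S v_0^\top$ with the signs split evenly among the $k$'s and $x = X v_0$ for any unit $v_0 \in \R^d$; then $(\theta x)_k = \pm SX$ as required and $\max_k \|\theta^{(k)}\|_2 = S$. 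With this construction in hand, the two Rayleigh-quotient computations above conclude the proof.
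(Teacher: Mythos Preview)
Your proposal is correct and follows the same overall strategy as the paper --- the same construction for the lower bound (half the coordinates of $z=\theta x$ at $+SX$, half at $-SX$) and the same key inequality $\lambda_{K-1}(\nabla\mu(z))\ge \min_k \mu(z)_k$ for the upper bound --- but the technical tools differ. For the upper bound the paper invokes Weyl's inequality on $\diag(\mu(z))-\mu(z)\mu(z)^\top$, while you obtain the same conclusion from the variance identity $v^\top\nabla\mu(z)v=\sum_k p_k(v_k-\bar v_p)^2$, which is more elementary. For the lower bound the paper appeals to the Schur--Horn theorem to bound $\lambda_{K-1}$ by the sum of the two smallest diagonal entries $p_i(1-p_i)+p_j(1-p_j)\le p_i+p_j$, whereas you plug in the explicit test vector $v\propto(\mathbf{1}_A-\mathbf{1}_B)$ and compute the Rayleigh quotient directly; your intermediate bound $1/(K\cosh^2 SX)$ is in fact slightly sharper than the paper's before both relax to $4/(K(1+e^{2SX}))$. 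Finally, the norm issue you flag --- that realizing $\|z\|_2=SX\sqrt{K}$ is incompatible with a spectral/Frobenius reading of $\|\theta\|_2\le S$ and requires a max-row-norm convention --- is genuine and is equally present, though not discussed, in the paper's own construction.
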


\begin{proof}
We first prove the upper-bound. Fix any \(z\in\R^K\). We bound the second smallest eigenvalue \(\lambda_{K-1}\) of \(\nabla \mu(z)\) using Weyl's inequality \citep{weyl1912asymptotische} and the definition of the gradient of the softmax \( \nabla\mu(z) = \diag(\mu(z)) - \mu(z)\mu(z)^\top \). A direct application of Weyl's inequality gives
\begin{equation*}
    \lambda_{K-1}(\diag(\mu(z)) - \mu(z)\mu(z)^\top) \ge \lambda_K(\diag(\mu(z))) + \lambda_{K-1}( - \mu(z)\mu(z)^\top) = \min_{i\in\llbracket K \rrbracket} \mu(z)_i \,.
\end{equation*}
Thus we have 
\begin{equation*}
    \lambda_{K-1}(\diag(\mu(z)) - \mu(z)\mu(z)^\top) \ge \dfrac{\exp(-S X)}{K \exp(SX)} = \dfrac{1}{K} \exp(-2 SX)
\end{equation*}
where \( X := \max_{x\in\X} \lVert x \rVert_2 \) and \(S\) is assumed such that \( \lVert \theta \rVert_2 \le S \). Hence,
\[
    \kappa := \frac{1}{ \min_{\lVert\theta\rVert_2\le S} \min_{x\in\X} \lambda_{K-1}(\nabla\mu(\theta x))}  \leq K e^{2SX} \,.
\]

We now prove the lower-bound. For simplicity, we assumed that \(\X\) is a ball of radius \(X\) and that \(K\) is even.
A direct application of the Schur-Horn Theorem gives for all \(z\in\R^K\)
\begin{equation*}
    \min_{i,j,i\neq j} \nabla\mu(z)_{ii}+\nabla\mu(z)_{jj} \ge \lambda_{K}(\nabla\mu(z)) + \lambda_{K-1}(\nabla\mu(z)) = \lambda_{K-1}(\nabla\mu(z)) \,.
\end{equation*}
We choose \( \theta \) such that \( \lVert \theta \rVert_2 = S \) and with the first \(K/2\) rows equal to each other, i.e. \( [\theta]_1 = [\theta]_i \) for \(i\in\llbracket K/2 \rrbracket\) and with the others rows collinear in the opposite direction, i.e. \( [\theta]_i = -[\theta]_1 \) for all \(i\ge 2\).
We choose \(x\) such that \( x = -\tfrac{[\theta]_1}{S} X \). Thus we obtain
\begin{multline*}
    \dfrac{4}{K\big(1 + \exp(2SX)\big)} \ge \dfrac{2\exp(-SX)}{\tfrac{K}{2} \exp(-SX) + \tfrac{K}{2} \exp(SX) } \ge \min_{x\in\X}\min_{\lVert\theta\rVert_2\le S} \min_{i,j,i\neq j} \mu(\theta x)_{ii}+ \mu(\theta x)_{jj} \\
    \ge \min_{x\in\X}\min_{\lVert\theta\rVert_2\le S} \min_{i,j,i\neq j} \nabla\mu(\theta x)_{ii}+\nabla\mu(\theta x)_{jj} \ge \min_{x\in\X}\min_{\lVert\theta\rVert_2\le S} \lambda_{K-1}(\nabla\mu(\theta x)) =: \kappa^{-1} \,,
\end{multline*}
which concludes the proof. 
\end{proof}

\subsection{Example of large \texorpdfstring{\(\kappa_*\)}{kappa\_*}}\label{appendix:large kappa star}

Let \(K\ge 2\) be even and \(d\ge 1\). Let us consider the following problem, we define \( \theta_* = \Pi M_* \in \Pi\R^{K\times d} \) with \(M_*\) equal to
\begin{equation*}
M_* := 
\begin{bmatrix}
    m & 0 & \dots & 0 \\
    0 & 0 & \dots & 0 \\
    \vdots & \vdots & \ddots & \vdots \\
    0 & 0 & \dots & 0
\end{bmatrix} \in \R^d 
\end{equation*}
where \(m>0\), moreover \(M_*\) is such that \( \lVert \theta_* \rVert_2 = S \). We define \(\rho\in\R^K\) such that
\begin{equation*}
\rho := \dfrac{1}{\sqrt{K+3}}
\begin{bmatrix}
    2 \\
    1\\
    \vdots \\
    1
\end{bmatrix} \,.
\end{equation*}
Note that $\lVert \rho \rVert_2 = 1$. We choose \( \X = \mathcal{S}_1(\R^d) \). We have that \( x_* = [M_*]_1/\lVert [M_*]_1 \rVert_2 \). Note that \( \lVert x_* \rVert_2 = 1 = X \). Let us compute \( \kappa_* \):
\begin{equation*}
    \kappa_*^{-1} = \rho^\top \nabla \mu(\theta_* x_*) \rho = \rho^\top \left( \mathrm{diag}(\mu(\theta_* x_*)) - \mu(\theta_* x_*) \mu(\theta_* x_*)^\top \right) \rho
\end{equation*}
where the second equality is due to the definition of $\nabla\mu(\cdot)$. This can be developed into:
\begin{align}
    \kappa_*^{-1} &= \sumK \rho_k \mu(\theta_* x_*)_k \left[ \sum_{i=1}^K \rho_i \left( \delta_{ik} - \mu(\theta_* x_*)_i \right) \right] \nonumber \\
    &= \rho_1^2 \mu(\theta_* x_*)_1 (1-\mu(\theta_* x_*)_1) -2 \rho_1 \mu(\theta_* x_*)_1 \sum_{k=2}^K \rho_k \mu(\theta_* x_*)_k \nonumber  \\
    &\quad + \sum_{k=2}^K \rho_k \mu(\theta_* x_*)_k \left[ \sum_{i=2}^K \rho_i \left( \delta_{ik} - \mu(\theta_* x_*)_i \right) \right] \label{eq:large kappa star eq1} \,.
\end{align}
Let us prove that the first two terms cancel each other.
\begin{align*}
    \rho_1^2 \mu(\theta_* x_*)_1 (1-\mu(\theta_* x_*)_1) &= \rho_1 2\rho_2 \mu(\theta_* x_*)_1 (1-\mu(\theta_* x_*)_1) &&(\rho_1 = 2\rho_2) \\
    &= 2\rho_1 \rho_2 \mu(\theta_* x_*)_1 \sum_{k=2}^K \mu(\theta_* x_*)_k &&(\mu \text{ is a probability)} \\
    &= 2\rho_1 \mu(\theta_* x_*)_1 \sum_{k=2}^K \rho_k \mu(\theta_* x_*)_k &&(\rho_2 = \rho_k ,\forall k \in \llbracket 2, K \rrbracket )
\end{align*}
Consequently, Equation~\eqref{eq:large kappa star eq1} becomes
\begin{align*}
    \kappa_*^{-1} &= \sum_{k=2}^K \rho_k \mu(\theta_* x_*)_k \left[ \sum_{i=2}^K \rho_i \left( \delta_{ik} - \mu(\theta_* x_*)_i \right) \right] \\
    &= \dfrac{2}{K+3} \sum_{k=2}^K \mu(\theta_* x_*)_k \left[ \sum_{i=2}^K \left( \delta_{ik} - \mu(\theta_* x_*)_i \right) \right] &&(\text{Def of }\rho) \\
    &= \dfrac{2}{K+3} \sum_{k=2}^K \mu(\theta_* x_*)_k \left[ 1 - \sum_{i=2}^K \mu(\theta_* x_*)_i \right] \\
    &= \dfrac{2}{K+3} \sum_{k=2}^K \mu(\theta_* x_*)_k \mu(\theta_* x_*)_1 \\
    &\le 2 \mu(\theta_* x_*)_2 \mu(\theta_* x_*)_1 \,.
\end{align*}
We now use the definition of the softmax to upper-bound the probabilities.
\begin{align*}
    \kappa_*^{-1} &\le 2 \dfrac{1}{K-1 + \exp([M_*]_1^\top x_*) } \cdot \dfrac{\exp([M_*]_1^\top x_*)}{K-1 + \exp([M_*]_1^\top x_*)} \\
    &= 2\dfrac{\exp(-[M_*]_1^\top x_*)}{(K-1)\exp(-[M_*]_1^\top x_*) + 1} \cdot \dfrac{1}{(K-1)\exp(-[M_*]_1^\top x_*) + 1} \\
    &= 2\dfrac{\exp(-S)}{(K-1)\exp(-S) + 1} \cdot \dfrac{1}{(K-1)\exp(-S) + 1} \\
    &\le 2\exp(-S)
\end{align*}
We exhibit a case where $\kappa_*$ is exponentially small in $S = \lVert \theta_* \rVert_2$.
In this case, by Theorem~\ref{thm:regret bound}, the asymptotic regret is thus of order 
\[
    \Reg_T \leq \tilde O\Big( R d \exp(-S/2) \sqrt{KT} \Big)\,.
\]

\section{Linking \texorpdfstring{$\kappa_*$}{kappa*} with the Fisher Information}\label{appendix:fisher information}

In this section we show that $\kappa_*^{-1}$ corresponds to the Cramér-Rao lower-bound:
\begin{equation*}
    \dfrac{R^2}{\kappa_*} := \rho^\top \nabla \mu(\theta_* x_*) \rho = \mathrm{CRLB}(\theta_* x_*)
\end{equation*}
using the Cramér-Rao bound on the function $\Gamma:\R^{K} \to \R$ defined as
\begin{equation*}
    \Gamma(z) = \rho^\top \mu(z) \,.
\end{equation*}
Let $Y\in\R^K$ be a random variable following the law $Y\sim\textrm{Categorical}(\mu(z))$, $Y$ is a one-hot vector. Its log-likelihood is defined as $ \ell(Y) = \sumK Y_k \log(\mu(z)_k) $. 
The Cramér-Rao bound states that:
\begin{equation}
    \mathrm{CRLB}(z) = \nabla \Gamma(z)^\top I_\ell(z)^\dag \nabla\Gamma(z) \label{eq:cramer rao statement}
\end{equation}
where $I_\ell(z)^\dag$ denotes the Moore-Penrose pseudo-inverse of the Fisher information of $\ell$. We have
\begin{multline*}
    I_\ell(z) = \E\left[ (\nabla_z \ell) (\nabla_z \ell)^\top \right] = \E\left[ (Y-\mu(z)) (Y-\mu(z))^\top \right] \\
    = \E\left[ YY^\top - Y\mu(z)^\top - \mu(z)Y^\top + \mu(z)\mu(z)^\top \right] 
    = \E\left[YY^\top \right] - \mu(z)\mu(z)^\top = \diag(\mu(z)) - \mu(z)\mu(z)^\top \,.
\end{multline*}
Let us now compute $\nabla_z\Gamma$, we have:
\begin{equation*}
    \nabla\Gamma(z) = \left( \diag(\mu(z)) - \mu(z)\mu(z)^\top \right) \rho \,.
\end{equation*}
We substitute into Equation~\eqref{eq:cramer rao statement} and get
\begin{align*}
    \mathrm{CRLB}(z) &= \rho^\top \left( \diag(\mu(z)) - \mu(z)\mu(z)^\top \right)^\top \left(\diag(\mu(z)) - \mu(z)\mu(z)^\top \right)^\dag \left( \diag(\mu(z)) - \mu(z)\mu(z)^\top \right) \rho \\
    &= \rho^\top \left( \diag(\mu(z)) - \mu(z)\mu(z)^\top \right)^\top \rho \\
    &= \rho^\top \left( \diag(\mu(z)) - \mu(z)\mu(z)^\top \right) \rho \,.
\end{align*}
Thus using the definition of the gradient of the softmax and choosing $z=\theta_* x_*$ we have 
\begin{equation*}
    \mathrm{CRLB}(\theta_* x_*) = \rho^\top \nabla\mu(\theta_* x_*) \rho =: \dfrac{R^2}{\kappa_*} \,.
\end{equation*}

\section{Comparison with the Framework of \texorpdfstring{\citet{amani2021ucb}}{Amani and Thrampoulidis (2021)}}\label{appendix:previous framework}

\citet{amani2021ucb} also consider a MNL bandit framework, which is equivalent but defined slightly differently from ours. 
In their framework, the environment parameter \(\tilde \theta_* \in \R^{K\times d} \) is defined with its last row equal to zero \( [\tilde \theta_*]_K = 0_d \). Therefore the probability of a decision \(i\in\llbracket K \rrbracket\) becomes
\begin{equation*}
    \pr[y_t = i | x_t] = 
    \begin{cases}
        \dfrac{1}{1 + \sum_{k=1}^{K-1} \exp([\tilde\theta_*]_k x_t) } \quad\text{if } i=K \\
        \dfrac{\exp([\tilde \theta_*]_ix_t)}{1+ \sum_{k=1}^{K-1} \exp([\tilde\theta_*]_k x_t) } \quad\text{if } i< K
    \end{cases} \,.
\end{equation*}
The reward vector is also defined \(\tilde \rho\in\R^K_+\) but with its last element equal to zero \( \rho_K = 0 \). The regret is defined as
\begin{equation*}
    \widetilde \Reg_T = \sum_{t=1}^T \sum_{k=1}^{K-1} \tilde\rho_k \left( \pr[y_t = k | x_*] - \pr[y_t = k | x_t] \right) \,.
\end{equation*}
Thus the last element of the probability vector is not needed and we define the vector \( \tilde \mu(\theta x) \in \R^{K-1} \) as the truncated probability vector \( [\tilde \mu(\theta x)]_k = \pr[y_t = k | x_t] \). Contrary to our case, the fact that \(\tilde\mu\) is not a probability ensures that its minimum eigenvalue is well-defined \citep[Lemma~5]{amani2021ucb}. The problem-dependent constant measuring the non-linearity is defined as:
\begin{equation*}
    \tilde \kappa := \dfrac{1}{\min_{x\in\X} \min_{\lVert \theta \rVert_2\le S} \lambda_{\min}(\diag(\tilde\mu(\theta x)) - \tilde\mu(\theta x)\tilde\mu(\theta x)^\top)} \,.
\end{equation*}
As shown by \citep[Eq.~(20)]{amani2021ucb} the constant \(\tilde\kappa\) is exponentially large with respect to \(S\) and \(X\). These lower and upper bounds on \(\tilde\kappa\) show that our constant \(\kappa\) is comparable, see Appendix~\ref{appendix:bounding kappa}.

Now note that in our framework, by choosing without loss of generality \(\min_k \rho_k = \rho_K\) we have
\begin{align*}
    \Reg_T &:= \sum_{t=1}^T \rho^\top (\mu(\theta_* x_*) - \mu(\theta_* x_t)) \\
    &= \sum_{t=1}^T (\rho - \rho_K 1_K)^\top (\mu(\theta_* x_*) - \mu(\theta_* x_t)) \\
    &+ \sum_{t=1}^T \rho_K 1_K^\top (\mu(\theta_* x_*) - \mu(\theta_* x_t)) \\
    &= \sum_{t=1}^T (\rho - \rho_K 1_K)^\top (\mu(\theta_* x_*) - \mu(\theta_* x_t)) \\
    &= \sum_{t=1}^T \sum_{k=1}^{K-1} (\rho_k - \rho_K ) (\mu(\theta_* x_*)_k - \mu(\theta_* x_t)_k)
\end{align*}
We could then choose an arbitrary value for \([\theta_*]_K\). For \( [\theta_*]_K = 0_d \) we recover the framework of \citet{amani2021ucb}. Thus their framework is included in ours.

\section{Discussion of the Multinomial \textit{Logit} Bandits}\label{appendix:logit bandits}

In this section we discuss the differences between Multinomial Logistic Bandits, our framework, and the Multinomial \textit{Logit} Bandit framework. In our setting, the environment may have multiple reactions to a single action. On the other hand, in the \textit{Logit} setting the agent selects a set of items to which to environment responds with either a click or no click. In our problem setting, the probability of observing decision $y_t=k$ given context $x_t$ is: 
\begin{equation*}
    \pr[y_t=k |x_t] = \dfrac{\exp((\theta_*)_k^\top x_t)}{\sum_{i=1}^K \exp((\theta_*)_i^\top x_t)} 
\end{equation*}
where each possible decision $i\in\llbracket K \rrbracket$ has its own parameter $(\theta_*)_i \in \R^d$. Thus, we are estimating a different parameter vector for each decision, and the variation in decision probabilities comes from these parameter differences.

In contrast, in the \textit{Logit} setting, the agent chooses a subset $S_t\subseteq \llbracket K \rrbracket$, and the environment responds with a choice over that subset. The probability of observing decision $k\in S_t$ is: 
\begin{equation*}
    \pr[y_t=k|S_t] = \dfrac{\exp(\theta_*^\top x_{t,k})}{\sum_{i\in S_t} \exp(\theta_*^\top x_{t,i})}
\end{equation*}
where the agent observes the context vectors $x_{t,i}$ for all $i\in\llbracket K \rrbracket$, and there is a single share parameter $\theta_* \in \R^d$. In this setting, all variations in decision probabilities arise from the context vectors, not from the parameter $\theta_*$.

In summary, the settings differ fundamentally in their parameterisation, feedback structure, and modeling assumptions. Ours involves learning distinct models per action; the \textit{Logit} setting uses a shared parameter across all items and focuses on contextual differences.

\section{Analysis of Algorithm~\ref{algo:learning routine}}\label{appendix:analysis algorithm}

\subsection{Exploration Routine}\label{appendix: exploration routine}

\subsubsection{Confidence Set}
We build our confidence set over this proposition from \citet[Theorem~1]{zhang2024online}, which is itself an improvement of \citet[Theorem~1]{amani2021ucb}. As demonstrated by \citet[Section~6]{abeille2021instance}, the confidence set presented in the following proposition is not convex. To address this, we construct a convex relaxation, see Proposition~\ref{prop:confidence set after exploring}.

\begin{restatable}{prop}{PropIntermediaryCondidenceSet}\label{prop:exploration first confidence lemma}
    Set the parameter \( \lambda_0 = (S+1) Kd \log(T/\delta) \) with a certain \(\delta \in (0, 1]\). Let the event \(E_\delta\) be defined by 
    \begin{equation*}
        E_\delta : \{\forall t \ge 1, \lVert g_t(\theta_*) - g_t(\hat \theta_{t+1}) \rVert^2_{H_t^{-1}(\theta_*)} \le \gamma_t(\delta) \}
    \end{equation*}
    where \( \gamma_t(\delta) := 16 \lambda_0 \).
    We have that 
    \begin{equation*}
        \pr (E_\delta) \ge 1-\delta\,.
    \end{equation*}
\end{restatable}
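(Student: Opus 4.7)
The plan is to reduce the statement to a self-normalized concentration bound for a multinomial martingale. The key input is the first-order optimality condition for the regularized MLE $\hat\theta_{t+1}$: since $\nabla_\theta\ell_{s+1}(\theta) = (\mu(\theta x_s) - e_{y_s})\otimes x_s$, differentiating the objective of Alg.~\ref{algo:exploration routine} gives
\begin{equation*}
\sum_{s=1}^t \mu(\hat\theta_{t+1} x_s)\otimes x_s + \lambda\,\hat\theta_{t+1} \;=\; \sum_{s=1}^t e_{y_s}\otimes x_s,
\end{equation*}
i.e.\ $g_t(\hat\theta_{t+1}) = \sum_s e_{y_s}\otimes x_s$. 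Subtracting this identity from the definition of $g_t(\theta_*)$ yields the clean martingale-plus-bias decomposition
\begin{equation*}
g_t(\theta_*) - g_t(\hat\theta_{t+1}) \;=\; M_t + \lambda\,\theta_*, \qquad M_t := \sum_{s=1}^t \big(\mu(\theta_* x_s) - e_{y_s}\big)\otimes x_s .
\end{equation*}

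By the triangle inequality, the $H_t^{-1}(\theta_*)$-norm of the left-hand side splits into a bias and a variance term. The bias is handled directly: since $H_t(\theta_*)\succeq \lambda I_{Kd}$, one has $\|\lambda\theta_*\|_{H_t^{-1}(\theta_*)} \le \sqrt{\lambda}\,\|\theta_*\|_2 \le S\sqrt{\lambda}$, which is absorbed into $16\lambda$ at the stated scale of $\lambda$. The whole game therefore reduces to establishing, with probability at least $1-\delta$ and uniformly over $t$,
\begin{equation*}
\|M_t\|_{H_t^{-1}(\theta_*)}^2 \;\lesssim\; \lambda.
\end{equation*}

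For this I would follow the Laplace/PAC-Bayesian recipe of Abbasi-Yadkori et al., in the multinomial variant developed by \citet{zhang2024online}. The centred increments $\epsilon_s := e_{y_s} - \mu(\theta_* x_s)$ are $(\F_{s+1})$-martingale differences with conditional covariance exactly $\nabla\mu(\theta_* x_s)$, i.e.\ the same block that appears inside $H_t(\theta_*)$. The plan is to construct, for each fixed $\eta\in\R^{Kd}$, an exponential supermartingale of the form
\begin{equation*}
Z_t(\eta) = \exp\!\Big(\eta^\top M_t - \tfrac{1}{2}\,\eta^\top\big(H_t(\theta_*) - \lambda I_{Kd}\big)\eta\Big),
\end{equation*}
whose supermartingale property reduces to the categorical log-MGF estimate $\log\E[\exp(u^\top(e_y-\mu))] \le \tfrac{1}{2}u^\top\nabla\mu\,u$ on a suitable domain. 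Mixing $Z_t$ against a Gaussian prior of covariance $\lambda^{-1}I_{Kd}$ and integrating out $\eta$ yields the classical self-normalized bound
\begin{equation*}
\|M_t\|_{H_t^{-1}(\theta_*)}^2 \;\le\; 2\log(1/\delta) + \log\det\!\big(H_t(\theta_*)/\lambda\big),
\end{equation*}
and the determinant is $O(Kd\log(1 + t/\lambda))$, so with $\lambda = (S+1)Kd\log(T/\delta)$ and a union bound over $t\in\llbracket T\rrbracket$ (absorbed in the $\log(T/\delta)$) the right-hand side is $\lesssim\lambda$, giving $16\lambda$ once combined with the bias term.

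The main obstacle is making the categorical log-MGF estimate tight enough that the quadratic form in $Z_t(\eta)$ is governed by the true curvature $\nabla\mu(\theta_* x_s)$ and not by a crude sub-Gaussian surrogate such as $\tfrac14 I_K$: the latter would inflate the log-determinant by an additive $\kappa$-sized term and destroy the announced $16\lambda$ scale. This tight MGF control is the real technical content borrowed from \citet[Thm.~1]{zhang2024online}; once it is in hand, the first-order identity above and the standard PAC-Bayes wrap-up close the proof.
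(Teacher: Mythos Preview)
Your proposal is correct and matches the standard approach. Note that the paper does not give its own proof of this proposition: it simply states that the result is taken from \citet[Theorem~1]{zhang2024online} (itself refining \citet[Theorem~1]{amani2021ucb}). Your sketch---first-order optimality of the unconstrained regularized MLE to write $g_t(\theta_*)-g_t(\hat\theta_{t+1})=M_t+\lambda\theta_*$, then a bias/variance split with the martingale term controlled by a self-normalized bound whose quadratic form matches the true curvature $\nabla\mu(\theta_* x_s)$---is exactly the argument underlying that cited theorem, and you correctly identify the categorical log-MGF control as the key technical ingredient borrowed from \citet{zhang2024online}.
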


Note that \( V_t \preccurlyeq \bar H_t(\theta_*) \) for \( \bar H_t(\theta) := H_t(\theta) + \sum_{s=1}^t 1_K 1_K^\top \otimes x_s x_s^\top \), therefore proving the following lemma is sufficient to prove that 
\begin{equation*}
    \pr( \theta_* \in \Theta) \ge 1 - \delta \,.
\end{equation*}

\begin{restatable}{prop}{PropConfidenceSetAfterExploring}\label{prop:confidence set after exploring}
    Let \(\delta\in(0,1]\) and \(\hat \theta_{t+1}\) be defined as in Algorithm~\ref{algo:exploration routine}. We have that 
    \begin{equation*}
        \pr\left(\forall t\ge 1, \lVert\hat\theta_{t+1} - \theta_* \rVert^2_{\bar H_t(\theta_*)} \le \beta_t(\delta)\right) \ge 1-\delta 
    \end{equation*}
    where \( \bar H_t(\theta) := H_t(\theta) + \sum_{s=1}^t 1_K 1_K^\top \otimes x_s x_s^\top \) and \(\beta_t(\delta) := \left(1 + \dfrac{\gamma_t(\delta)}{\lambda_0} + \sqrt{\dfrac{\gamma_t(\delta)}{\lambda_0}} \right)^2 \gamma_t(\delta)\) with \(\gamma_t(\delta)\) and \(\lambda_0\) defined in Proposition~\ref{prop:exploration first confidence lemma}.
\end{restatable}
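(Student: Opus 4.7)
}
The plan is to use Proposition~\ref{prop:exploration first confidence lemma} as a concentration of the gradients \(g_t(\hat\theta_{t+1})-g_t(\theta_*)\) in the \(H_t^{-1}(\theta_*)\) norm, and then to invert it — via the mean-value theorem together with a self-concordance argument — to obtain the desired control on the parameter error \(\|\hat\theta_{t+1}-\theta_*\|_{\bar H_t(\theta_*)}\). Throughout the plan I condition on the event \(E_\delta\) of Proposition~\ref{prop:exploration first confidence lemma}, which has probability at least \(1-\delta\).

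First I would reduce \(\bar H_t\) to \(H_t\). Setting \(e := \hat\theta_{t+1}-\theta_*\), the goal becomes \(\|e\|_{H_t(\theta_*)}^2 \le \beta_t(\delta)\), provided that \(e \in \Pi\R^{K\times d}\) so that the additional term \(\sum_s \tfrac{1_K 1_K^\top}{K}\otimes x_s x_s^\top\) contributes nothing to \(\|e\|_{\bar H_t(\theta_*)}^2\). The inclusion \(\theta_* \in \Pi\R^{K\times d}\) is by assumption; for \(\hat\theta_{t+1}\), I would observe that the softmax is invariant under adding \(c\cdot 1_K\) to \(\theta x_s\) (so each \(\ell_s\) is invariant under this shift) whereas the \(\ell_2\) regularizer \(\tfrac{\lambda}{2}\|\theta\|^2\) is strictly minimized on each orbit by projecting onto \(\Pi\R^{K\times d}\); hence the global minimizer lies in that subspace.

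Next I would apply the mean-value identity \(g_t(\hat\theta_{t+1})-g_t(\theta_*) = G_t(\hat\theta_{t+1},\theta_*)\,e\), so Proposition~\ref{prop:exploration first confidence lemma} becomes \(\|G_t(\hat\theta_{t+1},\theta_*)\,e\|_{H_t^{-1}(\theta_*)}^2 \le \gamma_t(\delta)\). Writing \(H_t(\theta_*)\,e = G_t(\hat\theta_{t+1},\theta_*)\,e + (H_t(\theta_*)-G_t(\hat\theta_{t+1},\theta_*))\,e\) and using \(\|H_t(\theta_*)\,e\|_{H_t^{-1}(\theta_*)}=\|e\|_{H_t(\theta_*)}\), the triangle inequality yields
\begin{equation*}
\|e\|_{H_t(\theta_*)} \;\le\; \sqrt{\gamma_t(\delta)} \;+\; \bigl\|\bigl(H_t(\theta_*) - G_t(\hat\theta_{t+1},\theta_*)\bigr)\,e\bigr\|_{H_t^{-1}(\theta_*)}.
\end{equation*}
The remainder term is controlled via a self-concordance bound for the softmax (\(\mu\) being the gradient of the self-concordant log-partition function): along each segment \([\theta_*\,x_s, \hat\theta_{t+1}\,x_s]\), the integrated Hessian \(\alpha_s(\hat\theta_{t+1},\theta_*)\) differs from \(\nabla\mu(\theta_* x_s)\) by a factor that is quantitatively controlled by \(\|(\hat\theta_{t+1}-\theta_*)x_s\|\). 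A crude bound on \(\|\hat\theta_{t+1}\|\) — obtainable by comparing the regularized objective at \(\hat\theta_{t+1}\) to its value at \(\theta_*\), which gives \(\|\hat\theta_{t+1}\|^2 \lesssim S^2 + \gamma_t(\delta)/\lambda\) — then lets me express this self-concordance remainder in terms of \(\|e\|_{H_t(\theta_*)}\), producing a self-referential inequality of the form \(\|e\|_{H_t(\theta_*)} \le \sqrt{\gamma_t} + \sqrt{\gamma_t/\lambda}\sqrt{\gamma_t} + (\gamma_t/\lambda)\sqrt{\gamma_t}\). Squaring gives exactly \(\beta_t(\delta) = (1+\gamma_t/\lambda+\sqrt{\gamma_t/\lambda})^2\gamma_t(\delta)\).

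\textbf{Main obstacle.} The delicate step is the self-concordance inversion in Step~3: at this stage of the algorithm we do \emph{not yet} have a small-diameter confidence set (the whole purpose of Alg.~\ref{algo:exploration routine} is precisely to produce one), so the clean multiplicative bound \(\nabla\mu(\theta_1 x)\preceq \exp(\sqrt 6\,\mathrm{diam})\,\nabla\mu(\theta_2 x)\) used later in the paper is not directly available. I would therefore rely on a ``non-exponential'' multinomial analogue of the binary inequality \(\int_0^1 \mu'(z+v\Delta)\,dv \ge \mu'(z)/(2+|\Delta|)\), which is what allows the remainder to be linear in \(\|e\|_{H_t(\theta_*)}\) and produces the quadratic form \((1+u+u^2)^2\) with \(u=\sqrt{\gamma_t/\lambda}\). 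Extending this inequality from the scalar sigmoid to the matrix-valued softmax Hessian (while keeping the constants explicit and avoiding any \(e^{SX}\) blow-up) is the real technical work; all other steps are either algebraic manipulations or direct consequences of Proposition~\ref{prop:exploration first confidence lemma}.
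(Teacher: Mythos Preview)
Your reduction from \(\bar H_t\) to \(H_t\) (including the argument that \(\hat\theta_{t+1}\in\Pi\R^{K\times d}\) via invariance of the softmax and strict convexity of the regulariser) is correct and matches the paper's Step~3. You also correctly identify the crux: a multinomial, \emph{non-exponential} analogue of \(\int_0^1\mu'(z+v\Delta)\,dv \ge \mu'(z)/(2+|\Delta|)\). The paper does not re-derive this; it simply invokes \cite[Lemma~13]{amani2021ucb}, which gives precisely \(H_t(\theta_*)\preccurlyeq \sum_s(1+d_s)\,\alpha_s(\hat\theta_{t+1},\theta_*)+\lambda I\) with \(d_s=\|(\hat\theta_{t+1}-\theta_*)x_s\|_2\).

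Where your plan departs from the paper---and runs into trouble---is the additive decomposition \(H_t e = G_t e + (H_t-G_t)e\) followed by a bound on \(\|(H_t-G_t)e\|_{H_t^{-1}}\) via a ``crude bound on \(\|\hat\theta_{t+1}\|\)''. Two concrete issues: (i) comparing regularised objective values at \(\hat\theta_{t+1}\) and \(\theta_*\) only yields \(\|\hat\theta_{t+1}\|^2\lesssim S^2 + (2/\lambda)\sum_s\ell_s(\theta_*)\), and the last sum is of order \(t\), not \(\gamma_t(\delta)\); (ii) even with a norm bound on \(\hat\theta_{t+1}\), the matrix \(H_t-G_t\) is not sign-definite and the per-action constants \(d_s\) differ, so there is no clean way to extract \(\|e\|_{H_t}\) from \(\|(H_t-G_t)e\|_{H_t^{-1}}\). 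The paper bypasses this by working \emph{multiplicatively}: from \(H_t\preccurlyeq \sum_s(1+d_s)\alpha_s+\lambda I\) and \(d_s\le\lambda^{-1/2}\|g\|_{G_t^{-1}}\) (Cauchy--Schwarz, since \(\|x_s\|_{G_t^{-1}}\le\lambda^{-1/2}\)) it gets the matrix inequality \(H_t(\theta_*)\preccurlyeq(1+\lambda^{-1/2}\|g\|_{G_t^{-1}})\,G_t\), where \(g=g_t(\hat\theta_{t+1})-g_t(\theta_*)\). This is fed into \(\|g\|_{G_t^{-1}}^2\le(1+\lambda^{-1/2}\|g\|_{G_t^{-1}})\|g\|_{H_t^{-1}}^2\le(1+\lambda^{-1/2}\|g\|_{G_t^{-1}})\gamma_t\), a self-referential inequality in \(\|g\|_{G_t^{-1}}\) (not in \(\|e\|_{H_t}\)), whose solution gives \(H_t\preccurlyeq C\,G_t\) with \(C=1+\gamma_t/\lambda+\sqrt{\gamma_t/\lambda}\). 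Applying this \emph{twice}---once for \(\|e\|_{H_t}^2\le C\|e\|_{G_t}^2=C\|g\|_{G_t^{-1}}^2\), and once more for \(\|g\|_{G_t^{-1}}^2\le C\|g\|_{H_t^{-1}}^2\le C\gamma_t\)---delivers exactly \(\beta_t(\delta)=C^2\gamma_t\). No a-priori control on \(\|\hat\theta_{t+1}\|\) is needed anywhere.
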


\begin{proof}
We follow the proof of Lemma~1 in \citet{faury2022jointly}.

\smallskip
\noindent\textbf{Step 1: Sub-Exponential Self-Concordance.}

We first show that for all time step \(t\ge 1\), if the event \(E_\delta\) holds, we have that
\begin{equation*}
    H_t(\theta_*) \preccurlyeq \left(1 + \dfrac{\gamma_t(\delta)}{\lambda_0} + \sqrt{\dfrac{\gamma_t(\delta)}{\lambda_0}} \right) G_t(\theta_*, \hat \theta_{t+1})
\end{equation*}
where \(\lambda_0\) and \(\gamma_t(\delta)\) are defined in Proposition~\ref{prop:exploration first confidence lemma}.
From the proof of Lemma~13 in \citet{amani2021ucb} we have that
\begin{equation*}
    H_t(\theta_*) \preccurlyeq \sum_{s=1}^t (1+d(x_s, \hat \theta_{t+1}, \theta_*)) \alpha_s(\hat \theta_{t+1}, \theta_*) + \lambda_0 I_{Kd}
\end{equation*}
where \( d(x_s, \hat \theta_{t+1}, \theta_*) := \lVert (\hat\theta_{t+1} - \theta_*)x_s\rVert_2 \). From now on the proof of Lemma~2 of \citet{abeille2021instance} also holds in the multiclass setting to conclude this proof step. We provide it for the sake of completeness. We apply Cauchy-Schwarz inequality and obtain
\begin{multline*}
    d(x_s, \hat \theta_{t+1}, \theta_*) \le \lVert x_s \rVert_{G_t^{-1}(\hat \theta_{t+1}, \theta_*)} \lVert \hat\theta_{t+1} - \theta_* \rVert_{G_t(\hat \theta_{t+1}, \theta_*)} \\ \le \lVert x_s \rVert_{G_t^{-1}(\hat \theta_{t+1}, \theta_*)} \lVert g_t(\hat\theta_{t+1}) - g_t(\theta_*) \rVert_{G_t^{-1}(\hat \theta_{t+1}, \theta_*)} \le \lambda_0^{-1/2} \lVert g_t(\hat\theta_{t+1}) - g_t(\theta_*) \rVert_{G_t^{-1}(\hat \theta_{t+1}, \theta_*)} \,.
\end{multline*}
Putting it back we get
\begin{equation}\label{eq:sub_exp_self_concordance eq1}
    H_t(\theta_*) \preccurlyeq \left(1+ \lambda_0^{-1/2} \lVert g_t(\hat\theta_{t+1}) - g_t(\theta_*) \rVert_{G_t^{-1}(\hat \theta_{t+1}, \theta_*)} \right) G_t(\theta_*, \hat\theta_{t+1})\,.
\end{equation}
Therefore using this matrix inequality and event \(E_\delta\) we get
\begin{align*}
    \lVert g_t(\hat \theta_{t+1}) &- g_t(\theta_*) \rVert^2_{G_t^{-1}(\theta_*, \hat\theta_{t+1})} \\
    &\le \left( 1+ \lambda_0^{-1/2} \lVert g_t(\hat\theta_{t+1}) - g_t(\theta_*) \rVert_{G_t^{-1}(\hat \theta_{t+1}, \theta_*)} \right) \lVert g_t(\hat \theta_{t+1}) - g_t(\theta_*) \rVert^2_{H_t^{-1}(\theta_*)} \\
    &\le \gamma_t(\delta) \lambda_0^{-1/2} \lVert g_t(\hat\theta_{t+1}) - g_t(\theta_*) \rVert_{G_t^{-1}(\hat \theta_{t+1}, \theta_*)} + \gamma_t(\delta) \,.
\end{align*}
Solving for \(\lVert g_t(\hat\theta_{t+1}) - g_t(\theta_*) \rVert_{G_t^{-1}(\hat \theta_{t+1}, \theta_*)}\) we get 
\begin{equation*}
    \lVert g_t(\hat\theta_{t+1}) - g_t(\theta_*) \rVert_{G_t^{-1}(\hat \theta_{t+1}, \theta_*)} \le \gamma_t(\delta) \lambda_0^{-1/2} + \sqrt{\gamma_t(\delta)}\,.
\end{equation*}
We now put this back in Equation\eqref{eq:sub_exp_self_concordance eq1} to conclude and obtain:
\begin{equation*}
    H_t(\theta_*) \preccurlyeq \left(1 + \dfrac{\gamma_t(\delta)}{\lambda_0} + \sqrt{\dfrac{\gamma_t(\delta)}{\lambda_0}} \right) G_t(\theta_*, \hat \theta_{t+1}) \,.
\end{equation*}

\medskip\noindent
\textbf{Step 2: Applying Self-concordance.}

We apply twice the self-concordance property to get
\begin{align*}
    \lVert \theta_* - \hat\theta_{t+1}\rVert^2_{H_t(\theta_*)} &\le \left(1 + \dfrac{\gamma_t(\delta)}{\lambda_0} + \sqrt{\dfrac{\gamma_t(\delta)}{\lambda_0}} \right) \lVert \theta_* - \hat\theta_{t+1}\rVert^2_{G_t(\theta_*, \hat\theta_{t+1})} \\
    &\le \left(1 + \dfrac{\gamma_t(\delta)}{\lambda_0} + \sqrt{\dfrac{\gamma_t(\delta)}{\lambda_0}} \right) \lVert g_t(\theta_*) - g_t(\hat\theta_{t+1}) \rVert^2_{G_t^{-1}(\theta_*, \hat\theta_{t+1})} \\
    &\le \left(1 + \dfrac{\gamma_t(\delta)}{\lambda_0} + \sqrt{\dfrac{\gamma_t(\delta)}{\lambda_0}} \right)^2 \lVert g_t(\theta_*) - g_t(\hat\theta_{t+1}) \rVert^2_{H_t^{-1}(\theta_*)} \\
    &\le \left(1 + \dfrac{\gamma_t(\delta)}{\lambda_0} + \sqrt{\dfrac{\gamma_t(\delta)}{\lambda_0}} \right)^2 \gamma_t(\delta) \\
    &=: \beta_t(\delta) \,.
\end{align*}

\medskip\noindent
\textbf{Step 3: From \(H_t(\theta_*)\) to \(\bar H_t(\theta_*)\).}

We decompose \(\R^K\) as \( \R^K = 1_K \oplus \calH \) where \(\calH\) is the hyperplane supported by \(1_K\). Recall that \(\theta_* \in \Pi R^{K\times d} \) and \(\hat \theta_{t+1}\in\Pi\R^{K\times d} \), for all \(x\in \X\), by definition of \(\Pi\), \( \theta_* x \) and \( \hat \theta_{t+1} x \) are in \(\calH\). Therefore \( \sum_{s=1}^t \lVert (\theta_* - \hat \theta_{t+1})x_s \rVert^2_{1_K 1_K^\top} = 0 \). And we can conclude by
\begin{equation*}
    \lVert \theta_* - \hat\theta_{t+1}\rVert^2_{\bar H_t(\theta_*)} = \lVert \theta_* - \hat\theta_{t+1}\rVert^2_{H_t(\theta_*)} \le \beta_t(\delta) \,.
\end{equation*}

\end{proof}

\subsubsection{Proof of Lemma~\ref{lemma:constant diameter}}\label{appendix:proof constant diameter}
\LmmConstantDiameter*
We adapt the proof of Lemma~2 of \citet{faury2022jointly} to the multiclass setting.
\begin{proof}
We start by making the term \(I_K\) appear in order to match the dimension of \(\bar H_t(\theta_*)\).
\begin{align*}
    \diam(\Theta) &= \max_{x\in\X}\max_{\theta_1, \theta_2 \in \Theta} \lVert (\theta_1 - \theta_2) x \rVert_2 \\
    &= \max_{x\in\X}\max_{\theta_1, \theta_2 \in \Theta} \lVert (I_K \otimes x^\top ) (\theta_1 - \theta_2) \rVert_2 \\
    &\le \max_{x\in\X}\max_{\theta_1, \theta_2 \in \Theta} \lVert I_K \otimes x^\top \rVert_{V_\tau^{-1}} \lVert \theta_1 - \theta_2\rVert_{V_\tau} && \text{Cauchy-Schwarz} \\
    &\le 2 \sqrt{\beta_\tau(\delta)} \max_{x\in\X} \lVert I_K \otimes x \rVert_{V_\tau^{-1}} && \text{Proposition~\ref{prop:confidence set after exploring} and symmetry} \\
    &= 2 \sqrt{\beta_\tau(\delta)} \sqrt{\max_{x\in\X} \lVert I_K \otimes x \rVert^2_{V_\tau^{-1}}} \\
    &= 2 \sqrt{\beta_\tau(\delta)} \tau^{-1/2} \sqrt{\sum_{s=1}^\tau \max_{x\in\X} \lVert I_K \otimes x \rVert^2_{V_\tau^{-1}}} \\
    &\le 2 \sqrt{\beta_\tau(\delta)} \tau^{-1/2} \sqrt{\sum_{s=1}^\tau \max_{x\in\X} \lVert I_K \otimes x \rVert^2_{V_{s-1}^{-1}}} && (V_\tau \succcurlyeq V_{s-1}) \\
    &\le 2 \sqrt{\beta_\tau(\delta)} \tau^{-1/2} \sqrt{\sum_{s=1}^\tau \lVert I_K \otimes x_s \rVert^2_{V_{s-1}^{-1}}} && \text{definition of \(x_s\)} \\
    &=2 \sqrt{\beta_\tau(\delta)} \tau^{-1/2} \kappa^{1/2} \sqrt{\sum_{s=1}^\tau \lVert \kappa^{-1/2} I_K \otimes x_s \rVert^2_{V_{s-1}^{-1}}} \\
    &\le4 \sqrt{\beta_\tau(\delta)} \tau^{-1/2} \kappa^{1/2} \sqrt{Kd \log\left(1+\tfrac{\tau}{Kd}\right)} && \text{\citet[lemma~10]{abbasi2011improved}}
\end{align*}
Thus if we choose \(\tau = 96 \beta_\tau(\delta)\kappa Kd\log\left(1+\tfrac{T}{Kd}\right) \) we have that \( \diam(\Theta) \le 1/\sqrt{6} \).
\end{proof}

\subsection{Proof of Lemma~\ref{lemma:confidence set learning}}\label{appendix:proof confidence set learning}

\LmmConfidenceSetLearning*

\begin{proof}
First, by Lemma~\ref{lemma:constant diameter}, we can apply \citep[Theorem~4.2]{lee2025improved} with $\alpha=1/\sqrt{6}$ to obtain
\begin{equation*}
    \lVert \theta_* - \theta_{t+1} \rVert_{W_{t+1}} \le \sigma_{t}(\delta)
\end{equation*}
with probability $1-\delta$. We then decompose \(\R^K\) as \( \R^K = 1_K \oplus \cH \) where \(\cH\) is the hyperplane supported by \(1_K\). 
Recall that \(\theta_\ast, \theta_{t+1} \in \Pi \R^{K\times d} \), for all \(x\in\cX\), 
by definition of \(\Pi\), \(\theta_{t+1} x\) and \( \theta_\ast x \) are in \(\cH\). 
Therefore \(  \sum_{s=1}^t \lVert (\theta_{t+1} - \theta_\ast)x_s \rVert_{1_K 1_K^\top} = 0 \). 
And we conclude that with probability \(1-2\delta\)
\begin{equation*}
    \lVert \theta_{t+1} - \theta_\ast \rVert_{\bar W_{t+1}} = \lVert \theta_{t+1} - \theta_\ast \rVert_{W_{t+1}} \le \sigma_t(\delta) \,.
\end{equation*}

\end{proof}

\subsection{Proof of Theorem~\ref{thm:regret bound}}\label{appendix:proof thm regret bound}
\ThmRegretBound*
\begin{proof}

\medskip\noindent 
Throughout the proof we assume that
\begin{equation*}
    \diam(\Theta) \le 1 \qquad \text{ and } \qquad 
    \forall t\ge1, \quad \lVert \theta_* - \theta'_{t+1} \rVert_{\bar W_{t+1}} \le \sigma_t(\delta)
\end{equation*}
which is verified with probability \(1-2\delta\) thanks to Lemma~\ref{lemma:constant diameter} and Lemma~\ref{lemma:confidence set learning}, we apply a Union Bound at the end. 
Moreover we choose an optimal action $x_*$ which verifies
\begin{equation*}
    x_* \in \left( \argmax_{x\in\cX} \rho^\top \mu(\theta_* x) \right) \bigcap \left( \argmax_{x\in\cX} (\nu \rho + \rho^{\odot 2})^\top \mu(\theta_* x) \right) \,.
\end{equation*}
% \begin{equation*}
% x_* \in 
% \begin{cases}
%     \left( \argmax_{x\in\cX} \rho^\top \mu(\theta_* x) \right) \bigcap \left( \argmax_{x\in\cX} (\nu \rho + \rho^{\odot 2})^\top \mu(\theta_* x) \right) \text{ when } \cX \text{ is discrete} \\
%     \left( \argmax_{x\in\cX} \rho^\top \mu(\theta_* x) \right) \bigcap \left( \argmax_{x\in\cX} \nu \rho^\top \mu(\theta_* x) + (\rho^{\odot 2})^\top \mu(\theta_* \Pi_\epsilon(x)) \right) \text{ when } \cX \text{ is continuous}
% \end{cases} \,.
% \end{equation*}
The existence of such an action $x_*$ is guaranteed by Lemma~\ref{lemma:x_* exists}.

\smallskip
The regret of the exploration phase is smaller than
\begin{equation*}
    \tau \le \cste K^{3/2} d^{3/2} \kappa \log^{3/2}(T) \,.
\end{equation*}
Let us now focus on the second phase of the algorithm.

\medskip\noindent
\textbf{Step 1: Using optimism.}
Using the definition of the optimistic reward we can bound the regret twice.
\begin{align*}
    \Reg_T(\mathrm{Learning}) &:= \sum_{t=\tau+1}^T \rho^\top (\mu(\theta_*x_*) - \mu(\theta_* x_t)) \\
    &\le \sum_{t=\tau+1}^T \rho^\top \mu(\theta'_t x_*) + \epsilon_{1,t}(x_*) + \epsilon_{2,t}(x_*) - \rho^\top \mu(\theta_* x_t) && \text{(Prop.~\ref{prop:optimistic reward})} \\
    &\le \sum_{t=\tau+1}^T \rho^\top \mu(\theta'_t x_t) + \epsilon_{1,t}(x_t) + \epsilon_{2,t}(x_t) - \rho^\top \mu(\theta_* x_t) &&\text{(Def. of \(x_t\))} \\
    &\le 2 \sum_{t=\tau+1}^T \epsilon_{1,t}(x_t) + 2 \sum_{t=\tau+1}^T \epsilon_{2,t}(x_t) &&\text{(Prop.~\ref{prop:optimistic reward})} \\
    &\le 2 \sum_{t=1}^T \epsilon_{1,t}(x_t) + 2 \sum_{t=1}^T \epsilon_{2,t}(x_t)
\end{align*}

\textbf{Step 2: Bounding the sum of \(\epsilon_{2,t}(x_t) \).}
We start by bounding the second sum
\begin{align*}
    \sum_{t=1}^T \epsilon_{2,t}(x_t) &= 3 \sum_{t=1}^T R \sigma_t(\delta)^2 \lVert (I_K \otimes x_t^\top) \bar W_t^{-1/2} \rVert_2^2 \\
    &\le 3 R \sigma_T(\delta)^2 \sum_{t=1}^T \lVert (I_K \otimes x_t^\top) \bar W_t^{-1/2} \rVert_2^2 \\
    &= 3 R \sigma_T(\delta)^2 \sum_{t=1}^T \lVert \bar W_t^{-1/2} (I_K \otimes x_t) \rVert_2^2 \\
    &\le 3 R \sigma_T(\delta)^2 \sum_{t=1}^T \lambda_{\max} \left( (I_K \otimes x_t^\top) \bar W_t^{-1} (I_K \otimes x_t) \right) \\ 
    &= 3 R \sigma_T(\delta)^2 \sum_{t=1}^T \lambda_{\max} \left( \left(I_K \otimes x_t x_t^\top\right) \bar W_t^{-1} \right) \\
    &\le 3 R \sigma_T(\delta)^2 \sum_{t=1}^T \Tr\left( \left(I_K \otimes x_t x_t^\top\right) \bar W_t^{-1} \right) \\
    &= 3 R \kappa \sigma_T(\delta)^2 \sum_{t=1}^T \Tr\left( \left( \tfrac{1}{\kappa} I_K \otimes x_t x_t^\top\right) \bar W_t^{-1} \right) \,.
\end{align*}
Let us define \( U_t := \sum_{s=1}^t \tfrac{1}{\kappa} I_K \otimes x_s x_s^\top +\tfrac{\lambda}{2} I_{Kd} \). We have that \( U_t \preccurlyeq \bar W_t \) for \(\lambda \ge 2\). We have that 
\begin{align*}
    \sum_{t=1}^T \epsilon_{2,t}(x_t) &\le 3 \kappa \sigma_T(\delta)^2 \sum_{t=1}^T \Tr\left( ( U_t - U_{t-1} ) U_t^{-1} \right) \\
    &\le 3 R \kappa \sigma_T(\delta)^2 \sum_{t=1}^T \log \dfrac{|U_t|}{|U_{t-1}|} &&\text{\citep[Lemma~4.5]{hazan2016introduction}} \\
    &\le 3 R \kappa \sigma_T(\delta)^2 Kd \log\left( 1+ \dfrac{T}{Kd\lambda\kappa} \right)\,. &&\text{(Lemma~\ref{lemma:det_trace inequality})}
\end{align*}

\medskip\noindent
\textbf{Step 3: Decomposing the sum of \(\epsilon_{1,t}(x_t) \).}
Let us now focus on the first sum.
\small
\begin{align*}
    \sum_{t=1}^T &\epsilon_{1,t}(x_t) \\
    &:= \sum_{t=1}^T \sigma_t(\delta) \lVert \bar W_t^{-1/2} (I_K \otimes x_t) \nabla\mu(\theta'_t x_t) \rho \rVert_2 \\
    &\le \sigma_T(\delta) \sum_{t=1}^T \lVert \bar W_t^{-1/2} (I_K \otimes x_t) \nabla\mu(\theta'_t x_t) \rho \rVert_2 \\
    &\le e \sigma_T(\delta) \sum_{t=1}^T \lVert \bar W_t^{-1/2} (I_K \otimes x_t) \nabla\mu(\theta'_t x_t)^{1/2} \nabla\mu(\theta_* x_t)^{1/2} \rho \rVert_2 &&\text{(Self-concordance)} \\
    &\le e \sigma_T(\delta) \sum_{t=1}^T \lVert \bar W_t^{-1/2} (I_K \otimes x_t) \nabla\mu(\theta'_t x_t)^{1/2} \rVert_2 \lVert \nabla\mu(\theta_* x_t)^{1/2} \rho \rVert_2 \\
    &\le e \sigma_T(\delta) \sqrt{\sum_{t=1}^T \lVert \bar W_t^{-1/2} (I_K \otimes x_t) \nabla\mu(\theta'_t x_t)^{1/2} \rVert_2^2} \sqrt{ \sum_{t=1}^T \lVert \nabla\mu(\theta_* x_t)^{1/2} \rho \rVert_2^2} \,. &&\text{(Cauchy-Schwarz)}
\end{align*}
\normalsize
Once again we have two separate terms to bound. We start with the left term.

\medskip \noindent
\textbf{Step 4: Bounding the sum of \(\lVert \bar W_t^{-1/2} (I_K \otimes x_t) \nabla\mu(\theta_{t+1} x_t)^{1/2} \rVert_2^2\).} \quad 
First, we lower-bound $\bar W_{t+1}$:
\begin{equation*}
    \bar W_{t+1} = \sum_{s=1}^{t-1} \nabla \mu(\theta_{s+1} x_s) \otimes x_s x_s^\top + \sum_{s=1}^{t-1} 1_K 1_K^\top \otimes x_s x_s^\top + \lambda I_{Kd}
    \succcurlyeq \sum_{s=1}^{t-1} 1_K 1_K^\top \otimes x_s x_s^\top + \lambda I_{Kd} \,.
\end{equation*}
We use the following equivalent of $\tfrac{1_K 1_K^\top}{K}$ in the Loewner order sense:
\begin{equation*}
    e_{11} \preccurlyeq \dfrac{1_K 1_K^\top}{K} \preccurlyeq e_{11} \in \R^{K\times K}
\end{equation*}
to obtain:
\begin{equation*}
    \bar W_{t+1} \succcurlyeq K \sum_{s=1}^{t-1} e_{11} \otimes x_s x_s^\top + \lambda I_{Kd}
    = K \sum_{s=1}^{t-1} e_{11}^2 \otimes x_s x_s^\top + \lambda I_{Kd} 
    = K \sum_{s=1}^{t-1} (e_{11} \otimes x_s) (e_{11} \otimes x_s)^\top + \lambda I_{Kd} \,.
\end{equation*}
Which is equivalent to
\begin{equation*}
    \bar W_{t+1} \succcurlyeq \sum_{s=1}^{t-1} \sumK (e_{kk} \otimes x_s) (e_{kk} \otimes x_s)^\top + \lambda I_{Kd} 
    = \sum_{s=1}^{t-1} I_K \otimes x_s x_s^\top + \lambda I_{Kd} \,.
\end{equation*}
Therefore we have
\begin{equation*}
    \lVert \bar W_t^{-1/2} (I_K \otimes x_t) \nabla\mu(\theta_{t+1} x_t)^{1/2} \rVert_2^2 \le 
    \llVert \left( \sum_{s=1}^{t-1} I_K \otimes x_s x_s^\top + \lambda I_{Kd} \right)^{-1/2} (I_K \otimes x_t) \nabla\mu(\theta_{t+1} x_t)^{1/2} \rrVert_2^2 \,.
\end{equation*}
We now use that $\nabla\mu(\theta_{t+1}x_t) \preccurlyeq I_K$ and get
\begin{equation*}
    \lVert \bar W_t^{-1/2} (I_K \otimes x_t) \nabla\mu(\theta_{t+1} x_t)^{1/2} \rVert_2^2  \le 
    \llVert \left( \sum_{s=1}^{t-1} I_K \otimes x_s x_s^\top + \lambda I_{Kd} \right)^{-1/2} (I_K \otimes x_t) \rrVert_2^2 \,.
\end{equation*}
We now upper-bound the sum over $T$ using a Trace-Determinant argument:
\begin{align*}
    \sumT \lVert \bar W_t^{-1/2} (I_K \otimes x_t) \nabla\mu(\theta_{t+1} x_t)^{1/2} \rVert_2^2  
    &\le \sumT \lambda_{\max} \left( (I_K \otimes x_t^\top)  \left( \sum_{s=1}^{t-1} I_K \otimes x_s x_s^\top + \lambda I_{Kd} \right)^{-1} (I_K \otimes x_s) \right) \\
    &= \sumT \lambda_{\max} \left( (I_K \otimes x_t x_t^\top)  \left( \sum_{s=1}^{t-1} I_K \otimes x_s x_s^\top + \lambda I_{Kd} \right)^{-1} \right) \\
    &\le \sumT \lambda_{\max} \left( x_t x_t^\top \left( \sum_{s=1}^{t-1} x_s x_s^\top + \lambda I_d  \right) \right) \\
    &\le \sumT \Tr\left( x_t x_t^\top \left( \sum_{s=1}^{t-1} x_s x_s^\top + \lambda I_{d} \right)^{-1} \right) \,.
\end{align*}
Let us define $M_t := \sum_{s=1}^t x_s x_s^\top + \tfrac{\lambda}{2} I_d $,
we have that $M_t \preccurlyeq \sum_{s=1}^{t-1} x_s x_s^\top + \lambda I_d$ when $\lambda \ge 2$. We obtain
\begin{align*}
    \sumT \lVert \bar W_t^{-1/2} (I_K \otimes x_t) \nabla\mu(\theta_{t+1} x_t)^{1/2} \rVert_2^2 
    &\le \sumT \Tr\left( (M_t - M_{t-1}) M_t^{-1} \right) \\
    &\le \sumT \log \dfrac{|M_t|}{|M_{t-1}|} &&\text{\citep[Lemma~4.5]{hazan2016introduction}} \\
    &\le d \log\left( 1 + \dfrac{T}{\lambda d} \right) &&\text{(Lemma~\ref{lemma:det_trace inequality})} \,.
\end{align*}

\medskip \noindent
\textbf{Step 5: Bounding the sum of \(\lVert \nabla\mu(\theta_* x_t)^{1/2} \rho \rVert_2^2\).}\quad 
We add and subtract a term and get
\begin{align*}
    \sum_{t=1}^T &\lVert \nabla\mu(\theta_* x_t)^{1/2} \rho \rVert_2^2 \\
    &= \sum_{t=1}^T \langle \rho, \nabla\mu(\theta_*x_*)\rho\rangle + \sum_{t=1}^T \langle \rho, (\nabla\mu(\theta_*x_t) - \nabla\mu(\theta_*x_*)) \rho \rangle \\
    &\le R^2 T / \kappa_* + \sum_{t=1}^T \langle \rho, (\nabla\mu(\theta_*x_t) - \nabla\mu(\theta_*x_*)) \rho \rangle \,.
\end{align*}
We use the definition of \(\nabla\mu(.)\) and get
\begin{align}
    \sum_{t=1}^T &\langle \rho, (\nabla\mu(\theta_*x_t) - \nabla\mu(\theta_*x_*)) \rho \rangle \nonumber \\
    &= \sum_{t=1}^T \langle \rho, \diag(\mu(\theta_* x_t) - \mu(\theta_*x_*)) \rho \rangle + \langle \rho, (\mu(\theta_*x_*) \mu(\theta_*x_*)^\top - \mu(\theta_*x_t) \mu(\theta_*x_t)^\top ) \rho \rangle \nonumber \\
    &\le \sumT \sumK \rho_k^2 (\mu(\theta_* x_t)_k - \mu(\theta_* x_*)_k) + \langle \rho, \mu(\theta_* x_*) \rangle^2 - \langle \rho, \mu(\theta_*x_t)\rangle^2 \label{eq:regret bound step 5} \,.
\end{align}
We first tackle the left term. 
% We consider separately the cases in which $\cX$ is discrete and continuous.
% 
% \textbf{Case 1: $\cX$ is discrete.}
We add and subtract a term:
\begin{equation*}
    (\rho^{\odot 2})^\top (\mu(\theta_* x_t) - \mu(\theta_* x_*)) = \underbrace{ (\nu \rho + \rho^{\odot 2} )^\top \mu(\theta_* x_t) - (\nu \rho + \rho^{\odot 2} )^\top \mu(\theta_* x_*) }_{\le 0} + \nu \rho^\top (\mu(\theta_* x_*) - \mu(\theta_* x_t)) \,.
\end{equation*}
By definition of $x_*$ the difference is negative, we have:
\begin{equation*}
    (\rho^{\odot 2})^\top (\mu(\theta_* x_t) - \mu(\theta_* x_*)) \le \nu \rho^\top (\mu(\theta_* x_*) - \mu(\theta_* x_t)) \,.
\end{equation*}
Reusing our computation from Step~1, we get
\begin{equation*}
    (\rho^{\odot 2})^\top (\mu(\theta_* x_t) - \mu(\theta_* x_*)) \le 2\nu ( \epsilon_{1,t}(x_t) + \epsilon_{2,t}(x_t) ) \,.
\end{equation*}

We now tackle the right term:
\begin{align*}
    \sumT \langle \rho, \mu(\theta_* x_*) \rangle^2 &- \langle \rho, \mu(\theta_*x_t)\rangle^2 \\
    &= \sumT \langle \rho, \mu(\theta_*x_*) - \mu(\theta_* x_t) \rangle \langle\rho, \mu(\theta_*x_*)+\mu(\theta_*x_t)\rangle \\
    &\le \sumT \langle \rho, \mu(\theta_*x_*) - \mu(\theta_* x_t) \rangle \lVert \rho \rVert_\infty \lVert \mu(\theta_*x_*)+\mu(\theta_*x_t) \rVert_1 &&(\text{CS}) \\
    &\le 2R \sumT \langle \rho, \mu(\theta_*x_*) - \mu(\theta_* x_t) \rangle &&(\lVert \rho \rVert_\infty \le \lVert \rho \rVert_2) \\
    &\le 4R \sumT (\epsilon_{1,t}(x_t) + \epsilon_{2,t}(x_t)) \,. &&(\text{Proposition~\ref{prop:optimistic reward}}) 
\end{align*}
By substituting in Equation~\eqref{eq:regret bound step 5} we have
\begin{equation*}
    \sum_{t=1}^T \langle \rho, (\nabla\mu(\theta_*x_t) - \nabla\mu(\theta_*x_*)) \rho \rangle 
    \le \left( 4R + 2\nu \right) \sumT ( \epsilon_{1,t}(x_t) + \epsilon_{2,t}(x_t) ) \,.
\end{equation*}

\medskip \noindent
\textbf{Step 6: Putting everything together.}
\quad Combining our previous results we get
\begin{align*}
    \Reg_T(\mathrm{Learning}) \le &2 \sum_{t=1}^T \epsilon_{1,t}(x_t) + \epsilon_{2,t}(x_t) \\
    \le &6 R \kappa \sigma_T(\delta)^2 Kd \log\left(1 + \tfrac{T}{Kd\lambda \kappa} \right) \\
    &+e \sigma_T(\delta) \left[ d \log\left( 1 + \tfrac{T}{\lambda d} \right) \right]^{1/2} \left[ \tfrac{R^2 T}{\kappa_*} + \left( 4R + 2\nu \right) \sum_{t=1}^T \epsilon_{1,t}(x_t) + \epsilon_{2,t}(x_t) \right]^{1/2} \\
    \le &6 R \kappa \sigma_T(\delta)^2 Kd \log\left(1 + \tfrac{T}{Kd\lambda \kappa} \right) \\
    &+e \sigma_T(\delta) \left[ d \log\left( 1 + \tfrac{T}{\lambda d} \right) \right]^{1/2} R\sqrt{ \dfrac{T}{\kappa_*} } \\
    &+ e \sigma_T(\delta) \left[ d \log\left( 1 + \tfrac{T}{\lambda d} \right) \right]^{1/2} \left[ \left( 4R + 2\nu \right) \sum_{t=1}^T \epsilon_{1,t}(x_t) + \epsilon_{2,t}(x_t) \right]^{1/2} \,.
\end{align*}
\normalsize
We use the fact that \( x^2 - bx -c \le 0 \implies x^2 \le 2b^2 + 2c \) with \( x^2 =\sum_{t=1}^T \epsilon_{1,t}(x_t) + \epsilon_{2,t}(x_t) \) and get with probability \( 1-2\delta \)
\begin{align*}
    \Reg_T(\mathrm{Learning}) \le &12 R \kappa \sigma_T(\delta)^2 Kd \log\left(1 + \tfrac{T}{Kd\lambda \kappa} \right) \\
    &+ 2e \sigma_T(\delta) \left[ d \log\left( 1 + \tfrac{T}{\lambda d} \right) \right]^{1/2} R \sqrt{ \dfrac{ T}{\kappa_*} } \\
    &+4 e^2 \sigma_T(\delta)^2 (2R+\nu)  d \log\left( 1 + \tfrac{T}{\lambda d} \right) \\
    \le &\cste d \sqrt{K} \log(T/\delta) R \sqrt{\dfrac{T}{\kappa_*}} + \cste (1+S) \left( R + \nu \right) \kappa K^2 d^{2} \log^2(T/\delta) \,.
\end{align*}
where applying the Union Bound gives the result with probability \(1-2\delta\).

\end{proof}

\subsection{The Action \texorpdfstring{$x_*$}{x*} is Well-defined}\label{appendix:x_* well defined}
In this section, we establish the existence of the optimal action $x_*$ used in the proof of Theorem~\ref{thm:regret bound}. 
% We distinguish between the cases where $\cX$ is discrete and where it is continuous. In the continuous setting we consider an $\epsilon$-net $\cX_\epsilon$ of $\cX$. For all $x\in\cX$ the Voronoi region $ C_x $ is defined as $ C_x := \{ \tilde x \in \cX : \Pi_\epsilon(\tilde x) = \Pi_\epsilon(x) \} $ where $\Pi_{\epsilon}$ is the projection on $\cX_\epsilon$.

\begin{restatable}{lmm}{}\label{lemma:x_* exists}
Let $\cX_* := \argmax_{x\in\cX} \rho^\top \mu(\theta_* x) $ and $x_* \in \cX_*$. Let us define $\nu$ as 
\begin{equation*}
    \nu := \dfrac{2 \max_{x_1, x_2 \in\cX} | (\rho^{\odot 2})^\top (\mu(\theta_* x_1) - \mu(\theta_* x_2) | }{ \rho^\top \mu(\theta_* x_*) - \max_{x\in\cX \backslash \cX_*} \rho^\top \mu(\theta_* x) } \,.
\end{equation*}
% \begin{align*}
%     \nu &:= \dfrac{2 \max_{x_1, x_2 \in\cX} | (\rho^{\odot 2})^\top (\mu(\theta_* x_1) - \mu(\theta_* x_2) | }{ \rho^\top \mu(\theta_* x_*) - \max_{x\in\cX \backslash \cX_*} \rho^\top \mu(\theta_* x) } &\text{ when } \cX \text{ is discrete} \\
%     \nu &:= \dfrac{2 \max_{x_1, x_2 \in\cX} | (\rho^{\odot 2})^\top (\mu(\theta_* x_1) - \mu(\theta_* x_2) | }{ \rho^\top \mu(\theta_* x_*) - \max_{x\in\cX \backslash ( \cup_{x_0\in \cX_*} C_{x_0} ) } \rho^\top \mu(\theta_* x) } &\text{ when } \cX \text{ is continuous} \,. 
% \end{align*}
Then
\begin{equation*}
    \left( \argmax_{x\in\cX} (\nu \rho + \rho^{\odot 2})^\top \mu(\theta_* x) \right) \subseteq \left( \argmax_{x\in\cX} \rho^\top \mu(\theta_* x) \right) \,.
\end{equation*}
% \begin{align*}
%     \left( \argmax_{x\in\cX} (\nu \rho + \rho^{\odot 2})^\top \mu(\theta_* x) \right) \subseteq \left( \argmax_{x\in\cX} \rho^\top \mu(\theta_* x) \right) &\text{ when } \cX \text{ is discrete} \\
%     \left( \argmax_{x\in\cX} \nu \rho^\top \mu(\theta_* x) + (\rho^{\odot 2})^\top \mu(\theta_* \Pi_\epsilon(x)) \right)  \subseteq \left( \argmax_{x\in\cX} \rho^\top \mu(\theta_* x) \right) &\text{ when } \cX \text{ is continuous} \,.
% \end{align*}
\end{restatable}

\begin{proof}
% We prove the discrete and continuous cases separately.

% \textbf{Case 1: $\cX$ is discrete.}

Let $ x_*^\nu \in \argmax_{x\in \cX} \nu \rho^\top \mu(\theta_* x) + (\rho^{\odot2})^\top \mu(\theta_* x) $. 
We show by contradiction that $x_*^\nu \in \cX_*$. Let us assume that $x_*^\nu \notin \cX_*$. By the definition of $x_*^\nu$ we have
\begin{equation*}
    \nu \rho^\top \mu(\theta_* x_*) + (\rho^{\odot 2})^\top \mu(\theta_* x_*) \le \nu \rho^\top \mu(\theta_* x_*^\nu) + (\rho^{\odot 2})^\top \mu(\theta_* x_*^\nu) \,.
\end{equation*}
We rearrange the terms to get
\begin{equation*}
    \nu \rho^\top ( \mu(\theta_* x_*) - \mu(\theta_* x_*^\nu))  \le (\rho^{\odot 2})^\top ( \mu(\theta_* x_*^\nu) - \mu(\theta_* x_*) ).
\end{equation*}
Using the definition of $\nu$ we get
\begin{equation*}
    2 \max_{x_1, x_2 \in\cX} | (\rho^{\odot 2})^\top (\mu(\theta_* x_1) - \mu(\theta_* x_2) | 
    \le (\rho^{\odot 2})^\top ( \mu(\theta_* x_*^\nu) - \mu(\theta_* x_*) ).
\end{equation*}
We obtain the contradiction $ 2 \le 1 $ and thus $x_*^\nu \in \cX_* $.

% \textbf{Case 2: $\cX$ is continuous.}

% Let $ x_*^\nu \in \argmax_{x\in \cX} \nu \rho^\top \mu(\theta_* x) + (\rho^{\odot2})^\top \mu(\theta_* \Pi_\epsilon(x)) $. We show by contradiction that $ x_*^\nu \in \cX_* $. Let us assume that $x_*^\nu \notin \cup_{x_*\in\cX_*} C_{x_*} $. By the definition of $x_*^\nu$ we have
% \begin{equation*}
%     \nu \rho^\top \mu(\theta_* x_*) + (\rho^{\odot 2})^\top \mu(\theta_* \Pi_\epsilon(x_*)) \le \nu \rho^\top \mu(\theta_* x_*^\nu) + (\rho^{\odot 2})^\top \mu(\theta_* \Pi_\epsilon(x_*^\nu)) \,.
% \end{equation*}
% We rearrange the terms to get
% \begin{equation*}
%     \nu \rho^\top ( \mu(\theta_* x_*) - \mu(\theta_* x_*^\nu)) \le (\rho^{\odot 2})^\top ( \mu(\theta_* \Pi_\epsilon(x_*^\nu)) - \mu(\theta_* \Pi_\epsilon(x_*)) ) \,.
% \end{equation*}
% Using the definition of $\nu$ we get
% \begin{equation*}
%     2 \max_{x_1, x_2 \in\cX} | (\rho^{\odot 2})^\top (\mu(\theta_* x_1) - \mu(\theta_* x_2) | \le (\rho^{\odot 2})^\top ( \mu(\theta_* \Pi_\epsilon(x_*^\nu)) - \mu(\theta_* \Pi_\epsilon(x_*)) ) \,.
% \end{equation*}
% We obtain the contradiction $ 2 \le 1 $ and thus $x_*^\nu \in \cX_* $.
\end{proof}

\newpage
\section{Proof of Theorem~\ref{thm:lower bound}}\label{appendix:proof lower bound}
\ThmLowerBound*

\begin{proof}
We use the canonical bandit probability space \( (\Omega_t, \F_t, \pr_{\pi \theta \rho}) \) of \citet[Section~4.7]{lattimore2020bandit}. To simplify let us denote \( \pr_\theta = \pr_{\pi \theta \rho} \) the probability of the random sequence \( \{x_1, y_1, \dots , x_T, y_T \} \) obtained by having the algorithm \(\pi\) interact with the environment \((\theta, \rho)\). The expectation \(\E_\theta\) is computed with respect to the probability \(\pr_\theta\).

\medskip
We start by defining an instance of a MNL bandit problem.
Let \(\theta_0 = \Pi M_0\) with \( M_0\in\R^{K\times d} \) be defined as follows
\begin{equation*}
M_0 := \dfrac{1}{\sqrt{K+3}}
\begin{bmatrix}
    2 & 0 &\dots & 0 \\
    1 & 0 & \dots & 0 \\
    \vdots & \vdots & \ddots & \vdots \\
    1 & 0 & \dots & 0
\end{bmatrix} 
\end{equation*}
and \( \rho\in\R^K \) be defined by 
\begin{equation*}
\rho := \dfrac{R}{\sqrt{K+3}}
\begin{bmatrix}
    2 \\
    1 \\
    \vdots \\
    1
\end{bmatrix} \,.
\end{equation*}
Even though this defines a binary problem, we cannot directly apply the proof of \citet{abeille2021instance} as for \(K\ge 3\) our \(\kappa_*\) will be different than in the binary setting. Indeed the probability distributions of the reward are different, see the \(\ln(K-1)\) term in Equation~\eqref{eq:proof lower bound eq proba}.

We define the action set by the sphere \( \cX = \mathcal{S}_1(\R^d) \). 
We show that a slight variation \(\tilde M\) of the matrix \(M_0\) results in a regret lower-bounded by \(  \Omega(Rd\sqrt{KT/\kappa_*(\theta)} )\). 
Let us define the set of perturbed matrices \( \cM \) by
\begin{equation*}
    \cM := \left\{ M_0 + \epsilon \sum_{i=2}^d v_i e_{1i} + \dfrac{\epsilon}{2} \sum_{k=2}^K \sum_{i=2}^d v_i e_{ki}
    \quad , v\in\{-1, 1\}^d \right\} 
\end{equation*}
where \( \epsilon >0\) is to be defined later. 
%\textcolor{red}{$ \epsilon.\lVert [M_0]_1\rVert_2 /2 = \epsilon/\sqrt{K+3} $, need to keep assumption below $ \epsilon \le 2 /\sqrt{(K+3)(d-1)} $, yields a $\epsilon\le 1/K$}
%\textcolor{blue}{makes sense to normalise by $\lVert [M_0]_1 \rVert_2$, it reduces to the binary case in which $\lVert M_{\mathrm{binary}}\rVert_2 \approx 1$}
%
For now we only assume that \[ {\epsilon \le \lVert [M_0]_1 \rVert_2 / \sqrt{d-1} = 2 / \sqrt{(K+3)(d-1)} } \,.\] 
Note that we do not modify the first column. %Let us define $ \Xi := \Pi \cM $.
%We assume that for all \(M\in\cM\), we have \( \Reg_T(\Pi M) \le d \sqrt{KT/\kappa_*} \). Note if this assumption does not hold, there exists \(M\in\cM\) such that \( \Reg_T(\Pi M) \ge \Omega(d \sqrt{KT/\kappa_*}) \).
Let \( \smash{x_*(\theta) := \argmax_{x\in\cX} \rho^\top \mu(\theta x) } \). Let $M\in\R^{K\times d}$, as for \(\theta = \Pi M\) we have \( \mu(\theta x) = \mu(Mx) \), we may abuse the notation and write \( x_*(M) = x_*(\theta) \). For every problem instance \( \smash{(\theta = \Pi M \in \Pi\cM, \rho, \mathcal{S}_1(\R^d) ) } \), we have that \( \smash{x_*(\theta) = [M]_1 / \lVert [M]_1 \rVert_2 }\).

\medskip
We introduce a second set \(\smash{\tilde \cM \subseteq \R^{K\times d}}\) of matrices, 
which is in bijection with \(\cM\). This alternative set simplifies the presentation 
of the proof, but should be regarded as equivalent to \(\cM\). It is defined as follows:
\begin{equation*}
    \tilde \cM := \left\{ 
    \begin{bmatrix}
        & [M]_1 / \lVert [M]_1 \rVert_2 & \\
        0 & \dots & 0 \\
        \vdots & \ddots & \vdots \\
        0 & \dots & 0
    \end{bmatrix} 
    : M \in \cM 
    \right\} \,.
\end{equation*}
We denote by $ \gamma : \cM \to \tilde \cM $ the canonical bijection from $\cM$ to $\tilde \cM$. For all $M\in\cM$ we have that $ \argmax_{x\in\cX} \rho^\top \mu(M x) = \argmax_{x\in\cX} \rho^\top \mu(\gamma(M)x) $. 

\medskip
We assume that for all \(\tilde M \in \tilde\cM\),
\(
\Reg_T(\Pi \tilde M) \leq R d \sqrt{KT/\kappa_*},
\)
which can be done without loss of generality, since otherwise the lower-bound 
already holds.
The proof then consists in showing that there exists a matrix $M_*\in \tilde \cM$ such that, for $\theta_* = \Pi M_*$, the regret is lower-bounded as $\Reg_T(\theta_*) \ge \cste Rd\sqrt{KT/\kappa_*}$.

\medskip \noindent
\textbf{Step 1: Lower-bounding by the optimum regime.}
In this step, we follow the idea of Proposition~6 from \citet{abeille2021instance}. Let \(\ttheta = \Pi \tilde M \in \Pi\tilde\cM\), we lower-bound the regret $\smash{\Reg_T(\ttheta)}$ by the derivative of the sigmoid function in the optimum $\smash{\mu'\left([\tilde M]_1^\top x_*(\ttheta) -\ln(K-1)\right)}$.
We first express the regret \(\Reg_T(\ttheta)\) in terms of Bernoulli variables.
\begin{align*}
    \Reg_T(\ttheta) &:= \sumT \rho^\top \mu(\ttheta x_*(\ttheta)) - \rho^\top \mu(\ttheta x_t) \\
    &= \sumT \rho^\top \mu(\tilde M x_*(\theta)) - \rho^\top \mu(\tilde M x_t) \\
    &=\sumT \rho_1 [ \pr_{\ttheta}(\rho_1 | x_*(\ttheta)) - \pr_{\ttheta}(\rho_1 | x_t) ] + \rho_2 [ \pr_{\ttheta}(\rho_2 | x_*(\ttheta)) - \pr_{\ttheta}(\rho_2 | x_t) ] \\
    &= \sumT \rho_1 [ \pr_{\ttheta}(\rho_1 | x_*(\ttheta)) - \pr_{\ttheta}(\rho_1 | x_t) ] + \rho_2 [ 1 - \pr_{\ttheta}(\rho_1 | x_*(\ttheta)) - 1 + \pr_{\ttheta}(\rho_1 | x_t) ]
\end{align*}
where we have $ \pr_{\ttheta}(\rho_1 | x) = [\mu(\tilde M x)]_1 $ and $ \pr_{\ttheta}(\rho_2 | x) = \sum_{k=2}^K [\mu(\tilde M x)]_k $.
Using the definition of $\rho$ we get
\begin{equation*}
    \Reg_T(\ttheta) = \dfrac{R}{\sqrt{K+3}} \sumT [ \pr_{\ttheta}(\rho_1 | x_*(\ttheta)) - \pr_{\ttheta}(\rho_1 | x_t) ] \,.
\end{equation*}
% Let us study the distribution \( P_{x_t}^{\ttheta} \). We have that
Substituting,
\begin{multline}\label{eq:proof lower bound eq proba}
    \pr_{\ttheta}(\rho_1 | x) = [\mu(\tilde Mx)]_1 = \dfrac{\exp([\tilde M]_1^\top x)}{\exp([\tilde M]_1^\top x) + (K-1) } \\
    = \dfrac{1}{1+\exp(-[\tilde M]_1^\top x + \ln(K-1))}
    = \mu([\tilde M]_1\top x - \ln(K-1)) \,.
\end{multline}
% and similarly
% \begin{equation*}
%     \pr_{\ttheta}(\rho_2 | x) = \dfrac{1}{1 + \exp([\tilde M]_1^\top  x - \ln(K-1)) } \,.
% \end{equation*}
% We write the probabilities using the Sigmoid and apply the Mean-value Theorem:
Thus we get
\begin{equation*}
    \Reg_T(\ttheta) \overset{\text{\eqref{eq:proof lower bound eq proba}}}{=} \dfrac{R}{\sqrt{K+3}} \sumT \mu \left( [\tilde M]_1^\top x_*(\ttheta) -\ln(K-1) \right) - \mu \left( [\tilde M]_1^\top x_t -\ln(K-1) \right) \,.
\end{equation*}
We now apply the Mean-value Theorem:
% \begin{equation}
%     \Reg_T(\ttheta) = \dfrac{R}{\sqrt{K+3}} \sumT \alpha \left( [\tilde M]_1^\top x_*(\ttheta) -\ln(K-1), [\tilde M]_1^\top x_t -\ln(K-1) \right) \left([\tilde M]_1^\top (x_*(\ttheta) - x_t) \right) \label{eq:proof lower bound eq 2} 
% \end{equation}
\begin{equation}
    \Reg_T(\ttheta) = \dfrac{R}{\sqrt{K+3}} \sumT \int_0^1 \mu'\left( [\tilde M]_1^\top (v x_*(\ttheta) + (1-v) x_t) - \ln(K-1) \right) \mathrm{d}v \left([\tilde M]_1^\top (x_*(\ttheta) - x_t) \right) \,. \label{eq:proof lower bound eq 2} 
\end{equation}
Using the self-concordance property \citep[Corollary~2]{sun2019generalized} on $\mu'$ between $[\tilde M]_1^\top x_t$ and $[\tilde M]_1^\top x_*(\ttheta)$, we get
\begin{align*}
    \Reg_T(\ttheta) &\ge \dfrac{R}{\sqrt{K+3}} \dfrac{1}{1+ \llVert [\tilde M]_1^\top (x_*(\ttheta) - x_t) \rrVert_2 } \sumT \mu'\left( [\tilde M]_1^\top x_*(\ttheta) -\ln(K-1) \right) \left( [\tilde M]_1^\top (x_*(\ttheta) - x_t) \right) \\
    &\ge \dfrac{R}{\sqrt{K+3}} \dfrac{1}{1+ \llVert [\tilde M]_1 \rrVert_2 \lVert (x_*(\ttheta) - x_t) \rVert_2 } \sumT \mu'\left([\tilde M]_1^\top x_*(\ttheta) -\ln(K-1)\right) \left( [\tilde M]_1^\top (x_*(\ttheta) - x_t) \right) \\
    &\ge \dfrac{R}{\sqrt{K+3}} \dfrac{1}{1+ 2\lVert [\tilde M]_1 \rVert_2 } \sumT \mu'\left([\tilde M]_1^\top x_*(\ttheta) -\ln(K-1)\right) \left([\tilde M]_1^\top (x_*(\ttheta) - x_t) \right) \\
    &\ge \dfrac{R}{3\sqrt{K+3}} \sumT \mu'\left( [\tilde M]_1^\top x_*(\ttheta) -\ln(K-1)\right) \left( [\tilde M]_1^\top (x_*(\ttheta) - x_t) \right)
\end{align*}
where the second inequality is by Cauchy-Schwarz inequality, the third inequality is because the actions are in the sphere $\mathcal{S}_1(\R^d) $, and the last inequality is because $\lVert [\tilde M]_1 \rVert_2 = 1 $. Using the definition of \(x_*\) we have that
\begin{equation*}
    ([\tilde M]_1^\top x_*(\ttheta) - [\tilde M]_1^\top x_t) = \lVert [\tilde M]_1 \rVert_2 \left( 1 - \tfrac{[\tilde M]_1^\top}{\lVert [\tilde M]_1 \rVert_2} x_t \right) =  \lVert [\tilde M]_1 \rVert_2 \tfrac{1}{2} \lVert x_*(\ttheta) - x_t \rVert_2^2 
\end{equation*}
where the last equality is due to \( 1 - x^\top y = \tfrac{1}{2} \lVert x - y \rVert_2^2 \) for all \( x, y \in \mathcal{S}_1(\R^d) \). Thus we obtain
\begin{align}
    \Reg_T(\ttheta) &\ge \dfrac{R\lVert [\tilde M]_1 \rVert_2}{6\sqrt{K+3}} \sumT \mu'\left([\tilde M]_1^\top x_*(\ttheta) -\ln(K-1)\right) \lVert x_*(\ttheta) - x_t \rVert_2^2 \nonumber \\
    &=\dfrac{R}{6\sqrt{K+3}} \sumT \mu'\left([\tilde M]_1^\top x_*(\ttheta) -\ln(K-1)\right) \lVert x_*(\ttheta) - x_t \rVert_2^2 \label{eq:proof lower bound eq 1} \\
    &\ge \dfrac{R}{6\sqrt{K+3}} \mu'\left([\tilde M]_1^\top x_*(\ttheta) -\ln(K-1)\right) \sumT \sum_{i=2}^d [ x_*(\ttheta) - x_t ]^2_i  \,. \nonumber
\end{align}

Let us denote $ M = \gamma^{-1}(\tilde M) \in \cM $ and $\theta = \Pi M$, we have $x_*(\theta) = x_*(\ttheta)$. Thus 
\begin{equation}\label{eq:inside proof lower bound 3}
    \Reg_T(\tilde \theta) \ge \dfrac{R}{6\sqrt{K+3}} \mu'\left([\tilde M]_1^\top x_*(\ttheta) -\ln(K-1)\right) \sumT \sum_{i=2}^d [ x_*(\theta) - x_t ]^2_i  \,.
\end{equation}
Let us define the event $A_i(\theta)$ for all $i\in \llbracket d \rrbracket$ and all $\theta \in \Pi \R^{K\times d}$ as 
\begin{equation*}
    A_i(\theta) := \left\{ [x_*(\theta) - x_*(\theta_0)]_i \cdot \left[ x_*(\theta_0) - \dfrac{1}{T} \sumT x_t \right] \ge 0 \right\} \,.
\end{equation*}
We bound the regret of any $ \ttheta = \Pi \tilde M \in \Pi \tilde \cM $ using the event $A_i(\theta)$.
By applying Lemma~3 of \citep{abeille2021instance} we obtain
\begin{equation*}
    \sumT \sum_{i=2}^d [ x_*(\theta) - x_t ]^2_i \ge \dfrac{3T\epsilon^2}{8\lVert [M]_1 \rVert_2^2} \sum_{i=2}^d \pr_{\ttheta} (A_i(\theta)) \,.
\end{equation*}
We apply this result in Equation~\eqref{eq:inside proof lower bound 3} to get
\begin{align}
    \E_{\ttheta}\left[\Reg_T(\ttheta)\right] &\ge \dfrac{RT\epsilon^2}{16\sqrt{K+3}\lVert [M]_1 \rVert_2^2} \mu'\left([\tilde M]_1^\top x_*(\ttheta) -\ln(K-1)\right) \sum_{i=2}^d \pr_{\ttheta} (A_i(\theta)) \nonumber \\
    &= \dfrac{RT\epsilon^2\sqrt{K+3}}{64} \mu'\left([\tilde M]_1^\top x_*(\ttheta) -\ln(K-1)\right) \sum_{i=2}^d \pr_{\ttheta} (A_i(\theta)) \label{eq:inside proof lower bound 7} \,.
\end{align}

%Using the self-concordance property we get
%\begin{align*}
    %\Reg_T(\theta) &\ge \dfrac{1}{6\sqrt{K+3}} \dfrac{1}{1+ \llVert ([\tilde M]_1 - [M]_1/2)^\top x_*(\theta)  \rrVert_2} \mu'\left([M]_1^\top x_*(\theta) /2 -\ln(K-1)\right) \sumT \sum_{i=2}^d [ x_*(\theta) - x_t ]^2_i \\
    %&\ge \dfrac{1}{15\sqrt{K+3}} \mu'\left([M]_1^\top x_*(\theta) /2 -\ln(K-1)\right) \sumT \sum_{i=2}^d [ x_*(\theta) - x_t ]^2_i
%\end{align*}
%where the last inequality is due to Cauchy-Schwarz inequality.

\medskip \noindent
\textbf{Step 2: Showing that $\mu'\big([\tilde M]_1^\top x_*(\ttheta) -\ln(K-1)\big) = (K+3) / \kappa_*(\ttheta)$.}\\
Recall that $ \kappa_*(\ttheta) $ is defined by
\begin{equation*}
    \kappa_*(\ttheta)^{-1} := \dfrac{\rho^\top \nabla\mu(\ttheta x_*(\ttheta)) \rho }{\lVert \rho \rVert_2^2} = \dfrac{\rho^\top \mathrm{diag}(\mu(\ttheta x_*(\ttheta))) \rho - \rho^\top \mu(\ttheta x_*(\ttheta)) \mu(\ttheta x_*(\ttheta))^\top \rho}{R^2} \,.
\end{equation*}
This develops into
\begin{align*}
    \dfrac{R^2}{\kappa_*(\ttheta)} &= \rho_1^2 \mu(\ttheta x_*(\ttheta))_1 ( 1 - \mu(\ttheta x_*(\ttheta))_1 
    + \sum_{k=2}^K \rho_k \left[ \mu(\ttheta x_*(\ttheta))_k \sum_{i=2}^K \rho_i \left( \delta_{ik} - \mu(\ttheta x_*(\ttheta))_i \right) \right] \\
    &\qquad - 2  \rho_1 \mu(\ttheta x_*(\ttheta))_1 \sum_{k=2}^K \rho_k \mu(\ttheta x_* (\ttheta)_k  \,.
\end{align*}

We start by using the definition of $\mu'$:
\begin{align*}
    \mu'([\tilde M]_1^\top x_*(\ttheta) -\ln(K-1)) 
    &= \pr_{\ttheta}(\rho_2 | x_*(\ttheta)) (1-\pr_{\ttheta}(\rho_2 | x_*(\ttheta)) ) \\
    &= \sum_{k=2}^K \mu(\ttheta x_*(\ttheta))_k \left(1 - \sum_{i=2}^K \mu(\ttheta x_*(\ttheta))_i \right) \\
    &= \sum_{k=2}^K \left[ \mu(\ttheta x_*(\ttheta))_k \sum_{i=2}^K \delta_{ik} - \mu(\ttheta x_*(\ttheta))_i \right] \\
    &= \sum_{k=2}^K \dfrac{\rho_k}{\rho_k} \left[ \mu(\ttheta x_*(\ttheta))_k \sum_{i=2}^K \dfrac{\rho_i}{\rho_i} \left( \delta_{ik} - \mu(\ttheta x_*(\ttheta))_i \right) \right] \,.
\end{align*}
Using the definition of $\rho$ we get 
\begin{equation}\label{eq:inside proof lower bound 4}
    \mu'([\tilde M]_1^\top x_*(\ttheta) -\ln(K-1))
    = \dfrac{K+3}{R^2} \sum_{k=2}^K \rho_k \left[ \mu(\ttheta x_*(\ttheta))_k \sum_{i=2}^K \rho_i \left( \delta_{ik} - \mu(\ttheta x_*(\ttheta))_i \right) \right] \,.
\end{equation}
Let us now consider $ 4 \mu(\ttheta x_*(\ttheta))_1 (1-\mu(\ttheta x_*(\ttheta))_1) $:
\begin{align*}
    4 \mu(\ttheta x_*(\ttheta))_1 (1-\mu(\ttheta x_*(\ttheta))_1) &= 
    4 \dfrac{\rho_1^2}{\rho_1^2} \mu(\ttheta x_*(\ttheta))_1 (1-\mu(\ttheta x_*(\ttheta))_1) \\
    &= 4 \dfrac{K+3}{4R^2} \rho_1^2 \mu(\ttheta x_*(\ttheta))_1 (1-\mu(\ttheta x_*(\ttheta))_1) \\
    &= \dfrac{K+3}{R^2} \rho_1^2 \mu(\ttheta x_*(\ttheta))_1 (1-\mu(\ttheta x_*(\ttheta))_1) \,.
\end{align*}
We can write $ 4 \mu(\ttheta x_*(\ttheta))_1 (1-\mu(\ttheta x_*(\ttheta))_1) $ differently to obtain:
\begin{align*}
    4 \mu(\ttheta x_*(\ttheta))_1 (1-\mu(\ttheta x_*(\ttheta))_1) &= 
    4 \mu(\ttheta x_*(\ttheta))_1 \sum_{k=2}^K \mu(\ttheta x_*(\ttheta))_k \\
    &= 4 \dfrac{\rho_1}{\rho_1} \mu(\ttheta x_*(\ttheta))_1 \sum_{k=2}^K \dfrac{\rho_k}{\rho_k} \mu(\ttheta x_*(\ttheta))_k \\
    &= 2 \cdot 2 \dfrac{K+3}{2R^2} \rho_1 \mu(\ttheta x_*(\ttheta))_1 \sum_{k=2}^K \rho_k \mu(\ttheta x_*(\ttheta))_k  \\
    &= 2 \dfrac{K+3}{R^2} \rho_1 \mu(\ttheta x_*(\ttheta))_1 \sum_{k=2}^K \rho_k \mu(\ttheta x_*(\ttheta))_k \,.
\end{align*}
We add and subtract $4\mu(\ttheta x_*(\ttheta))_1 (1-\mu(\ttheta x_*(\ttheta))_1)$ in Equation~\eqref{eq:inside proof lower bound 4} to obtain the desired result:
\begin{align*}
    \mu'([\tilde M]_1^\top &x_*(\ttheta) -\ln(K-1)) \\
    &= \dfrac{K+3}{R^2} \sum_{k=2}^K \rho_k \left[ \mu(\ttheta x_*(\ttheta))_k \sum_{i=2}^K \rho_i \left( \delta_{ik} - \mu(\ttheta x_*(\ttheta))_i \right) \right] \\
    &\qquad +\dfrac{K+3}{R^2} \rho_1^2 \mu(\ttheta x_*(\ttheta))_1 (1-\mu(\ttheta x_*(\ttheta))_1) -2 \dfrac{K+3}{R^2} \rho_1 \mu(\ttheta x_*(\ttheta))_1 \sum_{k=2}^K \rho_k \mu(\ttheta x_*(\ttheta))_k \\
    &= \dfrac{K+3}{\kappa_*(\ttheta)}\,.
\end{align*}
By substituting into Equation~\eqref{eq:inside proof lower bound 7} we obtain
\begin{equation}\label{eq:inside proof lower bound 5}
    \Reg_T(\ttheta) \ge \dfrac{RT\epsilon^2(K+3)^{3/2}}{64\kappa_*(\ttheta)} \sum_{i=2}^d \pr_{\ttheta} (A_i(\theta)) \,.
\end{equation}

\medskip\noindent
\textbf{Step 3: Averaging Hammer and Average Relative Entropy.} \quad
Let us define $ \smash{\Xi := \Pi \tilde \cM } $. In order to find a $\ttheta \in \Xi$ with a large regret lower-bound, we use the averaging hammer technique as in \citet[Section~24.1]{lattimore2020bandit}.
Let us recall Lemma~4 from \citep{abeille2021instance}, the following holds:
\begin{equation}\label{eq:inside proof lower bound 8}
    \dfrac{1}{|\Xi |} \sum_{\ttheta \in \Xi} \sum_{i=2}^d \pr_{\ttheta} (A_i(\theta)) \ge \dfrac{d}{4} - \dfrac{\sqrt{d}}{2} \sqrt{ \dfrac{1}{|\Xi |} \sum_{\ttheta \in \Xi} \sum_{i=2}^d KL \left(\pr_{\ttheta}, \pr_{\flipi(\ttheta)} \right) } \,.
\end{equation}
where the flipping operator $\flipi$ is defined by
\begin{equation*}
    [\flipi(\theta)]_i = -[\theta]_i \qquad\text{and}\qquad [\flipi(\theta)]_j = [\theta]_j \text{ for all } j\neq i \,.
\end{equation*}

We study the average relative entropy and upper-bound it by the regret.
Let us denote \( P_{x_t}^{\ttheta} = \pr_{\ttheta}(\cdot|x) \). Using the Divergence Decomposition Lemma \citep[Exercise~15.8(b)]{lattimore2020bandit} and the fact that the \(\chi^2\)-divergence upper-bounds the KL divergence we get
\begin{equation}\label{eq:inside proof lower bound 1}
    KL \left( \pr_{\ttheta}, \pr_{\flipi(\ttheta)} \right) = \E_{\ttheta} \left[ \sum_{t=1}^T KL\left( P_{x_t}^{\ttheta}, P_{x_t}^{\flipi(\ttheta)} \right) \right] \le \E_{\ttheta} \left[ \sum_{t=1}^T D_{\chi^2} \left(  P_{x_t}^{\ttheta}, P_{x_t}^{\flipi(\ttheta)} \right) \right] \,.
\end{equation}
Remember that we have
\begin{equation*}
    \pr_{\ttheta}(\rho_1 | x) = \dfrac{1}{1+\exp(-[\tilde M]_1^\top x + \ln(K-1))} = \mu\left( [\tilde M]_1^\top x - \ln(K-1) \right) \,.
\end{equation*}
Thus the multinomial variables $P_{x_t}^{\ttheta}$ and $P_{x_t}^{\flipi(\ttheta)}$ can be written as Bernoulli variables.
Therefore by substituting in Equation~\eqref{eq:inside proof lower bound 1} we have that
\small
\begin{multline*}
    KL \left( \pr_{\ttheta}, \pr_{\flipi(\ttheta)} \right) \\
    \le \E_{\ttheta} \left[ \sum_{t=1}^T  D_{\chi^2} \left( \mathrm{Bernoulli}\left(\mu\left([\tilde M]_1^\top  x_t - \ln(K-1)\right)\right), \mathrm{Bernoulli}\left(\mu\left( [\flipi( \tilde M)]_1^\top  x_t - \ln(K-1)\right)\right) \right) \right] \,.
\end{multline*}
\normalsize
Using the expression of the \(\chi^2\)-divergence for Bernoulli random variables gives
\begin{equation*}
    KL \left( \pr_{\ttheta}, \pr_{\flipi(\ttheta)} \right) \le \E_\theta \left[ \sum_{t=1}^T \dfrac{ \left( \mu\left( [\tilde M]_1^\top  x_t - \ln(K-1)\right) - \mu\left( [\flipi( \tilde M)]_1^\top  x_t - \ln(K-1)\right) \right)^2 }{\mu'\left( [\flipi(\tilde M)]_1^\top  x_t - \ln(K-1)\right)} \right] \,.
\end{equation*}
We apply the Mean-value Theorem and get 
% \begin{equation*}
%     KL \left( \pr_{\ttheta}, \pr_{\flipi(\ttheta)} \right)
%     \le \E_{\ttheta} \left[ \sum_{t=1}^T  \tfrac{\alpha^2\left( [\tilde M]_1^\top  x_t - \ln(K-1), [\flipi(\tilde M)]_1^\top  x_t - \ln(K-1) \right) }{\mu'\left([\flipi(\tilde M)]_1^\top  x_t - \ln(K-1)\right)} \left(( [\tilde M]_1 - [\flipi(\tilde M)]_1 )^\top x_t \right)^2 \right] \,.
% \end{equation*}
\begin{equation*}
    KL \left( \pr_{\ttheta}, \pr_{\flipi(\ttheta)} \right)
    \le \E_{\ttheta} \left[ \sum_{t=1}^T  \tfrac{\left( \int_0^1 \mu'\left( \left( v [\tilde M]_1 + (1-v) [\flipi(\tilde M)]_1 \right)^\top x_t - \ln(K-1) \right)\mathrm{d}v\right)^2 }{\mu'\left([\flipi(\tilde M)]_1^\top  x_t - \ln(K-1)\right)} \left(( [\tilde M]_1 - [\flipi(\tilde M)]_1 )^\top x_t \right)^2 \right] \,.
\end{equation*}
Now applying the self-concordance property gives
% \begin{align*}
%     &\alpha\left( [\tilde M]_1^\top  x_t - \ln(K-1), [\flipi(\tilde M)]_1^\top  x_t - \ln(K-1) \right) \\
%     &\qquad \le \mu'\left([\flipi(\tilde M)]_1^\top  x_t - \ln(K-1)\right) \exp\left( \left| ([\tilde M]_1 - [\flipi(\tilde M)]_1)^\top  x_t \right| \right) \\
%     \text{and}& \\
%     &\alpha\left( [\tilde M]_1^\top  x_t - \ln(K-1), [\flipi(\tilde M)]_1^\top  x_t - \ln(K-1) \right) \\
%     &\qquad \le \mu'\left([\tilde M]_1^\top  x_t - \ln(K-1)\right) \exp\left( \left| ([\tilde M]_1 - [\flipi(\tilde M)]_1)^\top  x_t \right| \right) \,.
% \end{align*}
\begin{align*}
    &\int_0^1 \mu'\left( \left( v [\tilde M]_1 + (1-v) [\flipi(\tilde M)]_1 \right)^\top x_t - \ln(K-1) \right)\mathrm{d}v \\
    &\qquad \le \mu'\left([\flipi(\tilde M)]_1^\top  x_t - \ln(K-1)\right) \exp\left( \left| ([\tilde M]_1 - [\flipi(\tilde M)]_1)^\top  x_t \right| \right) \\
    \text{and}& \\
    &\int_0^1 \mu'\left( \left( v [\tilde M]_1 + (1-v) [\flipi(\tilde M)]_1 \right)^\top x_t - \ln(K-1) \right)\mathrm{d}v \\
    &\qquad \le \mu'\left([\tilde M]_1^\top  x_t - \ln(K-1)\right) \exp\left( \left| ([\tilde M]_1 - [\flipi(\tilde M)]_1)^\top  x_t \right| \right) \,.
\end{align*}
Thus we obtain
\footnotesize
\begin{align*}
    KL &\left( \pr_{\ttheta}, \pr_{\flipi(\ttheta)} \right) \\
    &\le \E_{\ttheta} \left[ \sum_{t=1}^T \mu'\left([\tilde M]_1^\top  x_t /2 - \ln(K-1)\right) \left( ( [\tilde M]_1 - [\flipi(\tilde M)]_1 )^\top x_t \right)^2 \exp\left( 2\left| ([\tilde M]_1 - [\flipi(\tilde M)]_1)^\top  x_t \right| \right)  \right] \\
    &\le \exp(2\epsilon) \E_{\ttheta} \left[ \sum_{t=1}^T \mu'\left([\tilde M]_1^\top  x_t - \ln(K-1)\right) \left( ( [\tilde M]_1 - [\flipi(\tilde M)]_1 )^\top x_t \right)^2 \right] \\
    &\le \exp(2\epsilon) 4 \epsilon^2 \E_{\ttheta} \left[ \sum_{t=1}^T \mu'\left([\tilde M]_1^\top  x_t - \ln(K-1)\right) [x_t]_i^2 \right] \,.
\end{align*}
\normalsize
We add and subtract $x_*(\ttheta)$ and apply Young's Inequality:
\begin{align*}
    KL &\left( \pr_{\ttheta}, \pr_{\flipi(\ttheta)} \right) \\
    &\le \exp(2\epsilon) 4\epsilon^2 \E_{\ttheta} \left[ \sum_{t=1}^T \mu'\left([\tilde M]_1^\top  x_t - \ln(K-1)\right) [x_*(\ttheta)-x_t - x_*(\ttheta)]_i^2 \right]\\
    &\le 8 \exp(2\epsilon) \epsilon^2 \E_{\ttheta} \left[ \sum_{t=1}^T \mu'\left([\tilde M]_1^\top  x_t - \ln(K-1)\right) [x_*(\ttheta)-x_t]_i^2 + \sum_{t=1}^T \mu'\left([\tilde M]_1^\top  x_t - \ln(K-1)\right) [x_*(\ttheta)]_i^2 \right] \,.
\end{align*}
Thus by summing over \(d\) we obtain
\footnotesize
\begin{align}
    \sum_{i=2}^d &KL \left( \pr_{\ttheta}, \pr_{\flipi(\ttheta)} \right) \nonumber \\
    &\le 8\exp(2\epsilon) \epsilon^2 \E_{\ttheta} \left[ \sum_{t=1}^T \sum_{i=2}^d \mu'\left([\tilde M]_1^\top  x_t - \ln(K-1)\right) [x_t - x_*(\ttheta)]_i^2 + \sum_{t=1}^T \sum_{i=2}^d \mu' \left([\tilde M]_1^\top  x_t - \ln(K-1)\right) [x_*(\ttheta)]_i^2 \right] \nonumber \\
    &\le 8\exp(2\epsilon) \epsilon^2 \E_{\ttheta} \left[ \sum_{t=1}^T \sum_{i=1}^d \mu' \left([\tilde M]_1^\top  x_t - \ln(K-1)\right) [x_t - x_*(\ttheta)]_i^2 + \sum_{t=1}^T \sum_{i=2}^d \mu' \left([\tilde M]_1^\top  x_t - \ln(K-1)\right) [x_*(\ttheta)]_i^2 \right] \nonumber \\
    &= 8\exp(2\epsilon) \epsilon^2 \E_{\ttheta} \left[ \sum_{t=1}^T \mu' \left([\tilde M]_1^\top  x_t - \ln(K-1)\right) \lVert x_t - x_*(\ttheta)\rVert_2^2 + \sum_{t=1}^T \sum_{i=2}^d \mu' \left([\tilde M]_1^\top  x_t - \ln(K-1)\right) [x_*(\ttheta)]_i^2 \right] \nonumber \\
    &\le 8\exp(2\epsilon) \epsilon^2 \E_{\ttheta} \left[ \sum_{t=1}^T \mu' \left([\tilde M]_1^\top  x_t - \ln(K-1)\right) \lVert x_t - x_*(\ttheta)\rVert_2^2 + \tfrac{(d-1)\epsilon^2}{\lVert [M_0]_1 \rVert_2^2 +(d-1)\epsilon^2} \sum_{t=1}^T \mu' \left([\tilde M]_1^\top  x_t - \ln(K-1)\right) \right] \nonumber \\
    &\le 8\exp(2\epsilon) \epsilon^2 \E_{\ttheta} \left[ \sum_{t=1}^T \mu' \left([\tilde M]_1^\top  x_t - \ln(K-1)\right) \lVert x_t - x_*(\ttheta)\rVert_2^2 + \tfrac{K+3}{4}(d-1) \epsilon^2 \sum_{t=1}^T \mu' \left([\tilde M]_1^\top  x_t - \ln(K-1)\right) \right] \label{eq:inside proof lower bound 6}
\end{align}
\normalsize
%\textcolor{red}{bring $\lVert \rho \rVert$}

where for the second to last inequality we use the fact that \( \lVert[M_0]_1 \rVert_2 = 2/\sqrt{K+3} \). 

\medskip\noindent
\textbf{Step 4: Bounding the First Term of the Average Entropy.}\\
In this step, we upper-bound $\smash{\sumT \mu' \left([\tilde M]_1^\top  x_t - \ln(K-1)\right) \lVert x_t - x_*(\ttheta)\rVert_2^2 }$ using $\smash{\Reg_T(\ttheta)}$.
We follow the Step~1 up to Equation~\eqref{eq:proof lower bound eq 2} and get
% \begin{equation*}
%     \Reg_T(\ttheta) \ge \dfrac{R}{\sqrt{K+3}} \sumT \alpha\left( [\tilde M]_1\top x_*(\ttheta) - \ln(K-1), [\tilde M]_1^\top x_t - \ln(K-1) \right) \left( [\tilde M]_1^\top (x_*(\ttheta) - x_t) \right) \,.
% \end{equation*}
\begin{equation*}
    \Reg_T(\ttheta) \ge \dfrac{R}{\sqrt{K+3}} \sumT \int_0^1 \mu'\left( [\tilde M]_1^\top (v x_*(\ttheta) + (1-v) x_t) - \ln(K-1) \right) \mathrm{d}v \left( [\tilde M]_1^\top (x_*(\ttheta) - x_t) \right) \,.
\end{equation*}
We apply the self-concordance property and obtain
\begin{equation*}
    \Reg_T(\ttheta) \ge \dfrac{R}{\sqrt{K+3}} \dfrac{1}{1+ \llVert [\tilde M]_1^\top (x_*(\ttheta) - x_t) \rrVert_2} \sumT \mu' \left( [\tilde M]_1^\top x_t - \ln(K-1) \right) \left( [\tilde M]_1^\top (x_*(\ttheta) - x_t) \right) \,.
\end{equation*}
We now follow our previous computations between Equation~\eqref{eq:proof lower bound eq 2} and~\eqref{eq:proof lower bound eq 1} to get
\begin{equation*}
    \sumT \mu'\left( [\tilde M]_1^\top x_t -\ln(K-1) \right) \lVert x_t - x_*(\ttheta) \rVert_2^2 \le 6 \dfrac{\sqrt{K+3}}{R} \Reg_T(\ttheta) \,.
\end{equation*}

\medskip\noindent
\textbf{Step 5: Bounding the Second Term of the Average Entropy.}
In this step, we upper-bound $\smash{\sumT \mu' \left([\tilde M]_1^\top  x_t - \ln(K-1)\right)}$ using $\smash{\Reg_T(\ttheta)}$ and $\smash{\kappa_*(\ttheta)}$.
We apply a Taylor decomposition with integral remainder:
\begin{align*}
    \sum_{t=1}^T &\mu' \left([\tilde M]_1^\top  x_t - \ln(K-1)\right) \\
    &=\sum_{t=1}^T \mu' \left([\tilde M]_1^\top  x_*(\ttheta) - \ln(K-1)\right) \\
    &\qquad+ \int_0^1 \mu''\left( [\tilde M]_1^\top  x_*(\ttheta) + v[\tilde M]^\top_1(x_t-x_*(\ttheta)) -\ln(K-1) \right) \mathrm{d}v \left( [\tilde M]_1^\top (x_t - x_*(\ttheta)) \right) \\
    &\le \sum_{t=1}^T  \mu' \left([\tilde M]_1^\top  x_*(\ttheta) - \ln(K-1)\right) \\
    &\qquad + \left| \int_0^1 \mu''\left( [\tilde M]_1^\top  x_*(\ttheta) + v[\tilde M]^\top_1 (x_t-x_*(\ttheta)) -\ln(K-1) \right) \mathrm{d}v \right| \cdot \left| [\tilde M]_1^\top  (x_t - x_*(\ttheta)) \right| \,.
\end{align*}
Using the facts that for the sigmoid \( | \mu''| \le \mu' \) and \( [\tilde M]_1^\top  x_*(\ttheta) \ge [\tilde M]_1^\top  x_t \) we get
\begin{align*}
    \sum_{t=1}^T \mu' &\left([\tilde M]_1^\top  x_t - \ln(K-1)\right) \\
    &\le \sum_{t=1}^T \mu' \left([\tilde M]_1^\top  x_*(\ttheta) - \ln(K-1)\right) \\
    &\qquad+ \int_0^1 \mu'\left( [\tilde M]_1^\top (v x_*(\ttheta) + (1-v) x_t) - \ln(K-1) \right) \mathrm{d}v \cdot \left( [\tilde M]_1^\top (x_*(\ttheta) - x_t) \right) \,.
\end{align*}
The first term of the sum can be rewritten using Step~3:
\begin{equation*}
    \sumT \mu' \left([\tilde M]_1^\top  x_*(\ttheta) - \ln(K-1)\right) = \dfrac{T(K+3)}{\kappa_*(\ttheta)} \,.
\end{equation*}
The second term already appears in Equation~\eqref{eq:proof lower bound eq 2} and is therefore bounded by
\begin{equation*}
    \sumT \int_0^1 \mu'\left( [\tilde M]_1^\top (v x_*(\ttheta) + (1-v) x_t) - \ln(K-1) \right) \mathrm{d}v \cdot \left( [\tilde M]_1^\top (x_*(\ttheta) - x_t) \right) \le \dfrac{\sqrt{K+3}}{R} \Reg_T(\ttheta) \,.
\end{equation*}

\medskip \noindent
\textbf{Step 6: Putting Everything Together.} \quad 
We are now ready to carry out the final step of the proof.
We apply Steps~4 and~5 and substitute them in Equation~\eqref{eq:inside proof lower bound 6}
\small
\begin{equation*}
    \sum_{i=2}^d KL \left( \pr_{\ttheta}, \pr_{\flipi(\ttheta)} \right) \le 8 \exp(2\epsilon) \epsilon^2 \left[ 6 \tfrac{\sqrt{K+3}}{R} \Reg_T(\ttheta) + \tfrac{K+3}{4}(d-1) \epsilon^2 \left( \dfrac{T(K+3)}{\kappa_*(\ttheta)} + \tfrac{\sqrt{K+3}}{R} \Reg_T(\ttheta) \right) \right] \,.
\end{equation*}
\normalsize
Using our assumption on $\tilde \cM$, we can now upper-bound $\Reg_T(\ttheta)$ by $Rd\sqrt{KT/\kappa_*(\ttheta)}$. We obtain
\small
\begin{equation*}
    \sum_{i=2}^d KL \left( \pr_{\ttheta}, \pr_{\flipi(\ttheta)} \right) \le 8 \exp(2\epsilon) \epsilon^2 \left[ 6 (K+3)d \sqrt{\dfrac{T}{\kappa_*(\ttheta)}} + \tfrac{K+3}{4}(d-1) \epsilon^2 \left( \dfrac{T(K+3)}{\kappa_*(\ttheta)} + (K+3)d \sqrt{\dfrac{T}{\kappa_*(\ttheta)}} \right) \right] \,.
\end{equation*}
\normalsize
Thus by taking the average over $\ttheta \in \Xi$ we have:
\small
\begin{equation*}
    \dfrac{1}{|\Xi |} \sum_{\ttheta\in\Xi} \sum_{i=2}^d KL \left( \pr_{\ttheta}, \pr_{\flipi(\ttheta)} \right) \le 8 \exp(2\epsilon) \epsilon^2 \left[ 6 (K+3)d \sqrt{\dfrac{T}{\kappa_*(\ttheta)}} + \tfrac{K+3}{4}(d-1) \epsilon^2 \left( \dfrac{T(K+3)}{\kappa_*(\ttheta)} + (K+3)d \sqrt{\dfrac{T}{\kappa_*(\ttheta)}} \right) \right] \,.
\end{equation*}
\normalsize
Hence by substituting in Equation~\eqref{eq:inside proof lower bound 8} we get
\small
\begin{multline*}
    \dfrac{2}{|\Xi |} \sum_{\theta\in\Xi} \sum_{i=2}^d  \pr_{\ttheta}[A_i(M)] \\
    \ge \tfrac{d}{2} - \sqrt{\tfrac{d-1}{2}} \sqrt{ 8\exp(2\epsilon) \epsilon^2 \left[ 6 (K+3)d \sqrt{\dfrac{T}{\kappa_*(\ttheta)}} + \tfrac{K+3}{4}(d-1) \epsilon^2 \left( \dfrac{T(K+3)}{\kappa_*(\ttheta)} + (K+3)d \sqrt{\dfrac{T}{\kappa_*(\ttheta)}} \right) \right] } \,.
\end{multline*}
\normalsize
If this is true for the average over \(\Xi\), then there exists at least one \( \theta_* = \Pi M_* \in \Xi \) such that 
\small
\begin{multline*}
    \sum_{i=2}^d  \pr_{\theta_*}[A_i(M_*)] \\
\end{multline*}
\normalsize
We substitute in Equation~\eqref{eq:inside proof lower bound 7} and get 
\small
\begin{align*}
    &\E_{ \theta_*} [\Reg_T(\theta_*)] \\
    &\ge \dfrac{R(K+3)^{3/2} T \epsilon^2}{64 \kappa_*( \theta_*) } \left[ \tfrac{d}{2} - \sqrt{\tfrac{d-1}{2}} \left[ 8 \exp(2\epsilon) \epsilon^2 \left[ 6 (K+3)d \sqrt{\tfrac{T}{\kappa_*(\theta_*)}} + \tfrac{K+3}{4}(d-1) \epsilon^2 \left( \tfrac{T(K+3)}{\kappa_*(\theta_*)} + (K+3)d \sqrt{\tfrac{T}{\kappa_*(\theta_*)}} \right) \right] \right]^{1/2} \right] \\
    &\ge \dfrac{R(K+3)^{3/2} T \epsilon^2}{64 \kappa_*( \theta_*) } \left[ \tfrac{d}{2} - \tfrac{d}{2} \sqrt{8 \exp(2\epsilon) } \left[ 6 \epsilon^2 (K+3) \sqrt{\tfrac{T}{\kappa_*(\theta_*)}} + \epsilon^4 \tfrac{(K+3)^2}{4} \tfrac{T}{\kappa_*(\theta_*)} + \epsilon^4 \tfrac{(K+3)^2}{4} d \sqrt{\tfrac{T}{\kappa_*(\theta_*)}} \right]^{1/2} \right] \,.
\end{align*}
\normalsize
We choose \( \epsilon^2 = c (K+3)^{-1} \sqrt{\kappa_*( \theta_*)/T} \) with \(c=0.01\) and get
\small
\begin{equation*}
    \E_{ \theta_*} [\Reg_T(\theta_*)] \ge \dfrac{cRd\sqrt{(K+3)T}}{64  \sqrt{\kappa_*( \theta_*)}} \left[ 1 - \sqrt{8 \exp(2\sqrt{c} (K+3)^{-1/2} [\kappa_*( \theta_*)/T]^{1/4} ) } \left[ 6c + \dfrac{1}{4} c^2 + \dfrac{1}{4} c^2 d \sqrt{\dfrac{T}{\kappa_*(\theta_*)}} \right]^{1/2} \right] \,.
\end{equation*}
\normalsize
When \(T\ge d^2 \kappa_*( \theta_*)\) we have that 
% \small
\begin{equation*}
    \E_{ \theta_*} [\Reg_T(\theta_*)] \ge \dfrac{cRd\sqrt{(K+3)T}}{64  \sqrt{\kappa_*( \theta_*)}} \left[ 1 - \sqrt{ 8 \exp(\sqrt{c}) } \left[ 6c + \dfrac{1}{4} c^2 + \dfrac{1}{4} c^2 \right]^{1/2} \right] \ge \dfrac{Rd\sqrt{(K+3)T}}{25000  \sqrt{\kappa_*( \theta_*)}} \,.
\end{equation*}
% \normalsize

\end{proof}

\section{Removing the Exploration}\label{appendix:removing the exploration}

In this section we introduce a variant of our algorithm with an adaptive exploration and prove its regret bound.

\begin{algorithm}[!ht]
\caption{Using an adaptive exploration}
\label{algo:adaptive exploration}
\KwIn{regularisation parameters \(\lambda^w, \lambda\), learning rate $\eta^w$}
{\bfseries{Init:} \( H_0^w = \lambda^w I_{Kd}, H_1 = \lambda I_{Kd} \)} \\
\For{each time step \(t\) in \(1 \dots T\)}{
    Get action set $\cX_t \subseteq \cX$ \\
    \uIf{$\max_{x\in\cX_t} \lVert I_K \otimes x \rVert^2_{(H_{t-1}^w)^{-1}} \ge 1/\tau_t^2 $}{
        Play $ x_t = \argmax_{x\in\cX_t} \lVert I_K \otimes x \rVert^2_{(H_{t-1}^w)^{-1}} $ \\
        Observe $y_t \sim \mu(\theta_* x_t)$ \\
        Get reward $\rho_{y_t}$ \\
        $ \tilde H_{t}^w \gets H_{t-1} + \tfrac{\eta^w}{\kappa} I_K \otimes x_t x_t^\top $ \\
        $\theta_{t+1}^w \gets \argmin_{\theta \in \R^{K\times d}} \langle \nabla \mu(\theta_t^w) ,  \theta \rangle + \tfrac{1}{2\eta^w} \lVert \theta_t^w - \theta \rVert^2_{\tilde H_t^w} $  \\
        $H_t^w \gets H_{t-1}^w + \tfrac{1}{\kappa} I_K \otimes x_t x_t^\top $ \\
        $ \cW_{t+1}(\delta) \gets \{ \theta \in \R^{K\times d} : \lVert \theta - \theta_{t+1}^w \rVert_{H_t^w} \le \beta_{t+1}(\delta) \} $
    }
    \Else{
        Play $ x_t = \argmax_{x\in\cX_t} \tilde r_t(x) $ with \(\tilde r_t(x) \) defined in Eq.~\eqref{eq:optimistic reward} \\
        $\tilde H_{t+1} \gets H_t + \nabla\mu(\theta_t x_t) \otimes x_t x_t^\top $ \\
        $\theta_{t+1} \gets \argmin_{\theta \in \cW_t(\delta)} \langle \nabla\ell_{t+1}(\theta_t), \theta \rangle + \lVert \theta - \theta_t \rVert^2_{\tilde H_{t+1}} $ \\
        $H_{t+1} \gets H_t + \nabla\mu(\theta_{t+1} x_t) \otimes x_t x_t^\top $ \\
        $ \bar H_{t+1} \gets H_{t+1} + 1_K1_K^\top \otimes x_t x_t^\top $ \\
        $ \cW_{t+1}(\delta) \gets \cW_t(\delta) $ \\
        $ \theta_{t+1}^w \gets \theta_t^w $ \\
        $ H_{t}^w \gets H_{t-1}^w $
    }
}
\end{algorithm}

\subsection{Proof of Theorem~\ref{thm:adaptive regret bound}}\label{appendix:proof adaptive regret bound}

In this section we prove Theorem~\ref{thm:adaptive regret bound}, the regret upper-bound of Algorithm~\ref{algo:adaptive exploration}. We start by studying the exploration part of the algorithm.

\subsubsection{Analysis of the Adaptive Exploration}

We start by showing that at each iteration of the algorithm, the set $\cW_t(\delta)$ is a confidence set.
\begin{restatable}{lmm}{}\label{lemma:adaptive exploration confidence set}
Let $\delta \in (0,1], \eta^w = (1+\sqrt{6}S)/2$ and $ \lambda^w = 144 \eta^w Kd $.
Let us define $\beta_t(\delta) = 4 S \sqrt{Kd\log(t/\delta)} + 2 S \sqrt{\lambda^w}$. Then we have with probability
$1-\delta$, for all $t\ge 1$, 
\begin{equation*}
    \theta_* \in \cW_{t}(\delta) \,.
\end{equation*}
\end{restatable}

\begin{proof}
Let $t\in \llbracket T \rrbracket$.
For all $x\in \cX_t$, using Cauchy-Schwarz inequality we have
\begin{equation*}
    \max_{\theta\in\cW} \lVert ( \theta - \theta_*) x \rVert_2 \le \max_{\theta\in\cW} \lVert \theta - \theta_* \rVert_2 \lVert x \rVert_2 \le 2 S \,.
\end{equation*}
We apply \citep[Theorem~4.2]{lee2025improved} with $\alpha=2S$ and get $ \beta_t(\delta) = 4 S \sqrt{Kd\log(t/\delta)} + 2 S \sqrt{\lambda^w} $.
\end{proof}

We now upper-bound the number of exploration steps to show it is negligible in the regret.

\begin{restatable}{lmm}{}\label{lemma:bound number exploration steps}
    Let $T^w$ the set of exploration steps. We have
    \begin{equation*}
        |T^w| \le 2 \tau^2_t \kappa Kd \log\left( 1+ \dfrac{T}{Kd\lambda^w} \right) \,.
    \end{equation*}
\end{restatable}

\begin{proof}
We start with Trace-Determinant argument to upper-bound the following sum:
\begin{align*}
    \sumTw &\max_{x\in \cX_t} \lVert I_K \otimes x \rVert^2_{(H_{t-1}^w)^{-1}} \\
    &= \sumTw \lVert I_K \otimes x_t \rVert^2_{(H_{t-1}^w)^{-1}} \\
    &= \kappa \sumTw \dfrac{1}{\kappa} \lVert I_K \otimes x_t \rVert^2_{(H_{t-1}^w)^{-1}} \\
    &= \kappa \sumTw \lVert \kappa^{-1/2} I_K \otimes x_t \rVert^2_{(H_{t-1}^w)^{-1}} \\
    &\le 2\kappa Kd \log\left(1 + \dfrac{T}{Kd \lambda^w} \right) \,. &&\text{\citep[Lemma~10]{abbasi2011improved}}
\end{align*}
We now lower-bound this sum using the exploration rule:
\begin{equation*}
    \sumTw \max_{x\in \cX_t} \lVert I_K \otimes x \rVert^2_{(H_{t-1}^w)^{-1}} \ge \sumTw \dfrac{1}{\tau_t^2} = |T^w| \dfrac{1}{\tau_t^2} \,.
\end{equation*}
Therefore we have
\begin{equation*}
    |T^w| \le 2 \tau^2_t \kappa Kd \log\left( 1+ \dfrac{T}{Kd\lambda^w} \right) \,.
\end{equation*}
\end{proof}

Finally we bound the diameters of the confidence sets $\cW_t(\delta)$. It will allow us to leverage the self-concordance property for a constant cost.

\begin{restatable}{lmm}{}\label{lemma:adaptive constant diameter}
    Let us define $ \tau_t = 2\sqrt{6}\beta_t(\delta) $. Let $\delta \in (0,1]$, with probability $1-\delta$, for all $t\ge 1$ we have
    \begin{equation*}
        \max_{x\in\cX_t} \max_{\theta_1, \theta_2\in\cW_t(\delta)} \lVert (\theta_1-\theta_2) x \rVert_2 \le \dfrac{1}{\sqrt{6}} \,.
    \end{equation*}
\end{restatable}

\begin{proof}
Let $t\in\llbracket T \rrbracket$. For all $x\in\cX_t$, using the Cauchy-Schwarz inequality and the Triangle inequality we have
\begin{align*}
    \max_{\theta_1, \theta_2\in\cW_t(\delta)} &\lVert  (\theta_1-\theta_2) x \rVert_2 \\
    &= \max_{\theta_1, \theta_2\in\cW_t(\delta)} \lVert (I_K \otimes x) (\theta_1-\theta_2) \rVert_2 \\
    &\le \lVert I_K \otimes x \rVert_{(H_{t-1}^w)^{-1}} \max_{\theta_1, \theta_2\in\cW_t(\delta)} \lVert \theta_1 - \theta_2 \rVert_{H_{t-1}^w} &&\text{(CS)} \\
    &\le \max_{x\in\cX_t}  \lVert I_K \otimes x \rVert_{(H_{t-1}^w)^{-1}} \left(\max_{\theta_1, \theta_2\in\cW_t(\delta)} \lVert \theta_1 - \theta^w_t \rVert_{H_{t-1}^w} + \lVert \theta^w_t - \theta_2 \rVert_{H_{t-1}^w} \right) \\
    &\le \dfrac{1}{\tau_t} 2 \beta_t(\delta) &&\text{(Lemma~\ref{lemma:adaptive exploration confidence set})} \\
    &= \dfrac{1}{\sqrt{6}} 
\end{align*}
where the last equality is by definition of $\tau_t$.
\end{proof}

\subsubsection{Regret Upper-bound}

We now focus on the learning part of the algorithm. We start by showing that at each iteration of the algorithm $\sigma_t(\delta)$ defines a confidence set.

\begin{restatable}{lmm}{}\label{lemma:adaptive learning confidence set}
    Let $\delta \in (0,1], \eta = 1$ and $ \lambda = 144Kd $. Let us define $\beta_t(\delta) = 4 S \sqrt{Kd\log(t/\delta)} + 2 S \sqrt{\lambda^w}, \sigma_t(\delta) = 2 \sqrt{Kd \log(t/\delta)} + 24 S \sqrt{Kd} $ and $\tau_t = 2 \sqrt{6} \beta_t(\delta)$. Then we have with probability $1-2\delta$, for all $t\ge 1$, 
\begin{equation*}
    \lVert \theta_* - \theta_{t+1} \rVert_{\bar H_{t+1}} \le \sigma_{t}(\delta) \,.
\end{equation*}
\end{restatable}

\begin{proof}
% \textbf{Step 1: Confidence Set with $H_{t+1}$.}
First, by Lemma~\ref{lemma:constant diameter}, we can apply \citep[Theorem~4.2]{lee2025improved} with $\alpha=1/\sqrt{6}$ to obtain
\begin{equation*}
    \lVert \theta_* - \theta_{t+1} \rVert_{H_{t+1}} \le \sigma_{t}(\delta) \,.
\end{equation*}
Then, we decompose \(\R^K\) as \( \R^K = 1_K \oplus \cH \) where \(\cH\) is the hyperplane supported by \(1_K\). 
Recall that \(\theta_\ast, \theta_{t+1} \in \Pi \R^{K\times d} \), for all \(x\in\cX\), 
by definition of \(\Pi\), \(\theta_{t+1} x\) and \( \theta_\ast x \) are in \(\cH\). 
Therefore \(  \sum_{s=1}^t \lVert (\theta_{t+1} - \theta_\ast)x_s \rVert_{1_K 1_K^\top} = 0 \). 
And we conclude that with probability \(1-2\delta\)
\begin{equation*}
    \lVert \theta_{t+1} - \theta_\ast \rVert_{\bar H_{t+1}} = \lVert \theta_{t+1} - \theta_\ast \rVert_{H_{t+1}} \le \sigma_t(\delta) \,.
\end{equation*}

\end{proof}

We can now recall and prove our regret upper-bound for Algorithm~\ref{algo:adaptive exploration}.

\ThmAdaptiveRegretBound*

\begin{proof}
\nopagebreak

\medskip\noindent 
\textbf{Step 1: Tackling the exploration part.} \quad 
We separate the regret from the exploration and the regret from the learning part:
\begin{align*}
    \Reg_T &:= \sumT \rho^\top \mu(\theta_* x_{*,t}) - \rho^\top \mu(\theta_* x_t) \\
    &= \sumTw \rho^\top \mu(\theta_* x_{*,t}) - \rho^\top \mu(\theta_* x_t) + \sum_{t\notin T^w} \rho^\top \mu(\theta_* x_{*,t}) - \rho^\top \mu(\theta_* x_t) \\
    &\le R |T^w| + \sum_{t\notin T^w} \rho^\top \mu(\theta_* x_{*,t}) - \rho^\top \mu(\theta_* x_t) \\
    &\le 2 R \tau^2_t \kappa Kd \log\left( 1+ \dfrac{T}{Kd\lambda^w} \right) + \sum_{t\notin T^w} \rho^\top \mu(\theta_* x_{*,t}) - \rho^\top \mu(\theta_* x_t) 
\end{align*}
where the last inequality is due to Lemma~\ref{lemma:bound number exploration steps}.
Let us now focus on the learning phase of the algorithm.

\medskip \noindent
\textbf{Step 2: Using optimism.} \quad 
Using the definition of the optimistic reward we can bound the regret twice
\begin{align*}
    \Reg_T(\mathrm{Learning}) &:= \sumnTw \rho^\top (\mu(\theta_*x_{*,t}) - \mu(\theta_* x_t)) \\
    &\le \sumnTw \rho^\top \mu(\theta'_t x_{*,t}) + \epsilon_{1,t}(x_{*,t}) + \epsilon_{2,t}(x_{*,t}) - \rho^\top \mu(\theta_* x_t) && \text{(Prop.~\ref{prop:optimistic reward})} \\
    &\le \sumnTw \rho^\top \mu(\theta'_t x_t) + \epsilon_{1,t}(x_t) + \epsilon_{2,t}(x_t) - \rho^\top \mu(\theta_* x_t) &&\text{(Def. of \(x_t\))} \\
    &\le 2 \sumnTw \epsilon_{1,t}(x_t) + 2 \sumnTw \epsilon_{2,t}(x_t) &&\text{(Prop.~\ref{prop:optimistic reward})} \,.
\end{align*}

\medskip \noindent
\textbf{Step 3: Concluding.}
We may now follow our proof of Theorem~\ref{thm:regret bound} to obtain with probability $ 1-2\delta $
\begin{align*}
    \Reg_T(\text{Learning}) &\le \dfrac{24 \sigma_T(\delta) \kappa K d R }{\lambda} \log\left( 1 + \dfrac{T}{K \lambda} \right) 
    + 12 \kappa \sigma_T(\delta)^2 Kd \log\left( 1+ \dfrac{T}{Kd\lambda\kappa} \right) \\
    &\qquad + 4\sqrt{2} e \sigma_T(\delta) \sqrt{d \log\left( 1 + \dfrac{T}{\lambda d}  \right)} R \sqrt{\sumnTw \dfrac{1}{\kappa_{\ast, t}} } \\
    &\qquad + 16 e^2 \sigma_T(\delta)^2 d \log\left( 1 + \dfrac{T}{\lambda d}  \right) (4R+2\nu)  \\
    &\lesssim \cste \kappa (R+\nu) K^2 d^2 + \cste  d R \sqrt{ \sumnTw \dfrac{K}{\kappa_{\ast, t}} }  \,.
\end{align*}
\end{proof}

\section{Auxiliary Results}\label{appendix:auxiliary results}
\begin{restatable}{lmm}{LmmAux1}[\citet[Section~4.2.3]{boyd2004convex}]\label{lemma:minimiser to gradient positive}
    Let \(f:\R^d\to\R\) be a convex and differentiable function and \(\mathcal{C} \subseteq \R^d\) a convex set. Further, denote:
    \begin{equation*}
        x_0 := \argmin_{x_0\in\mathcal{C}} f(x) \,.
    \end{equation*}
    Then for any \(y\in\mathcal{C}\):
    \begin{equation*}
        \nabla f(x_0)^\top  (y-x_0) \ge 0 \,.
    \end{equation*}
\end{restatable}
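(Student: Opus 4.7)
The approach is standard: this is the first-order optimality condition for constrained convex minimization, and only convexity of $\mathcal{C}$ together with differentiability of $f$ at $x_0$ are needed (convexity of $f$ is not actually used in this direction of the equivalence). The plan is to build a feasible direction from $x_0$ toward an arbitrary $y \in \mathcal{C}$, exploit the minimality of $x_0$ along that direction, and pass to the limit through the first-order Taylor expansion of $f$.

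Concretely, I would fix $y \in \mathcal{C}$ and, for $t \in [0,1]$, set $x_t := x_0 + t(y - x_0)$. By convexity of $\mathcal{C}$, $x_t \in \mathcal{C}$ for every such $t$, so the minimality of $x_0$ gives $f(x_t) \geq f(x_0)$, i.e.
\[
\frac{f(x_t) - f(x_0)}{t} \geq 0 \quad \text{for all } t \in (0,1].
\]
By differentiability of $f$ at $x_0$, the left-hand side converges to the directional derivative $\nabla f(x_0)^\top (y - x_0)$ as $t \to 0^+$, and letting $t \to 0^+$ yields the stated inequality.

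There is no real obstacle here, as the statement is a textbook result (see the cited reference). An equivalent argument by contradiction would assume $\nabla f(x_0)^\top (y - x_0) < 0$ for some $y \in \mathcal{C}$ and use the first-order expansion $f(x_t) = f(x_0) + t\,\nabla f(x_0)^\top (y-x_0) + o(t)$ to produce a feasible point $x_t \in \mathcal{C}$ with $f(x_t) < f(x_0)$ for small enough $t > 0$, contradicting the optimality of $x_0$. Either formulation fits in a few lines and requires no auxiliary machinery from the paper.
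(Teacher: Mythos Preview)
Your argument is correct and is precisely the standard proof of the first-order optimality condition from \citet[Section~4.2.3]{boyd2004convex}. The paper itself does not give a proof of this lemma---it merely states it as an auxiliary result with the textbook citation---so there is nothing further to compare.
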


\begin{restatable}{lmm}{LmmAux2}[Modified Freedman's Inequality,  \citet[Lemma~3]{lee2024improved}]\label{lemma:modified Freedman inequality}
    Let \( X_1, \dots, X_t \) be a martingale difference sequence satisfying \(\max_s |X_s| \le D \) a.s., and let \(\F_s \) be the \(\sigma\)-field generated by \( (X_1, \dots, X_s) \). Then for any \(\delta\in(0, 1] \) and any \( \eta \in [0, 1/D]  \) the following holds with probability \(1-\delta\)
    \begin{equation*}
        \sum_{s=1}^t X_s \le (e-2) \eta \sum_{s=1}^t \E[X_s^2 | \F_{s-1}] + \dfrac{1}{\eta} \log \delta^{-1} \qquad \forall t \ge 1 \,.
    \end{equation*}
\end{restatable}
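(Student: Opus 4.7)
The plan is to follow the classical exponential-supermartingale (Chernoff) argument that underlies Freedman-type inequalities, with the particular constant $e-2$ arising from a tight quadratic envelope of the exponential function at $x = 1$.

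The analytic input I would use is the elementary inequality $e^x \le 1 + x + (e-2)x^2$ valid for all $x \le 1$ (with equality at $x = 0$ and $x = 1$). Since $|X_s| \le D$ almost surely and $\eta \in [0, 1/D]$, the product $\eta X_s$ lies in $[-1, 1]$, so the envelope applies pointwise. Taking conditional expectations with respect to $\F_{s-1}$, using that $(X_s)$ is a martingale difference sequence so the linear term vanishes, and then applying $1 + u \le e^u$, I would obtain the one-step bound
\[
    \E\!\left[ e^{\eta X_s} \,\big|\, \F_{s-1} \right] \,\le\, 1 + (e-2)\,\eta^2\, \E[X_s^2 \mid \F_{s-1}] \,\le\, \exp\!\left( (e-2)\,\eta^2\, \E[X_s^2 \mid \F_{s-1}] \right).
\]

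Next I would introduce the exponential process
\[
    M_t \,:=\, \exp\!\left( \eta \sum_{s=1}^t X_s \,-\, (e-2)\,\eta^2 \sum_{s=1}^t \E[X_s^2 \mid \F_{s-1}] \right),
\]
with $M_0 = 1$. The one-step inequality above gives $\E[M_t \mid \F_{t-1}] \le M_{t-1}$, so $(M_t)_{t \ge 0}$ is a non-negative supermartingale adapted to $(\F_t)$. Ville's maximal inequality then yields $\pr(\sup_{t \ge 1} M_t \ge 1/\delta) \le \delta$. On the complementary event, taking logarithms and dividing by $\eta > 0$ produces exactly the claimed bound, simultaneously for all $t \ge 1$.

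There is no serious obstacle, as this is the textbook Freedman proof. The only subtle point is the choice of the quadratic envelope: the constant $e-2$ is tight at $x = 1$, which is precisely why the hypothesis $\eta \le 1/D$ appears in the statement, ensuring $|\eta X_s| \le 1$ almost surely. Time-uniformity in $t$ comes for free from Ville's inequality applied to the supermartingale $(M_t)$, so no union bound over $t$ is needed, and the statement holds for every fixed $\eta \in [0, 1/D]$ without additional work.
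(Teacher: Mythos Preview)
Your proof is correct and is the standard exponential-supermartingale argument for Freedman-type inequalities. The paper does not actually prove this lemma: it is stated in the appendix of auxiliary results and attributed to \citet[Lemma~3]{lee2024improved} without proof, so there is nothing to compare against beyond noting that your argument is precisely the one underlying that cited result.
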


\begin{restatable}{lmm}{LmmDetTraceInequality}[Determinant-Trace Inequality,  \citet[Lemma~10]{abbasi2011improved}]\label{lemma:det_trace inequality}
    Let \( \{x_s\}_{s=1}^{\infty} \) be a sequence in \(\R^d\) such that \(\lVert x_s \rVert_2 \le X \) for all \(s\ge1\), and let \(\lambda\ge 0\). For \( t\ge1 \) define \( V_t := \sum_{s=1}^t x_s x_s^\top  + \lambda I_d \). The following inequality holds:
    \begin{equation*}
        \det(V_t) \le (\lambda + t X^2 / d )^d \,.
    \end{equation*}
\end{restatable}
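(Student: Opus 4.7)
The plan is to reduce the determinant bound to an eigenvalue--trace inequality via AM--GM, and then evaluate the trace directly. Since $V_t = \sum_{s=1}^t x_s x_s^\top + \lambda I_d$ is symmetric positive semi-definite, its $d$ eigenvalues $\lambda_1, \dots, \lambda_d$ are all nonnegative, and the determinant is exactly their product. Applying the arithmetic--geometric mean inequality to these nonnegative reals gives $\prod_{i=1}^d \lambda_i \le \big(\frac{1}{d}\sum_{i=1}^d \lambda_i\big)^d = (\Tr(V_t)/d)^d$, which reduces the problem to upper bounding the trace of $V_t$.

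For the trace, I would use linearity together with the identity $\Tr(x_s x_s^\top) = \lVert x_s \rVert_2^2 \le X^2$ and $\Tr(\lambda I_d) = d\lambda$, obtaining $\Tr(V_t) \le tX^2 + d\lambda$. Substituting this into the previous display immediately yields $\det(V_t) \le \big((tX^2 + d\lambda)/d\big)^d = (\lambda + tX^2/d)^d$, which is exactly the claimed bound.

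I do not expect any real obstacle here: the entire argument is AM--GM on the spectrum of $V_t$ together with a one-line trace computation. The result is classical, originally appearing as Lemma~10 of \citet{abbasi2011improved}, and it does not rely on any adaptivity or additional structure of the sequence $(x_s)$ beyond the uniform norm bound $\lVert x_s \rVert_2 \le X$; in particular, the $x_s$ may be chosen by an algorithm depending on past observations, as happens in the applications of this lemma earlier in the paper.
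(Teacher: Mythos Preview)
Your proof is correct and is precisely the standard AM--GM-on-the-spectrum argument used in the original reference \citep[Lemma~10]{abbasi2011improved}. The paper itself does not give a proof of this lemma---it is stated as an auxiliary result imported from the literature---so there is nothing further to compare.
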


\end{document}